\documentclass{article}
\usepackage{microtype}
\usepackage{booktabs}
\usepackage[nohyperref,preprint]{icml2026}

\usepackage{algorithm}
\usepackage{algorithmicx}
\usepackage[noend]{algpseudocode}
\usepackage{amsmath}
\usepackage{amssymb}
\usepackage{amsthm}
\usepackage{bbm}
\usepackage{bm}
\usepackage{caption}
\usepackage{color}
\usepackage{colortbl}
\usepackage{dirtytalk}
\usepackage{dsfont}
\usepackage{enumerate}
\usepackage{graphicx}
\usepackage{listings}
\usepackage{mathtools}
\usepackage{natbib}
\usepackage{soul}
\usepackage{subcaption}
\usepackage{tcolorbox}
\usepackage{times}
\usepackage{wrapfig}
\usepackage{xspace}
\usepackage{pgfplots}
\usepackage{pgfplotstable}
\usepackage{tikz}

\usepackage[dvipsnames]{xcolor}
\usepackage[bookmarks=false]{hyperref}
\hypersetup{
  linktocpage=true,
  colorlinks=true,
  citecolor=Green,
  linkcolor=Green,
  urlcolor=Green
}
\usepackage[capitalize,noabbrev]{cleveref}

\definecolor{model1}{RGB}{0,0,0}       % Black for Baseline
\definecolor{model2}{RGB}{255,0,0}     % Red for Total Reward  
\definecolor{model3}{RGB}{0,0,255}     % Blue for Per-Step
\definecolor{model4}{RGB}{34,139,34}   % Green for K=2
\definecolor{model5}{RGB}{255,165,0}   % Orange for K=3
\definecolor{model6}{RGB}{128,0,128}   % Purple for K=4
\definecolor{model7}{RGB}{165,42,42}   % Brown for K=5

\usepackage[textsize=tiny]{todonotes}
\usepackage{thmtools}
\declaretheorem[name=Theorem,refname={Theorem,Theorems},Refname={Theorem,Theorems}]{theorem}
\declaretheorem[name=Lemma,refname={Lemma,Lemmas},Refname={Lemma,Lemmas},sibling=theorem]{lemma}

\newcommand{\cB}{\mathcal{B}}
\newcommand{\cC}{\mathcal{C}}

\newcommand{\cG}{\mathcal{G}}

\newcommand{\cR}{\mathcal{R}}
\newcommand{\cS}{\mathcal{S}}

\newcommand{\cZ}{\mathcal{Z}}

\newcommand{\naturalset}{\mathbb{N}}

\newcommand{\E}[1]{\mathbb{E}\left[#1\right]}
\newcommand{\condE}[2]{\mathbb{E}\left[#1 \,\middle|\, #2\right]}
\newcommand{\Erv}[2]{\mathbb{E}_{#1}\left[#2\right]}
\newcommand{\condErv}[3]{\mathbb{E}_{#1}\left[#2 \,\middle|\, #3\right]}

\newcommand{\Erho}[2]{\mathbb{E}_{#1}^\rho\left[#2\right]}
\newcommand{\prob}[1]{\mathbb{P}\left(#1\right)}

\newcommand{\abs}[1]{\left|#1\right|}

\newcommand{\I}[1]{\mathds{1} \! \left\{#1\right\}}

\newcommand{\normw}[2]{\|#1\|_{#2}}

\newcommand{\set}[1]{\left\{#1\right\}}

\DeclareMathOperator*{\argmax}{arg\,max\,}

\mathchardef\mhyphen="2D

\newcommand{\base}{\ensuremath{\color{Green}\tt Base}\xspace}
\newcommand{\ppg}{\ensuremath{\color{Green}\tt PPG}\xspace}
\newcommand{\ppgk}[1]{\ensuremath{\color{Green}\tt #1\mhyphen Step\mhyphen PG}\xspace}
\newcommand{\ppgone}{\ensuremath{\color{Green}\tt GreedyPG}\xspace}
\newcommand{\ppgseg}{\ensuremath{\color{Green}\tt K\mhyphen Seg\mhyphen PG}\xspace}
\newcommand{\pg}{\ensuremath{\color{Green}\tt PG}\xspace}

\icmltitlerunning{Partial Policy Gradients for RL in LLMs}

\begin{document}

\twocolumn[
\icmltitle{Partial Policy Gradients for RL in LLMs}

\icmlsetsymbol{equal}{*}

\begin{icmlauthorlist}
\icmlauthor{Puneet Mathur}{adobe}
\icmlauthor{Branislav Kveton}{adobe}
\icmlauthor{Subhojyoti Mukherjee}{adobe}
\icmlauthor{Viet Dac Lai}{adobe}
\end{icmlauthorlist}

\icmlaffiliation{adobe}{Adobe Research}

\icmlcorrespondingauthor{Puneet Mathur}{puneetm@adobe.com}

\vskip 0.3in
]

\printAffiliationsAndNotice{}

\begin{abstract}
Reinforcement learning is a framework for learning to act sequentially in an unknown environment. We propose a natural approach for modeling policy structure in policy gradients. The key idea is to optimize for a subset of future rewards: smaller subsets represent simpler policies, which can be learned more reliably because their empirical gradient estimates are more accurate. Our approach allows for modeling and comparison of different policy classes, including full planning, greedy, $K$-step lookahead, and segment policies. We evaluate the policies empirically on multiple persona-alignment conversational problems. Different policies excel in different problems, reflecting their different characteristics and highlighting the importance of our studied policy class.
\end{abstract}

\section{Introduction}
\label{sec:introduction}

\emph{Reinforcement learning (RL)} \citep{sutton98reinforcement} is a framework for learning to act sequentially in an unknown environment with the goal of maximizing future rewards. Due to its generality and broad applicability, RL has been studied extensively and many RL algorithms have been proposed, including temporal-difference learning \citep{sutton88learning}, Q-learning \citep{watkins92qlearning}, policy gradients \citep{williams92simple}, and actor-critic methods \citep{sutton00policy}. RL is more data demanding than supervised learning because it plans for the future \citep{sutton98reinforcement}. Therefore, many early works on planning and RL tried to increase the statistical efficiency of learning algorithms by assuming additional structure, such as a known model of the environment \citep{bellman57dynamic,boutilier95exploiting,koller99computing}, linear value or Q functions \citep{tsitsiklis97analysis,defarias03linear,hauskrecht04linear}, and baseline subtraction in policy gradients \citep{sutton00policy,baxter01infinitehorizon,munos06geometric}.

\emph{Proximal policy optimization (PPO)} \citep{schulman17proximal} and \emph{group-relative policy optimization (GRPO)} \citep{shao24deepseekmath} are arguably the most popular RL algorithms for \emph{large language models (LLMs)} \citep{ouyang22training}. PPO is a policy gradient algorithm where the advantage is estimated using a per-token reward model. When the policy is learned from off-policy data, the advantage is corrected by clipped propensity scores \citep{horwitz52generalization,ionides08truncated}. The challenge with applying PPO in practice is that a per-token reward model has to be learned. GRPO is a policy gradient algorithm where the advantage is estimated by simulation, which eliminates the need for a per-token reward model. The challenge with applying GRPO is that the advantage is attributed equally to all tokens, potentially reducing statistical efficiency.

We propose a natural approach for introducing policy structure in policy gradients. The first step is decomposing the total reward additively over time. This decomposition may be a part of problem definition, for instance when the problem is a \emph{Markov decision process (MDP)} with per-step rewards \citep{bellman57dynamic,puterman94markov}, or obtained by a simple algebraic transformation. The second step is optimization for subsets of future rewards. The subsets lead to simpler policies than full planning, which can be learned more reliably because their empirical gradient estimates are more accurate. We propose both online and offline algorithms for learning our policies, and analyze them. Full planning, greedy, and $K$-step lookahead policies are all instances of our framework.

LLMs are increasingly used to simulate human personas for social role-play \cite{personasurvey,jandaghi2024faithful}. Despite that, off-the-shelf LLMs frequently drift from their assigned personas, contradict to their earlier statements, or abandon role-appropriate behavior \citep{ji-etal-2025-enhancing,abdulhai2025consistently}. We apply our policies to LLMs to maintain consistent and faithful human personas in role-playing dialogues. This problem is challenging because many dialogues have long horizons, on average $20$ steps and up to $60$. Our results show that $K$-step lookahead policies lead to consistent personas in extended dialogues, mitigating the persona drift that plagues base models.

We make the following contributions:
\begin{enumerate}
  \item We propose a framework for modeling policy structure in policy gradients. The key idea is to optimize for a subset of future rewards, which can be either observed or obtained by a simple algebraic transformation. Our formulation is more general than those in the closest related works, which focused only on specific policies \citep{kveton25adaptive,guo25segment}, sometimes even without observations.
  \item Our framework trades off the complexity of learned policies for the statistical efficiency of learning them. Specifically, optimization for smaller subsets of future rewards leads to learning simpler policies, which can be learned more reliably because their empirical gradient estimates are more accurate. We propose both online and offline algorithms for learning the policies.
  \item We instantiate the framework: online policy gradient \citep{williams92simple}, policy gradient for adaptive submodular policies \citep{kveton25adaptive} and segment-level credit assignment \citet{guo25segment}, can be all viewed as instances of our framework. This is the first work that proposes and empirically evaluates $K$-step lookahead policies in LLMs.
  \item We empirically evaluate our policies on four persona-alignment conversational problems across multiple domains: education, therapy, chatting, and generic. We experiment with three LLMs: Qwen \citep{yang2025qwen3}, Llama \citep{grattafiori2024llama}, and Gemma \cite{team2025gemma}. We show that $K$-step lookahead policies lead to consistent persona across extended dialogues (\cref{sec:performance_results}) by preventing persona drift (\cref{sec:stability_results}). The optimal value of $K$ depends on the domain camplexity and the amount of training data (\cref{sec:stat_efficiency_results}).
\end{enumerate}

The paper is organized as follows. We present our setting in \cref{sec:setting}. In \cref{sec:algorithms}, we present our framework for optimizing a subset of future rewards, interpret it, and analyze it. We instantiate the framework in \cref{sec:ppg instances} and evaluate it empirically in \cref{sec:experiments}. Related works are reviewed in \cref{sec:related work}, and we conclude in \cref{sec:conclusions}.

\section{Setting}
\label{sec:setting}

We start with introducing our notation. The marginal and conditional probabilities under the probability measure $p$ are $p(X = x)$ and $p(X = x \mid Y = y)$, respectively; and we write $p(x)$ and $p(x \mid y)$ when the random variables are clear from context. The indicator function is $\I{\cdot}$. For a positive integer $n$, we define $[n] = \set{1, \dots, n}$. We denote the $i$-th entry of vector $v$ by $v_i$. If the vector is already indexed, such as $v_j$, we write $v_{j, i}$.

We consider a general reinforcement learning problem \citep{sutton98reinforcement} where an \emph{agent} interacts with an \emph{environment} in an \emph{episode} consisting of multiple steps. The agent takes actions conditioned on the interaction history and the environment responds. When the episode ends, it is assigned a reward. The reward measures the quality of the interaction and the agent maximizes it.

We formalize the problem as follows. The agent first observes \emph{context} $x \in \cS$, where $\cS$ is the space of all token sequences. The context defines the task. The interaction between the agent and environment consists of steps indexed by $t \in \naturalset$, where $\naturalset$ is a set of positive integers. At step $t$, the agent takes an \emph{action} $a_t \in \cS$ and the environment responds with an \emph{observation} $y_t \in \cS$. The interaction history up to step $n$ is a \emph{trajectory} $\tau_n = (a_1, y_1, \dots, a_n, y_n)$ of $n$ actions and observations. We assume that the number of steps $n$ is fixed. The \emph{reward} is a non-negative function of $x$ and $\tau_n$, denoted by $r(x, \tau_n) \geq 0$, and measures the quality of the interaction. We do not make any additional assumptions on the reward. This is to maintain generality and because our algorithms (\cref{sec:algorithms}) do not require it.

The agent follows a policy conditioned on its history. Specifically, the probability that action $a$ is taken in context $x$ and history $\tau_{t - 1}$ is $\pi(a \mid x, \tau_{t - 1}; \theta)$, and is parameterized by $\theta \in \Theta$. We call $\theta$ a \emph{policy} and $\Theta$ the space of policy parameters. The probability of observing $y_t$ conditioned on history $\tau_{t - 1}$ and action $a_t$ is denoted by $p(y_t \mid x, \tau_{t - 1}, a_t)$. We slightly abuse our notation and denote the probability of trajectory $\tau_n$ in context $x$ under policy $\theta$ by
\begin{align*}
  \pi(\tau_n \mid x; \theta)
  = \prod_{t = 1}^n p(y_t \mid x, \tau_{t - 1}, a_t) \,
  \pi(a_t \mid x, \tau_{t - 1}; \theta)\,.
\end{align*}
The factorization follows from the chain rule of probability. The expected value of policy $\theta$, where $q$ is a distribution over contexts $x$, is defined as
\begin{align}
  V(\theta)
  = \Erv{x \sim q, \, \tau_n \sim \pi(\cdot \mid x; \theta)}{r(x, \tau_n)}\,.
  \label{eq:online objective}
\end{align}
Our goal is to learn a policy $\theta_* = \argmax_{\theta \in \Theta} V(\theta)$ with the highest expected value. Our formulation is more general than those in the closest related works, which focused only on specific policies and did not consider observations \citep{kveton25adaptive,guo25segment}.

Our framework can model multiple use cases. For instance, suppose that we want to maximize the pedagogical value of a conversation over $n$ steps \citep{scarlatos2025}. Then $\tau_n$ is an $n$-step conversation and $r(x, \tau_n)$ is its pedagogical value. We experiment with such problems in \cref{sec:experiments}. In math reasoning problems \citep{shao24deepseekmath}, $\tau_n$ would be agent's reasoning and $r(x, \tau_n)$ would indicate whether its response is correct.

\section{Algorithms}
\label{sec:algorithms}

This section has four main parts. In \cref{sec:pg}, we introduce policy gradients. In \cref{sec:ppg}, we present partial policy gradients. We interpret them in \cref{sec:interpretation} and introduce offline partial policy gradients in \cref{sec:offline ppg}.

\subsection{Policy Gradient}
\label{sec:pg}

\begin{algorithm}[t!]
  \caption{Meta policy gradient algorithm.}
  \label{alg:meta pg}
  \begin{algorithmic}[1]
    \State \textbf{Input:} Learning rate schedule $(\alpha_j)_{j \in \naturalset}$, batch size $B$, gradient estimator $g$
    \State Initialize $\theta$
    \State $j \gets 1$
    \While{(not convergence)}
      \For{$i = 1, \dots, B$}
        \State $x_i \sim q$
        \State $\tau_{i, n} \sim \pi(\cdot \mid x_i; \theta)$
        \State $r_i \gets r(x_i, \tau_{i, n})$
      \EndFor
      \State Batch $\cB \gets \set{(x_i, \tau_{i, n}, r_i)}_{i = 1}^B$
      \State $\theta \gets \theta + \alpha_j g(\cB)$
      \State $j \gets j + 1$
    \EndWhile
    \State \textbf{Output:} Learned policy $\theta$
  \end{algorithmic}
\end{algorithm}

We want to maximize the expected policy value $V(\theta)$ in \eqref{eq:online objective}. While this can be done many ways \citep{sutton98reinforcement}, the most straightforward method for complex policies, like those represented by LLMs, are policy gradients \citep{williams92simple}. The key idea in an \emph{online policy gradient} is to update the policy $\theta$ iteratively by gradient ascent. The gradient of $V(\theta)$ at $\theta$ is
\begin{align}
  \nabla V(\theta)
  & = \E{r(x, \tau_n) \nabla \log \pi(\tau_n \mid x; \theta)}
  \label{eq:pg} \\
  & = \E{r(x, \tau_n) \sum_{t = 1}^n
  \nabla \log \pi(a_t \mid x, \tau_{t - 1}; \theta)}\,,
  \nonumber
\end{align}
where $\E{\cdot} = \Erv{x \sim q, \, \tau_n \sim \pi(\cdot \mid x; \theta)}{\cdot}$ is an expectation over the same random variables as in \eqref{eq:online objective}. This identity can be derived by a direct application of the score identity \citep{aleksandrov68stochastic}.

All policy gradient algorithms in this work are variants of the meta algorithm in \cref{alg:meta pg}. The gradient in Line 10 is estimated from a batch $\cB$ of trajectories (Line 9) of size $B$, which are sampled in Line 7 and assigned rewards in Line 8. We denote the $i$-th trajectory in $\cB$ by $\tau_{i, n}$; and its context and reward by $x_i$ and $r_i$, respectively. The gradient estimator for the classic online policy gradient is
\begin{align}
  g(\cB)
  = \frac{1}{B} \sum_{i = 1}^B r_i \sum_{t = 1}^n
  \nabla \log \pi(a_{i, t} \mid x_i, \tau_{i, t - 1}; \theta)\,.
  \label{eq:empirical pg}
\end{align}
We refer to this instance of \cref{alg:meta pg} as \pg.

\subsection{Partial Policy Gradient}
\label{sec:ppg}

While the gradient in \eqref{eq:pg} is valid, the reward is attributed equally to each step, since all $\nabla \log \pi(a_t \mid x, \tau_{t - 1}; \theta)$ are multiplied by the same reward $r(x, \tau_n)$. If the attribution could be improved, we could reduce the variance in estimator \eqref{eq:empirical pg} and improve statistical efficiency. We propose a general attribution framework that allows for such tradeoffs. The framework has two components.

\textbf{Factored reward.} We assume that the reward factors as
\begin{align}
  r(x, \tau_n)
  = \sum_{t = 1}^n r_t\,, \quad
  r_1, \dots, r_n \geq 0\,,
  \label{eq:additive reward}
\end{align}
where $r_t = r(x, \tau_t)$ denotes the \emph{reward at step $t$}, and is a function of $x$ and $\tau_t$. The reward $r_t$ with such properties can be either observed, for instance if the environment is an MDP with per-step rewards, or designed. We present a general design that guarantees \eqref{eq:additive reward} next.

\begin{lemma}
\label{lem:additive reward} Let $f(x, \tau_t)$ be any function of $x$ and $\tau_t$ such that $0 = f(x, \tau_0) \leq \dots \leq f(x, \tau_n) = r(x, \tau_n)$. Let
\begin{align*}
  r_t
  = f(x, \tau_t) - f(x, \tau_{t - 1})
\end{align*}
hold for all $t \in [n]$. Then $\sum_{t = 1}^n r_t = r(x, \tau_n)$ and $r_t \geq 0$ for all $t \in [n]$.
\end{lemma}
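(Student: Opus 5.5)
The argument splits into two independent parts, each a one-line consequence of the hypotheses of \cref{lem:additive reward}: telescoping gives the sum identity, and monotonicity of $f$ gives nonnegativity.

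\textbf{Sum identity.} Substituting the definition $r_t = f(x, \tau_t) - f(x, \tau_{t - 1})$ into the sum gives
\begin{align*}
  \sum_{t = 1}^n r_t
  = \sum_{t = 1}^n \left(f(x, \tau_t) - f(x, \tau_{t - 1})\right)
  = f(x, \tau_n) - f(x, \tau_0)\,.
\end{align*}
Every intermediate term $f(x, \tau_t)$ with $1 \leq t \leq n - 1$ appears once with a plus sign and once with a minus sign, so it cancels. The boundary conditions $f(x, \tau_0) = 0$ and $f(x, \tau_n) = r(x, \tau_n)$ then give $\sum_{t = 1}^n r_t = r(x, \tau_n)$. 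If one prefers not to rely on the informal cancellation, this can be checked by a short induction on $n$.

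\textbf{Nonnegativity.} For each $t \in [n]$, the assumed chain $f(x, \tau_0) \leq \dots \leq f(x, \tau_n)$ contains the adjacent inequality $f(x, \tau_{t - 1}) \leq f(x, \tau_t)$. Hence $r_t = f(x, \tau_t) - f(x, \tau_{t - 1}) \geq 0$.

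Neither step poses a real obstacle. The only point requiring care is notation: the monotonicity must be read along the \emph{fixed} trajectory $\tau_n$ and its prefixes $\tau_0, \dots, \tau_n$, with $\tau_0$ the empty history. The lemma makes no claim about comparing $f$ across different trajectories, so I would state this interpretation explicitly at the start of the proof.
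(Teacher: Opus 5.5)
Your proof is correct and matches the paper's argument: telescoping collapses the sum to $f(x, \tau_n) - f(x, \tau_0) = r(x, \tau_n)$, and each $r_t \geq 0$ follows from the adjacent inequality $f(x, \tau_{t - 1}) \leq f(x, \tau_t)$ in the assumed monotone chain. Your remark that the chain is read along the prefixes of a single fixed trajectory is a sensible clarification; the paper leaves it implicit.
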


The proof of the claim is in \cref{sec:additive reward proof}. Since $f(x, \tau_t)$ can be viewed as a trajectory reward up to step $t$, $r_t$ is the per-step gain in the total trajectory reward at step $t$. If $f$ is not monotone, a monotone $\bar{f}$ can be obtained by a transformation $\bar{f}(x, \tau_t) = \max_{\ell \in [t]} f(x, \tau_\ell)$. Because of this, and without loss of generality, we assume that $r_t \geq 0$ in the rest of our paper, unless stated otherwise. One natural choice of $f$ is an LLM judge \citep{zheng23judging} that can judge partial trajectories. Similar decompositions were proposed before in more specific problems, without observations \citep{guo25segment} and in optimization of submodular policies \citep{kveton25adaptive}.

\textbf{Reward attribution.} We consider a general \emph{partial policy gradient} of the form
\begin{align}
  \E{\sum_{t = 1}^n \left(\sum_{\ell \in \cR_t} r_\ell\right)
  \nabla \log \pi(a_t \mid x, \tau_{t - 1}; \theta)}\,,
  \label{eq:ppg}
\end{align}
where $\cR_t \subseteq [n] \setminus [t - 1]$ are \emph{reward indices affected by action at step $t$} through $\nabla \log \pi(a_t \mid x, \tau_{t - 1}; \theta)$. Specifically, $\cR_t$ is the subset of future steps whose rewards are affected by $a_t$ conditioned on the history. The gradient can be estimated empirically as $g(\cB) =$
\begin{align}
  \frac{1}{B} \sum_{i = 1}^B \sum_{t = 1}^n
  \left(\sum_{\ell \in \cR_t} r_{i, \ell}\right)
  \nabla \log \pi(a_{i, t} \mid x_i, \tau_{i, t - 1}; \theta)\,,
  \label{eq:empirical ppg}
\end{align}
where $\tau_{i, n}$ is the $i$-th trajectory in batch $\cB$; and $a_{i, \ell}$, $r_{i, \ell}$, and $\tau_{i, \ell - 1}$ are the action, reward, and trajectory history at step $t$ of this trajectory, respectively. When \eqref{eq:empirical ppg} is used in \cref{alg:meta pg}, we call it \underline{p}artial \underline{p}olicy \underline{g}radient (\ppg) since only a subset of future rewards is used in policy optimization. Note that \ppg reduces to \pg when $\cR_t = [n] \setminus [t - 1]$ for all $t \in [n]$, as we show in \cref{sec:full pg}. Our framework allows us to reason about other $\cR_t$ and understand associated statistical tradeoffs.

\begin{figure*}[t!]
  \centering
  \includegraphics[width=6.75in,bb=0in 0.25in 10in 2.5in,clip]{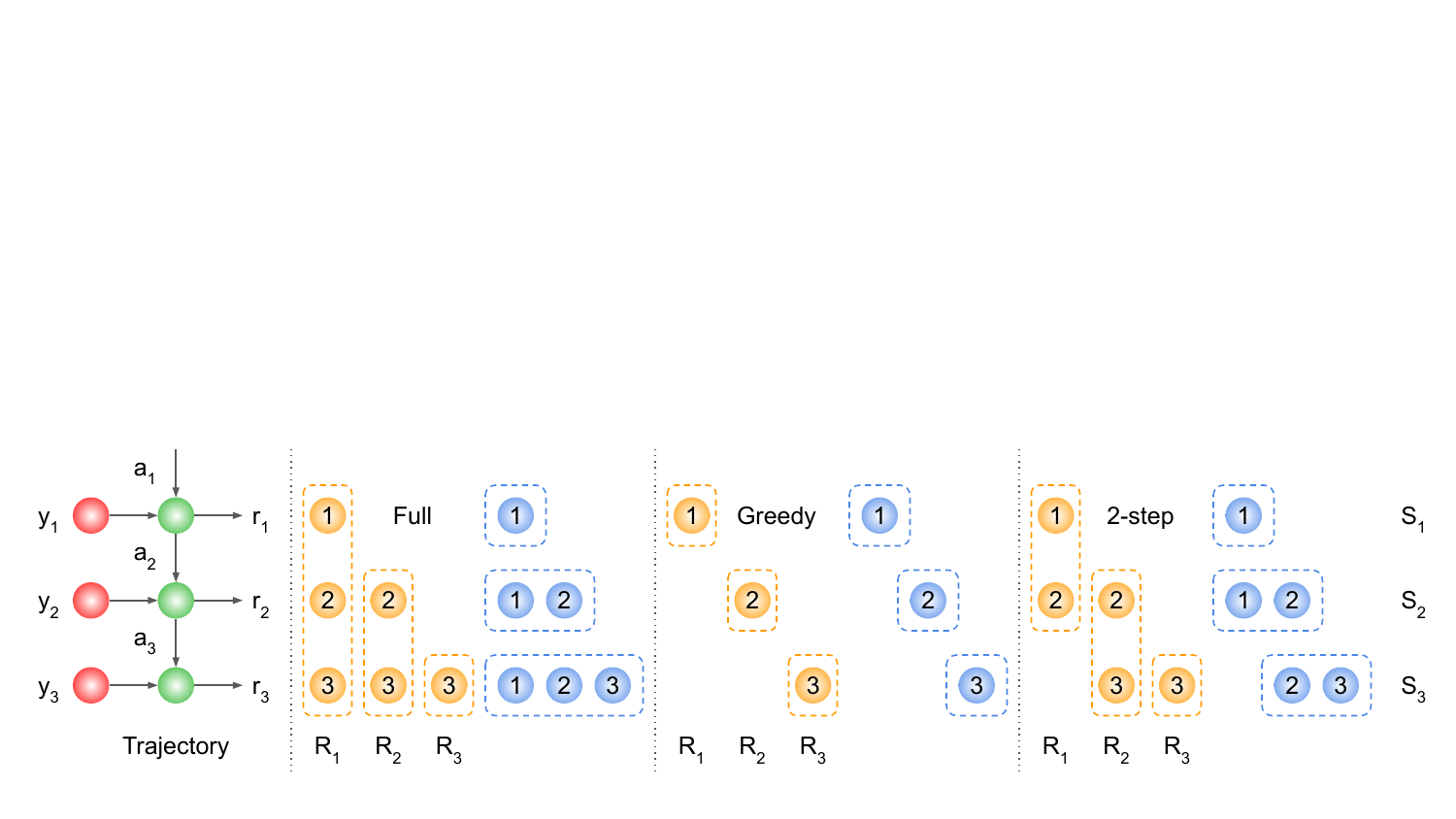}
  \caption{Examples of reward indices $\cR_t$ (yellow circles in columns) and action indices $\cS_t$ (blue circles in rows) in full, greedy, and $2$-step lookahead policies.}
  \label{fig:ppg}
\end{figure*}

\ppg is more statistically efficient than \pg, and we prove this formally in \cref{sec:gradient concentration}. Specifically, \cref{thm:gradient concentration} says that a partial policy gradient estimator $g(\cB)$ that optimizes for a subset of rewards of another estimator $g'(\cB)$ concentrates at $\E{g(\cB)}$ faster that $g'(\cB)$ concentrates at $\E{g'(\cB)}$. The key idea in the proof is to treat \eqref{eq:empirical ppg} as a sum of random vectors, independent across $i$ but dependent across $t$. Then we use Hoeffding's inequality, which is a standard tool in concentration analyses \citep{raginsky12concentration,boucheron13concentration}, to argue that the sum of a subset of random vectors concentrates faster than the sum of the original set, when the vectors are sampled from the same distribution.

\subsection{Interpretation}
\label{sec:interpretation}

We interpret gradient ascent with respect to \eqref{eq:ppg} next. That requires additional notation. Let $\cS_t = \set{\ell \in [t]: t \in \cR_\ell} \subseteq [t]$ be \emph{action indices that affect the reward at step $t$}, $r_t$. Note that by definition,
\begin{align}
  & \sum_{t = 1}^n \left(\sum_{\ell \in \cR_t} r_\ell\right)
  \nabla \log \pi(a_t \mid x, \tau_{t - 1}; \theta)
  \label{eq:action-reward-sum swap} \\
  & = \sum_{t = 1}^n r_t \sum_{\ell \in \cS_t}
  \nabla \log \pi(a_\ell \mid x, \tau_{\ell - 1}; \theta)\,.
  \nonumber
\end{align}
This is because we can first sum over log-probabilities and then over the associated rewards, or the other way around. \cref{fig:ppg} illustrates $\cR_t$ and $\cS_t$ in full, greedy, and $K$-step lookahead policies in \cref{sec:ppg instances}. We also define $\bar{S_t} = [t] \setminus \cS_t$, the complement of $\cS_t$ with respect to the first $t$ steps. For any $\cS, \cZ \subseteq [n]$, we define
\begin{align*}
  \Erv{\cS}{\cdot}
  & = \Erv{x \sim q, \, \tau_\cS \sim \pi(\cdot \mid x; \theta)}{\cdot}\,, \\
  \condErv{\cS}{\cdot}{\cZ}
  & = \condErv{\tau_\cS \sim \pi(\cdot \mid x; \theta)}{\cdot}{x, \tau_\cZ}\,.
\end{align*}
Here $\Erv{\cS}{\cdot}$ is the expectation over actions and observations at steps $\cS$, and context $x$; and $\condErv{\cS}{\cdot}{\cZ}$ is the expectation over actions and observations at steps $\cS$, conditioned on actions and observations at steps $\cZ$, and context $x$. With the new notation in hand, we get the following claim.

\begin{lemma}
\label{lem:ppg} Let $x \sim q$ and $\tau_n \sim \pi(\cdot \mid x; \theta)$. Then for any $\cR_t \subseteq [n] \setminus [t - 1]$, the gradient in \eqref{eq:ppg} is equal to
\begin{align*}
  \E{\sum_{t = 1}^n \nabla \condErv{\cS_t}{r_t}{\bar{\cS}_t}}\,.
\end{align*}
\end{lemma}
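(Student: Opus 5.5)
The plan is to start from the swapped form \eqref{eq:action-reward-sum swap} and prove the identity one reward index $t$ at a time. Taking expectations of both sides of \eqref{eq:action-reward-sum swap} and using linearity, the gradient in \eqref{eq:ppg} equals $\sum_{t=1}^n \E{r_t \sum_{\ell \in \cS_t} \nabla \log \pi(a_\ell \mid x, \tau_{\ell-1}; \theta)}$. It therefore suffices to show, for each fixed $t$,
\begin{align*}
  \E{r_t \sum_{\ell \in \cS_t} \nabla \log \pi(a_\ell \mid x, \tau_{\ell - 1}; \theta)}
  = \E{\nabla \condErv{\cS_t}{r_t}{\bar{\cS}_t}}\,.
\end{align*}
Since $r_t = r(x, \tau_t)$ depends only on steps in $[t] = \cS_t \cup \bar{\cS}_t$, I would work on the joint law of $(x, \tau_t)$ and marginalize out steps after $t$.

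The key step is the tower rule: condition on $x$ and $\tau_{\bar{\cS}_t}$, and write the outer expectation as $\Erv{\bar{\cS}_t}{\cdot}$ and the inner one as $\condErv{\cS_t}{\cdot}{\bar{\cS}_t}$. Inside, I apply the score identity, as in the derivation of \eqref{eq:pg}, to the conditional expectation. The conditional density of $\tau_{\cS_t}$ given $x, \tau_{\bar{\cS}_t}$ is treated as the product of the policy factors $\pi(a_\ell \mid x, \tau_{\ell-1}; \theta)$ for $\ell \in \cS_t$ and the environment factors $p(y_\ell \mid \cdot)$, which do not depend on $\theta$. Differentiating under the integral, with the gradient acting only on these $\cS_t$ policy factors, gives
\begin{align*}
  \nabla \condErv{\cS_t}{r_t}{\bar{\cS}_t}
  = \condErv{\cS_t}{r_t \sum_{\ell \in \cS_t} \nabla \log \pi(a_\ell \mid x, \tau_{\ell-1}; \theta)}{\bar{\cS}_t}\,.
\end{align*}
Taking the outer expectation then yields the per-$t$ identity, and summing over $t$ completes the proof.

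The main obstacle is making precise what $\nabla \condErv{\cS_t}{r_t}{\bar{\cS}_t}$ means. The steps in $\cS_t$ and $\bar{\cS}_t$ interleave, so the true conditional law of $\tau_{\cS_t}$ given $\tau_{\bar{\cS}_t}$ involves Bayes-type normalization that also depends on $\theta$. I would resolve this by adopting the interpretation implicit in the notation: the gradient is taken only through the policy factors at indices in $\cS_t$, and the factors at $\bar{\cS}_t$ are held fixed as "sampled" rather than differentiated. Equivalently, the full joint density $\prod_{\ell \le t} p(y_\ell \mid \cdot)\, \pi(a_\ell \mid \cdot)$ is differentiated only in the $\cS_t$ factors, that is, with a stop-gradient on the $\bar{\cS}_t$ factors. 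With this reading, the identity follows directly from $\nabla \pi = \pi \nabla \log \pi$ applied to each $\cS_t$ factor in the joint integral, and the tower-rule presentation is just a rewriting of that calculation. I would state this convention explicitly before the computation.
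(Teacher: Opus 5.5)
Your main chain is exactly the paper's proof: the swap \eqref{eq:action-reward-sum swap}, linearity, the tower rule over $\bar{\cS}_t$, the score identity over $\tau_{\cS_t}$, and the tower rule back. It is complete whenever $\bar{\cS}_t = [m_t - 1]$ is an initial segment of $[t]$, i.e.\ $\cS_t = \set{m_t, \dots, t}$. This covers the full, greedy, $K$-step and segment instances. In that case the conditional density of $\tau_{\cS_t}$ given $(x, \tau_{\bar{\cS}_t})$ really is $\prod_{\ell \in \cS_t} \pi(a_\ell \mid x, \tau_{\ell - 1}; \theta)\, p(y_\ell \mid x, \tau_{\ell - 1}, a_\ell)$. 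The gradient is then an honest gradient of a function of $(x, \tau_{\bar{\cS}_t}, \theta)$, and no convention is needed. The paper's fourth step uses this factorization without saying so.

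The obstacle you flag for interleaved $\cS_t$ and $\bar{\cS}_t$ is real, but your convention does not resolve it. Your claim that the nested form is ``just a rewriting'' of the stop-gradient calculation is false there, for either reading of the inner law.
\begin{itemize}
\item If the inner law is the product of the $\cS_t$ factors, it is an interventional law, not the conditional one. The joint density is that product times the $\bar{\cS}_t$ factors, which then depend on $\tau_{\cS_t}$. So the joint is not the marginal of $\tau_{\bar{\cS}_t}$ times that product, and your tower-rule step fails.
\item If the inner law is the true conditional, the tower rule holds. But the score then also contains $\nabla \log \pi(a_\ell \mid x, \tau_{\ell - 1}; \theta)$ for some $\ell \in \bar{\cS}_t$, plus a normalizer term.
\end{itemize}
What survives in general is only $\E{r_t \sum_{\ell \in \cS_t} \nabla \log \pi(a_\ell \mid x, \tau_{\ell - 1}; \theta)} = \nabla_{\theta'} \E{r_t}$, with $\theta'$ entering only the policy factors at $\cS_t$ and evaluated at $\theta' = \theta$.

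With the paper's conditional-expectation definition, the lemma is in fact false for general $\cR_t$. Here is a counterexample.
\begin{itemize}
\item Take $n = 2$, $\cR_1 = \set{2}$, $\cR_2 = \emptyset$, so $\cS_2 = \set{1}$ and $\bar{\cS}_2 = \set{2}$.
\item Use trivial observations and binary actions with $\pi(a_1 = 1) = \sigma(\theta_1)$ and $\pi(a_2 = a_1 \mid a_1) = \sigma(\theta_2)$, where $\sigma$ is logistic.
\item Set rewards $r_1 = 0$ and $r_2 = \I{a_1 = 1}$.
\end{itemize}
The $\theta_2$-component of the left side is $0$. The $\theta_2$-component of $\E{\nabla \condE{r_2}{a_2}}$ is $p(1 - p)c(1 - c)(1/D_1 - 1/D_0)$, where $p = \sigma(\theta_1)$, $c = \sigma(\theta_2)$, $D_1 = pc + (1 - p)(1 - c)$ and $D_0 = p(1 - c) + (1 - p)c$. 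This is nonzero unless $p = 1/2$ or $c = 1/2$. So either add the hypothesis that every $\bar{\cS}_t$ is an initial segment of $[t]$, or state the general case in the stop-gradient form instead of the nested conditional expectation.
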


The proof is in \cref{sec:ppg proof}. The lemma says that the gradient in \eqref{eq:ppg} increases the sum of expected per-step rewards $r_t$ through actions at steps $\cS_t$ that affect the corresponding expected rewards. The histories $\bar{\cS}_t$ are sampled from $\pi$. We discuss different forms of these policies in \cref{sec:ppg instances}.

\subsection{Offline Partial Policy Gradient}
\label{sec:offline ppg}

So far, we focused on online policy gradients (\cref{sec:ppg}), where the trajectories are sampled from learned policy $\pi$. Another common and practical setting is offline RL \citep{jaques20humancentric,levine20offline}, where the trajectories are sampled from a logged dataset. We represent it by another distribution $\rho$ and propose an \emph{offline partial policy gradient} of the form
\begin{align}
  \Erho{}{\sum_{t = 1}^n \left(\sum_{\ell \in \cR_t} r_\ell\right)
  \nabla \log \pi(a_t \mid x, \tau_{t - 1}; \theta)}\,,
  \label{eq:offline ppg}
\end{align}
where $\Erho{}{\cdot} = \Erv{x \sim q, \, \tau_n \sim \rho(\cdot \mid x; \theta)}{\cdot}$. Note that the only difference from \eqref{eq:ppg} is that the trajectories are sampled from another policy $\rho$. Therefore, the empirical estimator of \eqref{eq:offline ppg} from batch $\cB$ has the same algebraic form as \eqref{eq:empirical ppg}.

Based on our discussion, an offline variant of \ppg can be obtained by changing the sampling distribution in Line 7 of \cref{alg:meta pg} to $\tau_{i, n} \sim \rho(\cdot \mid x; \theta)$. Our gradient estimator is motivated by \citet{mukherjee25offline} and can be related to maximizing the original online objective as follows.

\begin{lemma}
\label{lem:offline ppg} Let all quantities be defined as in \cref{lem:ppg}. Let $\mathbb{E}^\rho$ be an expectation with respect to another policy $\rho$. Then for any $\cR_t \subseteq [n] \setminus [t - 1]$, the gradient in \eqref{eq:offline ppg} is equal to
\begin{align*}
  \mathbb{E}^\rho\Bigg[\sum_{t = 1}^n \nabla
  \underbrace{\Erho{\cS_t}{r_t \sum_{\ell \in \cS_t}
  \log \pi(a_\ell, y_\ell \mid x, \tau_{\ell - 1}; \theta) \,\middle|\,
  \bar{\cS}_t}}_{f_t(\theta)}\Bigg]\,.
\end{align*}
When $r_t \geq 0$ for all $t \in [n]$, $f_t(\theta) \leq \condErv{\cS_t}{r_t}{\bar{\cS}_t}$ holds. In general, $|f_t(\theta) - \condErv{\cS_t}{r_t}{\bar{\cS}_t}| \leq C(\pi, \rho)$, where $C(\pi, \rho)$ is a constant depending on the propensity ratios of $\pi$ and $\rho$, and their logarithms.
\end{lemma}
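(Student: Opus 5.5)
The plan has three parts: the gradient identity, the one-sided bound for nonnegative rewards, and the two-sided bound in general.

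\emph{Gradient identity.} I start from the swapped form \eqref{eq:action-reward-sum swap}, which rewrites the integrand of \eqref{eq:offline ppg} as $\sum_{t=1}^n r_t \sum_{\ell \in \cS_t} \nabla \log \pi(a_\ell \mid x, \tau_{\ell-1}; \theta)$. Two facts then give the claimed form.
\begin{itemize}
\item The observation model $p(y_\ell \mid x, \tau_{\ell-1}, a_\ell)$ does not depend on $\theta$. Hence $\nabla \log \pi(a_\ell \mid \cdot\,;\theta) = \nabla \log \pi(a_\ell, y_\ell \mid x, \tau_{\ell-1};\theta)$, where the joint is the product of the policy and the observation model.
\item $r_t$ and the sampling distribution $\rho$ do not depend on $\theta$. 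So $\nabla$ can be pulled outside the expectation, and $r_t \sum_{\ell\in\cS_t}\nabla \log \pi(a_\ell, y_\ell\mid\cdot)$ becomes $\nabla\big(r_t\sum_{\ell\in\cS_t}\log\pi(a_\ell,y_\ell\mid\cdot)\big)$.
\end{itemize}
Next I apply the tower rule under $\rho$, conditioning on $x$ and $\tau_{\bar\cS_t}$. This writes each summand as $\mathbb{E}^\rho[\nabla f_t(\theta)]$. Exchanging gradient and expectation requires only the usual regularity conditions, which I will assume.

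\emph{Bound for $r_t\ge 0$.} Let $w = \prod_{\ell\in\cS_t}\pi(a_\ell,y_\ell\mid\cdot)/\rho(a_\ell,y_\ell\mid\cdot)$ be the propensity ratio over the steps in $\cS_t$. Then $\condErv{\cS_t}{r_t}{\bar\cS_t} = \Erho{\cS_t}{r_t w \mid \bar\cS_t}$ by importance sampling. Since $\log u \le u-1 \le u$, we have
\[
\textstyle\sum_{\ell\in\cS_t}\log\pi(a_\ell,y_\ell\mid\cdot) = \log\prod_{\ell}\pi \le \prod_{\ell}\pi = w\prod_{\ell}\rho \le w,
\]
using that the $\rho$ factors are at most one. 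I will treat these as probabilities of discrete token sequences, so each factor is at most one. Multiplying by $r_t\ge0$ and taking $\Erho{\cS_t}{\cdot\mid\bar\cS_t}$ gives $f_t(\theta)\le \condErv{\cS_t}{r_t}{\bar\cS_t}$. This is the step I expect to need care: I must make sure the change of measure really matches the joint over actions and observations at steps in $\cS_t$. That holds because the conditioning on $\bar\cS_t$ fixes the other steps, but interleaved steps in $\bar\cS_t$ that come after some steps in $\cS_t$ need the density ratio written as a product of conditionals; I would spell that out.

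\emph{General bound.} Without nonnegativity I write
\[
|f_t - \condErv{\cS_t}{r_t}{\bar\cS_t}| = \Big|\Erho{\cS_t}{r_t\big(\textstyle\sum_\ell\log\pi - w\big) \,\Big|\, \bar\cS_t}\Big|.
\]
I bound $|r_t|$ by its maximum and $|\sum_\ell\log\pi - w|$ by $\sup|\log\prod\pi| + \sup w$. This yields a constant $C(\pi,\rho)$ depending on the propensity ratios and log-probabilities, as stated.
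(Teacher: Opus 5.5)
Your argument is correct. The gradient identity follows the paper's proof step for step: the swap \eqref{eq:action-reward-sum swap}, the $\theta$-independence of the observation model and of $\rho$, and the tower rule under $\rho$.

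The difference lies in the two bounds. The paper does not prove them. It imports Lemmas~1 and~3 of \citet{mukherjee25offline} and asserts that conditioning on $\bar{\cS}_t$ does not change those proofs. You instead give a self-contained change-of-measure argument, using $\log u \le u - 1$ and the fact that probabilities of discrete token sequences are at most one. Your route makes explicit the assumptions that the citation leaves implicit: discreteness, $\rho > 0$ wherever $\pi > 0$ for the importance-sampling identity, bounded rewards, and finite log-probabilities. It also produces an explicit, if crude, $C(\pi, \rho)$. The paper's route is shorter but defers all of this to the reference.

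Two refinements are worth making.
\begin{itemize}
\item For $r_t \geq 0$ you need no change of measure at all. Each $\pi(a_\ell, y_\ell \mid x, \tau_{\ell - 1}; \theta) \le 1$, so $f_t(\theta) \le 0 \le \condErv{\cS_t}{r_t}{\bar{\cS}_t}$. The support and interleaving concerns then disappear for this claim.
\item The interleaving issue you flag for the general bound is real, and the paper shares it. It already affects the score-identity step of \cref{lem:ppg}. It is moot for every instance in the paper, since there $\cS_t$ is always a suffix of $[t]$. However, if your planned fix means writing the ratio as the per-step product $w$ over $\cS_t$, that is not the right ratio under genuine conditioning. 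Steps of $\bar{\cS}_t$ that come after steps of $\cS_t$ carry information about $\tau_{\cS_t}$ and add a $\theta$-dependent normalizer. Instead, take $w$ to be the ratio of the true conditional laws of $\tau_{\cS_t}$ given $x, \tau_{\bar{\cS}_t}$ under $\pi$ and $\rho$. Your two-sided bound then goes through verbatim.
\end{itemize}
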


The proof is in \cref{sec:offline ppg proof}. The first claim says that the gradient increases the sum of expected per-step rewards $r_t$ weighted by log-probabilities under logging policy $\rho$, denoted by $f_t(\theta)$, through actions at steps $\cS_t$ that affect them. The second claim says that the true objective $\condErv{\cS_t}{r_t}{\bar{\cS}_t}$ that we want to optimize is bounded from below by $f_t(\theta)$, which we optimize. The lower bound holds under the assumption that $r_t \geq 0$. Without it, $f_t(\theta)$ is still $C(\pi, \rho)$ close to $\condErv{\cS_t}{r_t}{\bar{\cS}_t}$, as stated in the last claim. 

Since \cref{lem:offline ppg} provides guarantees for negative rewards, offline \ppg can be implemented with advantages and we do that in \cref{sec:experiments}. The advantage for step $t$ in trajectory $i$ is computed as
\begin{align}
  \tilde{r}_{i, t}
  = \frac{\sum_{\ell \in \cR_t} r_{i, \ell} - \mu_\cG}{\sigma_\cG}\,,
  \label{eq:advantage}
\end{align}
where $\mu_\cG$ and $\sigma_\cG$ are the mean and standard deviation of $\sum_{\ell \in \cR_t} r_{j, \ell}$ computed jointly over $j \in \cG$ and $t \in [n]$, where $\cG$ is the group of trajectories that trajectory $i$ belongs to. In our experiments in \cref{sec:experiments}, we group the trajectories by the persona and topic.

\section{Partial Policy Gradient Instances}
\label{sec:ppg instances}

To show the generality of our framework, we derive and compare several instances of \ppg. Our concentration arguments rely on \cref{thm:gradient concentration}, which is stated and proved in \cref{sec:gradient concentration}. Segment policies are derived in \cref{sec:segment pg}. Offline policy gradients can be derived analogously.

\subsection{Full Policy Gradient}
\label{sec:full pg}

Suppose that all actions $a_t$ in \eqref{eq:ppg} affect all future rewards. Then $\cR_t = [n] \setminus [t - 1]$ for all steps $t \in [n]$. By \cref{lem:ppg}, where $\cS_t = [t]$ and $\bar{\cS}_t = \emptyset$, the gradient is
\begin{align}
  \E{\sum_{t = 1}^n \nabla \Erv{[t]}{r_t}}
  & = \E{\nabla \sum_{t = 1}^n \Erv{[t]}{r_t}}
  \label{eq:full pg} \\
  & = \E{\nabla \E{\sum_{t = 1}^n r_t}}
  = \nabla \E{r}\,.
  \nonumber
\end{align}
In plain English, all actions up to step $t$ affect the reward at step $t$; and the gradient is a sum of gradients of the corresponding expected rewards. Note that it is identical to \eqref{eq:pg}, because future actions do not affect past rewards.

\subsection{Greedy Policy Gradient}
\label{sec:greedy pg}

Suppose that all actions $a_t$ in \eqref{eq:ppg} affect only the immediate reward. Then $\cR_t = \set{t}$ for all $t \in [n]$. By \cref{lem:ppg}, where $\cS_t = \set{t}$ and $\bar{\cS}_t = [t - 1]$, the gradient is
\begin{align}
  \E{\sum_{t = 1}^n \nabla \condErv{\set{t}}{r_t}{[t - 1]}}\,.
  \label{eq:greedy pg}
\end{align}
In plain English, only the action at step $t$ affects the reward at step $t$; and the gradient is a sum of gradients of the corresponding expected rewards. We refer to \ppg with $\cR_t = \set{t}$ as \ppgone.

This policy is simpler than that in \cref{sec:full pg}, where the reward is affected by each previous action, and therefore its gradient can be estimated more efficiently. Specifically, by \cref{thm:gradient concentration}, the empirical gradient in \ppgone concentrates faster than that in \pg since $\cS_t = \set{t} \subseteq [t]$. While this policy class was proposed, analyzed, and evaluated in LLMs by \citet{kveton25adaptive}, it was done only for adaptive submodular maximization \citep{golovin11adaptive}. The connection to statistical efficiency was not established.

\subsection{$K$-Step Lookahead Policy Gradient}
\label{sec:lookahead pg}

Suppose that all actions $a_t$ in \eqref{eq:ppg} affect the next $K$ rewards. Then $\cR_t = [\min \set{t + K - 1, n}] \setminus [t - 1]$ for all $t \in [n]$. By \cref{lem:ppg}, where $\cS_t = [t] \setminus [\max \set{t - K, 0}]$ and $\bar{\cS}_t = [\max \set{t - K, 0}]$, the gradient is
\begin{align}
  \!\!\!\!\!
  \E{\sum_{t = 1}^n \nabla \condErv{[t] \setminus [\max \set{t - K, 0}]}
  {r_t}{[\max \set{t - K, 0}]}}\,.
  \label{eq:lookahead pg}
\end{align}
In plain English, only the last $K$ actions up to step $t$ including affect the reward at step $t$; and the gradient is a sum of gradients of the corresponding expected rewards. We call \ppg with this $\cR_t$ as \ppgk{K}. Our paper is the first work that proposes this policy in LLMs. For a given lookahead, such as $K = 3$, we write \ppgk{3}. Note that \ppgk{1} is identical to \ppgone in \cref{sec:greedy pg}.

This policy is simpler than that in \cref{sec:full pg}, where the reward is affected by each previous action, but also more complex than that in \cref{sec:greedy pg}, where the reward is affected only by the immediate action. The policy naturally interpolates between these two extremes. Specifically, since
\begin{align*}
  \set{t}
  \subseteq \cS_t
  = [t] \setminus [\max \set{t - K, 0}]
  \subseteq [t]
\end{align*}
holds for any $K$, we have by \cref{thm:gradient concentration} that the empirical gradient in \ppgk{K} concentrates faster than that in \pg but slower than that in \ppgone. The complexity of the policy increases and its concentration rate decreases as $K$ increases.

\section{Experiments}
\label{sec:experiments}

We describe our benchmarks in \cref{sec:data_settings}, evaluation protocol in \cref{sec:evaluation}, and baselines in \cref{sec:baselines}. The policies are evaluated in \cref{sec:performance_results}. We analyze their stability in \cref{sec:stability_results} and statistical efficiency in \cref{sec:stat_efficiency_results}. In addition, we study the policies across conversation lengths in \cref{app:results_conversation_length}, present scaling laws for Qwen, Llama, and Gemma models in generic domain in \cref{app:scaling_laws}, and give qualitative examples in \cref{app:qualitative_analysis}.

\subsection{Data Settings}
\label{sec:data_settings}

\noindent\textbf{Problem Formulation.} Our learning problem is maintaining consistent and faithful human personas over time in role-playing dialogues like therapy, education, and chatting. We instantiate the framework in \cref{sec:setting} as follows. The \emph{context} $x$ is a persona definition given at the start of the interaction in a system prompt. A \emph{trajectory} $\tau_n$ is a dialogue history between an LLM-simulated agent and the \emph{environment}. The agent's responses are actions $a_t$ and the environment's replies are observations $y_t$.

\noindent\textbf{Dataset.} We experiment with Consistent-LLMs benchmark \citep{abdulhai2025consistently}, which provides human-persona dialogues across 3 distinct domains: (i) \textbf{Education:} Tutoring sessions where a student agent maintains specific learning preferences and the tutor acts as the environment; (ii) \textbf{Therapy:} Counseling sessions where a patient agent maintains consistent mental health backgrounds while the therapist serves as the environment; and (iii) \textbf{Chatting:} Casual dialogue where a simulated human is the agent maintaining biographical consistency (family, career, and so on) against a conversational environment. Each domain comprises of $6\,500$ trajectories ($5\,200$ training and $1\,300$ test) of varying lengths ($2$ to $60$ steps, with $23$ steps on average). The trajectories are generated using a Llama-3.1-8B model.

\subsection{Evaluation}
\label{sec:evaluation}

\citet{abdulhai2025consistently} proposed two metrics to evaluate role-playing consistency: (i) \emph{Prompt consistency} measures if an agent's response aligns with their assigned persona or background. (ii) \emph{Index consistency} evaluates if an agent's response contradicts to their own previous responses in the conversation history based on their past prompt consistency scores. Since index consistency is a secondary metric derived from prompt consistency, we use prompt consistency in our work; and call it \emph{persona consistency (PC)} because it measures the consistency of a persona. Using this terminology, the reward at step $t$, $r_t = r(x, \tau_t)$, is its PC.

All policies are learned offline from the training trajectories of \citet{abdulhai2025consistently}. At test time, we measure the PC using an LLM judge with the same prompts as in \citet{abdulhai2025consistently}. Specifically, for each agent's response $a_t$, we ask the LLM judge to evaluate if it contradicts the persona description provided as context $x$ and score it $1$ (consistent) if the judge outputs \say{NO} (no contradiction) or $0$ (inconsistent) if the judge outputs \say{YES} (contradiction detected). The judge is implemented using GPT-4o mini with temperature $0$, and we report the mean scores across all steps of the evaluated trajectories. See prompts in \cref{app:eval_prompts}.

We employ two complementary evaluation protocols to assess persona consistency under different distribution conditions. The first follows the rollout methodology from \citet{abdulhai2025consistently}, while the second is designed to isolate consistency improvements from error propagation effects. 

\textbf{Rollout evaluation.} Following \citet{abdulhai2025consistently}, we generate complete conversation trajectories starting with only the initial persona description as context, alternating between responses of the agent and a fixed simulated environment implemented using Llama-3.1-70B model. This exposes the agent to distribution shifts, as inconsistencies compound over time, making it a challenging test of maintaining long-horizon consistency.

\textbf{Step-level evaluation.} To disentangle consistency improvements from error compounding, we introduce a controlled evaluation setting where each step is assessed independently. Specifically, we take the test trajectories of \citet{abdulhai2025consistently} and for each step in them, we generate the response of the agent given the conversation history of that particular trajectory. This design isolates the model's inherent consistency capability from the cascading effects of its own errors without model-induced distribution shift.

\subsection{Baselines and Compared Methods}
\label{sec:baselines}

We have two baselines. \base is an unmodified base model used zero-shot to show what can be attained without learning. PPO \cite{schulman17proximal} was used as a policy optimization algorithm in \citet{abdulhai2025consistently}, and hence is a state-of-the-art baseline in our domain. We experiment with two agent models: Llama-3.1-8B-Instruct and Qwen3-8B, which we abbreviate as \texttt{Qwen} and \texttt{Llama}, respectively.

All our policies are learned offline by an offline policy gradient (\cref{sec:offline ppg}), where $\rho$ samples trajectories uniformly at random from the training set. The reward at step $t$ is the persona consistency at step $t$ (\cref{sec:evaluation}). The rewards are standardized as in \eqref{eq:advantage}. We learn a full planning policy (\pg in \cref{sec:full pg}), greedy policy (\ppgone in \cref{sec:greedy pg}), and $K$-step lookahead policies (\ppgk{K} in \cref{sec:lookahead pg}) for $K=\{2,3,4,5\}$. Additional training details in \cref{app:training_details}.

\begin{table}[t!]
\centering
\resizebox{0.5\textwidth}{!}{%
\begin{tabular}{lcc c lcc}
\toprule
& \multicolumn{2}{c}{\textbf{\texttt{Qwen}}} & & & \multicolumn{2}{c}{\textbf{\texttt{Llama}}} \\
& Rollout & Step-level & & & Rollout & Step-level \\
\cmidrule(lr){2-2} \cmidrule(lr){3-3} \cmidrule(lr){6-6} \cmidrule(lr){7-7}
Policy & PC $\uparrow$ & PC $\uparrow$ & & Policy & PC $\uparrow$ & PC $\uparrow$ \\
\midrule
\base & 0.799 & 0.886 & & \base & 0.425 & 0.425 \\
PPO & 0.875 & 0.895 & & PPO & 0.613 & 0.624 \\
\ppgone & 0.623 & 0.930 & & \ppgone & 0.739 & 0.833 \\
\ppgk{2} & 0.930 & 0.936 & & \ppgk{2} & 0.721 & 0.759 \\
\ppgk{3} & 0.948 & 0.960 & & \ppgk{3} & 0.573 & 0.645 \\
\ppgk{4} & \underline{0.971} & \underline{0.954} & & \ppgk{4} & \underline{0.837} & 0.858 \\
\ppgk{5} & 0.881 & 0.937 & & \ppgk{5} & 0.775 & \underline{0.896} \\
\pg & \textbf{0.976} & \textbf{0.990} & & \pg & \textbf{0.913} & \textbf{0.974} \\

\bottomrule
\end{tabular}
}
\caption{Evaluation on education domain in Consistent-LLMs dataset. \textbf{Bold} / \underline{underline} is the best / second-best result.}
\label{education}
\end{table}

\begin{table}[t!]
\centering
\resizebox{0.5\textwidth}{!}{%
\begin{tabular}{lcc c lcc}
\toprule
& \multicolumn{2}{c}{\textbf{\texttt{Qwen}}} & & & \multicolumn{2}{c}{\textbf{\texttt{Llama}}} \\
& Rollout & Step-level & & & Rollout & Step-level \\
\cmidrule(lr){2-2} \cmidrule(lr){3-3}  \cmidrule(lr){6-6} \cmidrule(lr){7-7}
Policy & PC $\uparrow$ & PC $\uparrow$ & & Policy & PC $\uparrow$ & PC $\uparrow$ \\
\midrule
\base & 0.561 & 0.869 & & \base & 0.465 & 0.465 \\
PPO & 0.753 & 0.911 & & PPO & 0.625 & 0.679 \\
\ppgone & 0.840 & 0.882 & & \ppgone & 0.685 & 0.710 \\
\ppgk{2} & 0.854 & \underline{0.936} & & \ppgk{2} & \underline{0.712} & 0.734 \\
\ppgk{3} & \textbf{0.937} & \textbf{0.940} & & \ppgk{3} & \textbf{0.780} & \textbf{0.784} \\
\ppgk{4} & 0.835 & 0.830 & & \ppgk{4} & 0.673 & 0.687 \\
\ppgk{5} & \underline{0.869} & 0.890 & & \ppgk{5} & 0.690 & \underline{0.740} \\
\pg & 0.785 & 0.815 & & \pg & 0.586 & 0.595 \\
\bottomrule
\end{tabular}
}
\caption{Evaluation on therapy domain in Consistent-LLMs dataset. \textbf{Bold} / \underline{underline} is the best / second-best result.}
\label{therapy}
\end{table}

\begin{table}[t!]
\centering
\resizebox{0.5\textwidth}{!}{%
\begin{tabular}{lcc c lcc}
\toprule
& \multicolumn{2}{c}{\textbf{\texttt{Qwen}}} & & & \multicolumn{2}{c}{\textbf{\texttt{Llama}}} \\
& Rollout & Step-level & & & Rollout & Step-level \\
\cmidrule(lr){2-2} \cmidrule(lr){3-3}  \cmidrule(lr){6-6} \cmidrule(lr){7-7}
Policy & PC $\uparrow$ & PC $\uparrow$ & & Policy & PC $\uparrow$ & PC $\uparrow$ \\
\midrule
\base & 0.760 & 0.775 & & \base & 0.478 & 0.478 \\
PPO & 0.818 & 0.845 & & PPO & 0.556 & 0.558 \\
\ppgone & 0.762 & 0.881 & & \ppgone & 0.799 & \underline{0.850} \\
\ppgk{2} & 0.782 & 0.813 & & \ppgk{2} & \textbf{0.929} & \textbf{0.939} \\
\ppgk{3} & \textbf{0.985} & \textbf{0.995} & & \ppgk{3} & \underline{0.821} & 0.835 \\
\ppgk{4} & 0.831 & 0.925 & & \ppgk{4} & 0.692 & 0.715 \\
\ppgk{5} & \underline{0.958} & \underline{0.964} & & \ppgk{5} & 0.802 & 0.824 \\
\pg & 0.867 & 0.813 & & \pg & 0.727 & 0.771 \\                 
\bottomrule
\end{tabular}
}
\caption{Evaluation on chatting domain in Consistent-LLMs dataset. \textbf{Bold} / \underline{underline} is the best / second-best result.}
\label{chatting}
\end{table}

% Define colors

\definecolor{baseline}{RGB}{31, 119, 180}

\definecolor{totalreward}{RGB}{23, 190, 207}

\definecolor{perstep}{RGB}{255, 127, 14}

\definecolor{k2step}{RGB}{44, 160, 44}

\definecolor{k3step}{RGB}{214, 39, 40}

\definecolor{k4step}{RGB}{148, 103, 189}

\definecolor{k5step}{RGB}{140, 86, 75}

\begin{figure*}[t!]

\centering

% Top row: Qwen3-8B Model

\begin{minipage}{0.31\textwidth}\centering

\begin{tikzpicture}

\begin{axis}[width=\linewidth, height=0.6\linewidth, view={60}{30}, xlabel={Step}, ylabel={Policy}, zlabel={Scaled Residuals }, xmin=0, xmax=60, ymin=0, ymax=6, zmin=-1.0, zmax=1.0, ytick={0,1,2,3,4,5,6}, yticklabels={\base,\pg,K=1,K=2,K=3,K=4,K=5}, yticklabel style={font=\tiny}, xticklabel style={font=\tiny}, zticklabel style={font=\tiny}, label style={font=\scriptsize}, title style={font=\scriptsize}, title={Education}, grid=major, grid style={dashed, gray!30}]

\addplot3[color=baseline, mark=o, mark size=1pt] coordinates {(1,0,0.60) (5,0,0.58) (10,0,0.40) (15,0,0.25) (20,0,0.13) (25,0,0.00) (30,0,-0.05) (35,0,-0.18) (40,0,-0.25) (45,0,-0.30) (50,0,-0.48) (55,0,-0.68) (59,0,-0.93)};

\addplot3[color=totalreward, mark=x, mark size=1pt] coordinates {(1,1,0.05) (5,1,0.08) (10,1,0.05) (15,1,-0.13) (20,1,0.03) (25,1,0.00) (30,1,0.08) (35,1,0.00) (40,1,0.03) (45,1,0.00) (50,1,0.00) (55,1,0.00) (59,1,0.00)};

\addplot3[color=perstep, mark=square, mark size=1pt] coordinates {(1,2,0.65) (5,2,0.50) (10,2,-0.15) (15,2,0.55) (20,2,-0.10) (25,2,0.28) (30,2,-0.35) (35,2,-0.65) (40,2,-0.23) (45,2,-0.78) (50,2,-0.08) (55,2,0.35) (59,2,-0.90)};

\addplot3[color=k2step, mark=triangle, mark size=1pt] coordinates {(1,3,0.15) (5,3,0.13) (10,3,0.03) (15,3,0.08) (20,3,-0.03) (25,3,-0.10) (30,3,0.10) (35,3,0.10) (40,3,0.05) (45,3,-0.10) (50,3,-0.13) (55,3,0.03) (59,3,-0.10)};

\addplot3[color=k3step, mark=diamond, mark size=1pt] coordinates {(1,4,0.08) (5,4,0.13) (10,4,0.10) (15,4,0.05) (20,4,-0.13) (25,4,-0.05) (30,4,-0.10) (35,4,-0.23) (40,4,-0.15) (45,4,-0.10) (50,4,-0.05) (55,4,-0.23) (59,4,-0.10)};

\addplot3[color=k4step, mark=pentagon, mark size=1pt] coordinates {(1,5,0.03) (5,5,-0.03) (10,5,-0.03) (15,5,-0.03) (20,5,-0.03) (25,5,-0.05) (30,5,0.00) (35,5,-0.05) (40,5,0.03) (45,5,0.08) (50,5,-0.03) (55,5,0.08) (59,5,0.03)};

\addplot3[color=k5step, mark=star, mark size=1pt] coordinates {(1,6,0.10) (5,6,0.13) (10,6,0.10) (15,6,0.13) (20,6,0.00) (25,6,-0.18) (30,6,-0.05) (35,6,-0.18) (40,6,-0.13) (45,6,-0.18) (50,6,0.03) (55,6,-0.30) (59,6,-0.05)};

\end{axis}\end{tikzpicture}

\end{minipage}%
\hfill
\begin{minipage}{0.31\textwidth}\centering

\begin{tikzpicture}

\begin{axis}[width=\linewidth, height=0.6\linewidth, view={60}{30}, xlabel={Step}, ylabel={Policy}, zlabel={Scaled Residuals }, xmin=0, xmax=60, ymin=0, ymax=6, zmin=-1.0, zmax=1.0, ytick={0,1,2,3,4,5,6}, yticklabels={\base,\pg,K=1,K=2,K=3,K=4,K=5}, yticklabel style={font=\tiny}, xticklabel style={font=\tiny}, zticklabel style={font=\tiny}, label style={font=\scriptsize}, title style={font=\scriptsize}, title={Therapy}, grid=major, grid style={dashed, gray!30}]

\addplot3[color=baseline, mark=o, mark size=1pt] coordinates {(1,0,1.20) (5,0,0.48) (10,0,0.13) (15,0,0.05) (20,0,0.13) (25,0,-0.03) (30,0,-0.10) (35,0,-0.35) (40,0,-0.35) (45,0,-0.35) (50,0,-0.48) (55,0,-0.48) (59,0,-0.48)};

\addplot3[color=totalreward, mark=x, mark size=1pt] coordinates {(1,1,0.55) (5,1,0.13) (10,1,-0.10) (15,1,-0.63) (20,1,-0.50) (25,1,-0.63) (30,1,-0.08) (35,1,0.05) (40,1,0.30) (45,1,0.50) (50,1,0.20) (55,1,-0.75) (59,1,-0.50)};

\addplot3[color=perstep, mark=square, mark size=1pt] coordinates {(1,2,0.48) (5,2,0.20) (10,2,-0.60) (15,2,0.28) (20,2,-1.05) (25,2,0.20) (30,2,-0.80) (35,2,-1.13) (40,2,-0.45) (45,2,-0.55) (50,2,-0.25) (55,2,-1.05) (59,2,0.08)};

\addplot3[color=k2step, mark=triangle, mark size=1pt] coordinates {(1,3,0.45) (5,3,0.30) (10,3,0.18) (15,3,0.08) (20,3,-0.05) (25,3,0.00) (30,3,-0.20) (35,3,0.00) (40,3,0.05) (45,3,0.00) (50,3,-0.05) (55,3,-0.13) (59,3,-0.13)};

\addplot3[color=k3step, mark=diamond, mark size=1pt] coordinates {(1,4,0.38) (5,4,0.38) (10,4,0.35) (15,4,0.38) (20,4,0.23) (25,4,0.35) (30,4,0.18) (35,4,0.23) (40,4,0.30) (45,4,0.18) (50,4,0.13) (55,4,0.18) (59,4,0.30)};

\addplot3[color=k4step, mark=pentagon, mark size=1pt] coordinates {(1,5,0.03) (5,5,0.03) (10,5,0.05) (15,5,-0.03) (20,5,0.00) (25,5,0.13) (30,5,-0.08) (35,5,-0.08) (40,5,0.05) (45,5,0.05) (50,5,-0.13) (55,5,0.05) (59,5,-0.08)};

\addplot3[color=k5step, mark=star, mark size=1pt] coordinates {(1,6,0.33) (5,6,0.08) (10,6,0.03) (15,6,0.05) (20,6,0.08) (25,6,0.05) (30,6,0.13) (35,6,0.13) (40,6,0.18) (45,6,0.05) (50,6,0.15) (55,6,0.18) (59,6,0.20)};

\end{axis}\end{tikzpicture}

\end{minipage}%
\hfill
\begin{minipage}{0.31\textwidth}\centering

\begin{tikzpicture}

\begin{axis}[width=\linewidth, height=0.6\linewidth, view={60}{30}, xlabel={Step}, ylabel={Policy}, zlabel={Scaled Residuals }, xmin=0, xmax=60, ymin=0, ymax=6, zmin=-1.0, zmax=1.0, ytick={0,1,2,3,4,5,6}, yticklabels={\base,\pg,K=1,K=2,K=3,K=4,K=5}, yticklabel style={font=\tiny}, xticklabel style={font=\tiny}, zticklabel style={font=\tiny}, label style={font=\scriptsize}, title style={font=\scriptsize}, title={Chatting}, grid=major, grid style={dashed, gray!30}]

\addplot3[color=baseline, mark=o, mark size=1pt] coordinates {(1,0,1.15) (5,0,1.13) (10,0,0.88) (15,0,0.75) (20,0,0.40) (25,0,0.23) (30,0,-0.03) (35,0,-0.03) (40,0,-0.23) (45,0,-0.10) (50,0,-0.35) (55,0,-0.48) (59,0,-0.60)};

\addplot3[color=totalreward, mark=x, mark size=1pt] coordinates {(1,1,0.28) (5,1,0.23) (10,1,0.10) (15,1,0.15) (20,1,-0.25) (25,1,-0.13) (30,1,0.15) (35,1,0.03) (40,1,-0.03) (45,1,-0.30) (50,1,0.03) (55,1,0.08) (59,1,-0.05)};

\addplot3[color=perstep, mark=square, mark size=1pt] coordinates {(1,2,0.18) (5,2,-0.25) (10,2,-0.10) (15,2,-0.20) (20,2,0.10) (25,2,-0.33) (30,2,-0.08) (35,2,-0.08) (40,2,-0.03) (45,2,-0.13) (50,2,0.15) (55,2,0.00) (59,2,-0.13)};

\addplot3[color=k2step, mark=triangle, mark size=1pt] coordinates {(1,3,-0.58) (5,3,-0.38) (10,3,-0.08) (15,3,-0.08) (20,3,-0.15) (25,3,-0.18) (30,3,0.03) (35,3,0.20) (40,3,0.15) (45,3,0.10) (50,3,-0.03) (55,3,-0.05) (59,3,-0.10)};

\addplot3[color=k3step, mark=diamond, mark size=1pt] coordinates {(1,4,-0.10) (5,4,0.03) (10,4,0.05) (15,4,0.05) (20,4,0.05) (25,4,0.05) (30,4,0.00) (35,4,0.00) (40,4,0.00) (45,4,0.00) (50,4,-0.05) (55,4,-0.05) (59,4,-0.05)};

\addplot3[color=k4step, mark=pentagon, mark size=1pt] coordinates {(1,5,0.23) (5,5,0.25) (10,5,0.15) (15,5,0.15) (20,5,0.03) (25,5,0.03) (30,5,0.08) (35,5,0.13) (40,5,0.00) (45,5,0.08) (50,5,0.10) (55,5,0.05) (59,5,-0.05)};

\addplot3[color=k5step, mark=star, mark size=1pt] coordinates {(1,6,0.28) (5,6,0.25) (10,6,0.25) (15,6,0.25) (20,6,0.20) (25,6,0.08) (30,6,0.15) (35,6,0.08) (40,6,0.20) (45,6,0.20) (50,6,0.08) (55,6,0.08) (59,6,-0.05)};

\end{axis}\end{tikzpicture}

\end{minipage}
\caption{\small{Residual of persona consistency scaled to $[-1, 1]$ across trajectory steps for all domains with \texttt{Qwen}. $K = 1$ represents \ppgone and $K \in \{2, 3, 4, 5\}$ represents \ppgk{K}.}}
\label{fig:variance_qwen}
\end{figure*}
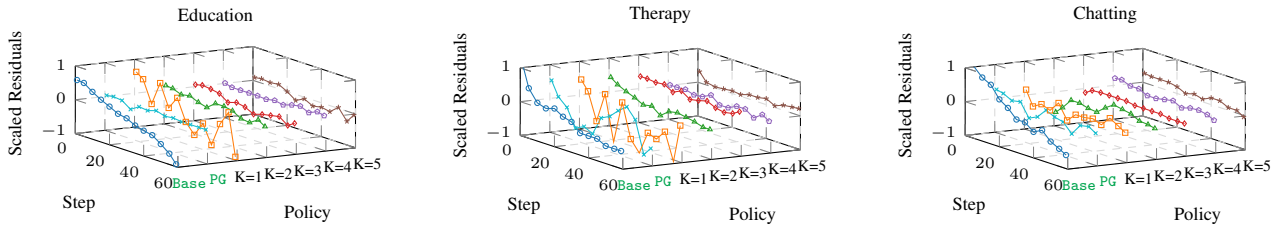

\pgfplotsset{compat=1.18}
\begin{figure}[t!]
    \centering
    % Define custom colors for consistency across all plots
    \definecolor{baseline}{RGB}{31, 119, 180}
    \definecolor{totalreward}{RGB}{23, 190, 207}
    \definecolor{perstep}{RGB}{255, 127, 14}
    \definecolor{k2step}{RGB}{44, 160, 44}
    \definecolor{k3step}{RGB}{214, 39, 40}
    \definecolor{k4step}{RGB}{148, 103, 189}
    \definecolor{k5step}{RGB}{140, 86, 75}

    % Common style for compact subplots
    \pgfplotsset{
        compactplot/.style={
            width=1.1\linewidth, 
            height=4.5cm,
            grid=major,
            label style={font=\tiny},
            tick label style={font=\tiny},
            title style={font=\scriptsize, yshift=-1ex},
            legend style={font=\tiny},
        }
    }

    % --- GROUP (a): Domain Efficiency (3 Charts) ---
    \begin{minipage}{0.5\textwidth}
        \centering
        \begin{minipage}{0.32\textwidth}
            \begin{tikzpicture}
                \begin{semilogxaxis}[
                    compactplot, title={Education}, ylabel={Persona Consistency}, 
                    xtick={1, 10, 100}, xlabel={Trajectories}, xticklabels={5e1, 5e2, 5e3}, 
                    legend to name=SharedLegend, legend columns=-1,
                    legend style={/tikz/every even column/.append style={column sep=0.2cm}}
                ]
                    \addplot[thick, color=baseline, dashed] coordinates {(0.8, 0.425) (130, 0.425)}; \addlegendentry{Base}
                    \addplot[thick, color=perstep, only marks, mark=square] coordinates {(1, 0.673) (10, 0.750) (100, 0.837)}; \addlegendentry{$K=1$}
                    \addplot[thick, color=k2step, only marks, mark=triangle] coordinates {(1, 0.367) (10, 0.676) (100, 0.721)}; \addlegendentry{$K=2$}
                    \addplot[thick, color=k3step, only marks, mark=diamond] coordinates {(1, 0.298) (10, 0.539) (100, 0.573)}; \addlegendentry{$K=3$}
                    \addplot[thick, color=k4step, only marks, mark=pentagon] coordinates {(1, 0.341) (10, 0.790) (100, 0.820)}; \addlegendentry{$K=4$}
                    \addplot[thick, color=k5step, only marks, mark=star] coordinates {(1, 0.362) (10, 0.722) (100, 0.775)}; \addlegendentry{$K=5$}
                    \addplot[thick, color=totalreward, only marks, mark=otimes] coordinates {(1, 0.248) (10, 0.650) (100, 0.913)}; \addlegendentry{PG}
                \end{semilogxaxis}
            \end{tikzpicture}
        \end{minipage}
        \begin{minipage}{0.32\textwidth}
            \begin{tikzpicture}
                \begin{semilogxaxis}[compactplot, title={Therapy}, xtick={1, 10, 100}, xlabel={Trajectories}, xticklabels={5e1, 5e2, 5e3}]
                    \addplot[thick, color=baseline, dashed] coordinates {(0.8, 0.465) (130, 0.465)};
                    \addplot[thick, color=perstep, only marks, mark=square] coordinates {(1, 0.6) (10, 0.653) (100, 0.685)};
                    \addplot[thick, color=k2step, only marks, mark=triangle] coordinates {(1, 0.468) (10, 0.687) (100, 0.712)};
                    \addplot[thick, color=k3step, only marks, mark=diamond] coordinates {(1, 0.291) (10, 0.576) (100, 0.78)};
                    \addplot[thick, color=k4step, only marks, mark=pentagon] coordinates {(1, 0.264) (10, 0.458) (100, 0.673)};
                    \addplot[thick, color=k5step, only marks, mark=star] coordinates {(1, 0.312) (10, 0.489) (100, 0.69)};
                    \addplot[thick, color=totalreward, only marks, mark=otimes] coordinates {(1, 0.355) (10, 0.499) (100, 0.586)};
                \end{semilogxaxis}
            \end{tikzpicture}
        \end{minipage}
        \hspace{-0.5cm}
        \begin{minipage}{0.32\textwidth}
            \begin{tikzpicture}
                \begin{semilogxaxis}[compactplot, title={Chatting}, xtick={1, 10, 100}, xlabel={Trajectories}, xticklabels={5e1, 5e2, 5e3}]
                    \addplot[thick, color=baseline, dashed] coordinates {(0.8, 0.478) (130, 0.478)};
                    \addplot[thick, color=perstep, only marks, mark=square] coordinates {(1, 0.679) (10, 0.695) (100, 0.799)};
                    \addplot[thick, color=k2step, only marks, mark=triangle] coordinates {(1, 0.624) (10, 0.832) (100, 0.929)};
                    \addplot[thick, color=k3step, only marks, mark=diamond] coordinates {(1, 0.618) (10, 0.699) (100, 0.821)};
                    \addplot[thick, color=k4step, only marks, mark=pentagon] coordinates {(1, 0.335) (10, 0.555) (100, 0.692)};
                    \addplot[thick, color=k5step, only marks, mark=star] coordinates {(1, 0.372) (10, 0.589) (100, 0.802)};
                    \addplot[thick, color=totalreward, only marks, mark=otimes] coordinates {(1, 0.379) (10, 0.509) (100, 0.727)};
                \end{semilogxaxis}
            \end{tikzpicture}
        \end{minipage}
        \label{fig:stat_efficiency_b}
        \ref{SharedLegend}
        % \subcaption{\label{fig:stat_efficiency_b}Statistical efficiency of \ppg across domains for Llama-3.1-8B-Instruct model.}
    \end{minipage}

    % \vspace{-0.4cm}
    % \centering\ref{SharedLegend} 
    \caption{\small{Persona consistency as a function of sample size (number of training trajectories) for all domains with \texttt{Llama}. $K = 1$ represents \ppgone and $K \in \{2, 3, 4, 5\}$ represents \ppgk{K}.}}
    \label{fig:stat_efficiency}
\end{figure}
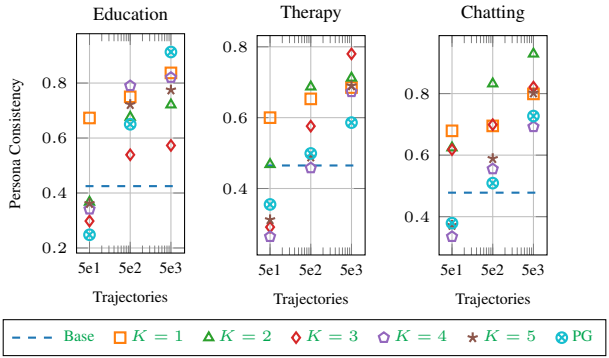

\subsection{Main Results}
\label{sec:performance_results}

We start by evaluating all policies in all domains: \emph{education}, \emph{therapy}, and \emph{chatting}; and report their average persona consistency in \cref{education,therapy,chatting}. While the best performing policy is domain-dependent, policy gradients universally outperform \base and PPO, demonstrating the robustness of offline RL by policy gradients. In chatting and therapy domains, $K$-step lookahead policies (\ppgk{2} and \ppgk{3}) perform extremely well across both \texttt{Qwen} and \texttt{Llama} models. \pg is the best policy in the education domain, because a faithful simulation of a tutored student requires long-term planning. We observe consistent results across rollout and step-level evaluation methodologies, suggesting that our gains are robust despite the cascading errors commonly observed in long dialogues. Our results show similar trends for both \texttt{Qwen} and \texttt{Llama} models, suggesting that our approach generalizes across multiple LLM architectures.

\subsection{On Effectiveness of Simpler Policies}
\label{sec:stability_results}

Now we try to understand the best performing policies. For all policies, we compute the mean persona consistency at each conversation step, subtract the mean over all steps, and renormalize to $[-1, 1]$ jointly over all policies. Note that this is just a linear transformation of consistency scores that makes them comparable across the policies. We call this quantity a \emph{residual} and report it in \cref{fig:variance_qwen}. The residual of \base monotonically decreases as conversation progresses, confirming that LLMs are susceptible to persona drift in long-horizon conversations~\citep{nie2021like}. The residual of \ppgone oscillates a lot. These ripples reflect the policy's repeated failed recovery attempts from persona drift, due to not planning for the future. In comparison, $K$-step lookahead policies (\ppgk{K}) maintain stable residuals across trajectory steps. We observe that increasing $K$ from $K = 1$ (\ppgone) to $K = 3$ (\ppgk{K}) produces progressively smoother residuals, supporting our theory that assigning credit to longer horizons can prevent persona drift by maintaining consistency continuously over time. Finally, in therapy and chatting domains, we observe that \pg oscillates a lot and thus performs worse than \ppgk{3}. This experiment shows that the temporal scope of credit assignment directly influences policy stability, with longer horizons providing smoother and more robust behavior at the cost of increased data requirements. Results for \texttt{Llama} are in \cref{app:variance_llama}. See qualitative analysis in \cref{app:qualitative_analysis}.

\subsection{On Statistical Efficiency of Simpler Policies}
\label{sec:stat_efficiency_results}

We investigate the statistical efficiency of learning different policies next. Specifically, we vary the number of training trajectories, from $50$ (low-data regime, $1\%$ of training data) to $5\,000$ (high-data regime, all training data), and plot the persona consistency of all policies in \cref{fig:stat_efficiency}. Our results reveal a fundamental trade-off between policy complexity and statistical efficiency. In the low-data regime, \ppgone consistently excels in all domains, confirming our hypothesis that simpler policies are easier to learn when data are scarce. As data scales to moderate levels ($500$ trajectories, $10\%$ of training data), $K$-step lookahead policies dominate across all domains. In particular, \ppgk{2} and \ppgk{3} are the best policies in therapy and chatting domains. These policies strike balance between planning enough for the future and being able to learn it. In the education domain, we observe that full planning with \pg eventually achieves the highest persona consistency in the high-data regime. We validate this scaling behavior across three model architectures (Llama, Qwen, Gemma) on an additional dataset (Synthetic Persona Chat ~\citep{jandaghi2024faithful}) in \cref{app:scaling_laws}, confirming that optimal lookahead horizon $K$ consistently increases with available training data, establishing a practical design principle for calibrating policy complexity to data budgets.

\section{Related Work}
\label{sec:related work}

The closest related works are \citet{kveton25adaptive} and \citet{guo25segment}. \citet{kveton25adaptive} optimized greedy policies using online policy gradients and \citet{guo25segment} optimized segment policies using GRPO. Our formalism captures both of these policy classes. Unlike \citet{kveton25adaptive}, we consider more general optimization problems than adaptive submodularity \citep{golovin11adaptive}. Unlike \citet{guo25segment}, we consider a general RL model that allows observations of the environment.

Learning of policies in finite-horizon Markov decision processes is a classic topic and it is well known that they are non-stationary \citep{puterman94markov}. Monte Carlo estimation of Q values, by the sum of infinite-horizon discounted rewards, is discussed in detail in \citet{sutton98reinforcement} and was generalized to advantages by \citet{schulman16highdimensional}. We do not discount rewards and show that maximization of a subset of future rewards results in a new simpler policy that is easier to learn. $K$-step lookahead policies were studied in other contexts before. \citet{efroni18beyond} used them to improve policy iteration, \citet{efroni18multistep} applied them to online and approximate RL, and \citet{efroni20online} extended them to real-time dynamic programming. Adaptive lookahead polices were studied by \citet{rosenberg23planning} and $K$-step lookahead policies were used in offline RL by \citet{sikchi22learning}. The main difference in our work is that we only use policy gradients without any additional model, such as that of a value or Q function.

\section{Conclusions}
\label{sec:conclusions}

Most popular RL algorithms in LLMs are variants of policy gradients. In this work,  we propose a natural approach for introducing policy structure in policy gradients. The key idea is to optimize for a subset of future rewards. Smaller subsets of rewards define simpler policies, which can be learned more reliably because their empirical gradients are more accurate. We interpret the policies, and propose both online and offline algorithms for learning them. Full planning, greedy, $K$-step lookahead, and segment policies are all instances of our framework. We evaluate the policies on conversation optimization problems and observe that different policies excel in different domains, highlighting the importance of our introduced structure. We note that the idea of optimizing for subsets of future rewards is general and can be straightforwardly applied to regularized policies \citep{todorov06linearlysolvable} and also GRPO \citep{shao24deepseekmath}.

\textbf{Limitations.} First, our work provides both theory and experiments that justify policy optimization for subsets of future rewards. However, we never formally show that the better performance in our experiments is due to a lower variance in policy gradient estimates. Second, the per-step reward in \eqref{eq:additive reward} is an important design choice that may have impact on empirical performance. We do not ablate it and instead rely on existing works to provide it (\cref{sec:experiments}). Finally, our evaluation is limited to conversational problems, although we experiment with four different problems and three LLMs.

\section*{Impact Statement}

This paper presents work whose goal is to advance the field of machine learning. There are many potential societal consequences of our work, none of which we feel must be specifically highlighted here.

\bibliographystyle{plainnat}
\bibliography{bib/brano,bib/main}

\clearpage
\onecolumn
\appendix

\section{Additional Technical Contributions}
\label{sec:additional technical contributions}

This section contains proofs of our main claims and supporting lemmas.

\subsection{Proof of \cref{lem:additive reward}}
\label{sec:additive reward proof}

The proof is based on telescoping,
\begin{align*}
  \sum_{t = 1}^n r_t
  = \sum_{t = 1}^n f(x, \tau_t) - f(x, \tau_{t - 1})
  = f(x, \tau_n) - f(x, \tau_0)
  = r(x, \tau_n)\,.
\end{align*}
The rewards $r_t$ are non-negative because $f$ is monotone.

\subsection{Gradient Concentration}
\label{sec:gradient concentration}

The main tool used in our analysis is Hoeffding's inequality \citep{raginsky12concentration,boucheron13concentration}, which we restate below for completeness.

\begin{theorem}
\label{thm:hoeffding} Let $X_1, \dots, X_n$ be $n$ independent random variables such that $X_i \in [- c_i, c_i]$ holds almost surely for some $c_i > 0$. Then for any $\varepsilon > 0$,
\begin{align*}
  \prob{|X_1 + \dots + X_n - \E{X_1 + \dots + X_n}| \geq \varepsilon}
  \leq 2 \exp\left[- \frac{\varepsilon^2}{2 \sum_{i = 1}^n c_i^2}\right]\,.
\end{align*}
\end{theorem}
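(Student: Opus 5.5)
This is the classical Hoeffding inequality for bounded independent summands, with the constant written in the sub-Gaussian form $2\sum_i c_i^2$ rather than the sharper $\sum_i (2c_i)^2/2$. I will prove it by the Chernoff (exponential moment) method combined with Hoeffding's lemma.

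\textbf{Step 1: reduce to one tail.} Set $S = \sum_{i=1}^n (X_i - \E{X_i})$. By a union bound,
\begin{align*}
  \prob{|S| \geq \varepsilon} \leq \prob{S \geq \varepsilon} + \prob{-S \geq \varepsilon}\,.
\end{align*}
The variables $-X_i$ satisfy the same hypotheses as the $X_i$, so it suffices to show $\prob{S \geq \varepsilon} \leq \exp[-\varepsilon^2 / (2\sum_i c_i^2)]$.

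\textbf{Step 2: Chernoff bound.} For any $\lambda > 0$, Markov's inequality applied to $e^{\lambda S}$ gives
\begin{align*}
  \prob{S \geq \varepsilon} \leq e^{-\lambda \varepsilon}\, \E{e^{\lambda S}}\,.
\end{align*}
By independence, the moment generating function factorizes:
\begin{align*}
  \E{e^{\lambda S}} = \prod_{i=1}^n \E{e^{\lambda (X_i - \E{X_i})}}\,.
\end{align*}

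\textbf{Step 3: Hoeffding's lemma.} This is the main step. I need that a centered variable $Y = X_i - \E{X_i}$ with $X_i \in [a,b]$ satisfies $\E{e^{\lambda Y}} \leq \exp(\lambda^2 (b-a)^2/8)$. With $b - a = 2c_i$ this becomes $\exp(\lambda^2 c_i^2/2)$.

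I will either cite this lemma as standard (it appears in the references the paper already cites) or prove it as follows. By convexity of $x \mapsto e^{\lambda x}$ on $[a,b]$, write $e^{\lambda x}$ as at most the chord between its endpoint values and take expectations. This bounds the log-MGF by a function $L(h) = -hp + \log(1 - p + p e^h)$, where $h = \lambda(b-a)$ and $p = -a/(b-a)$. One checks that $L(0) = 0$, $L'(0) = 0$, and $L''(h) \leq 1/4$, since $L''$ has the form $u(1-u)$. Taylor's theorem then gives $L(h) \leq h^2/8$.

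\textbf{Step 4: optimize $\lambda$.} Combining the previous steps,
\begin{align*}
  \prob{S \geq \varepsilon} \leq \exp\left(-\lambda\varepsilon + \frac{\lambda^2}{2}\sum_i c_i^2\right)\,.
\end{align*}
Choosing $\lambda = \varepsilon / \sum_i c_i^2$ yields $\exp[-\varepsilon^2 / (2\sum_i c_i^2)]$. Doubling this for the two tails from Step 1 gives the stated bound.
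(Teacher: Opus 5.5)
Your proof is correct, and it is the standard Chernoff-bound-plus-Hoeffding's-lemma argument found in the references the paper cites. The paper gives no proof of its own and restates the inequality only for completeness. There are two cosmetic slips.

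First, your opening aside is mistaken. Since $\sum_i (2c_i)^2/2 = 2\sum_i c_i^2$, the stated bound is exactly the classical Hoeffding constant for ranges of width $2c_i$, not a weakened form of it.

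Second, in Step 3 the parameter should be $p = (\E{X_i} - a)/(b - a)$, because the centered variable ranges over $[a - \E{X_i}, b - \E{X_i}]$. This reduces to your $-a/(b-a)$ only when $\E{X_i} = 0$. The width $b - a$, and hence your final bound, is unaffected.
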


We state our main theorem next.

\begin{theorem}
\label{thm:gradient concentration} Let $g(\cB)$ and $g'(\cB)$ be partial policy gradient estimators such that $\cS_t \subseteq \cS'_t$ holds for all $t \in [n]$, where $\cS_t$ and $\cS'_t$ are their action indices that affect the reward at step $t$. Then $g(\cB)$ concentrates in the $2$-norm faster than $g'(\cB)$.
\end{theorem}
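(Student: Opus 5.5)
We plan to make ``concentrates faster'' precise by proving a Hoeffding-type tail bound for each estimator and showing that the bound for $g(\cB)$ is never larger than the one for $g'(\cB)$. We work with the reward-side representation in \eqref{eq:action-reward-sum swap}. For trajectory $i$ we write
\[
  G_i = \sum_{t = 1}^n r_{i, t} \sum_{\ell \in \cS_t} \nabla \log \pi(a_{i, \ell} \mid x_i, \tau_{i, \ell - 1}; \theta)\,,
\]
and define $G'_i$ in the same way with $\cS'_t$. Then $g(\cB) = \frac{1}{B} \sum_{i = 1}^B G_i$ and $g'(\cB) = \frac{1}{B} \sum_{i = 1}^B G'_i$. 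The $G_i$ are i.i.d.\ across $i$, because $x_i$ and $\tau_{i, n}$ are drawn independently in Lines 6--7 of \cref{alg:meta pg}. They are, however, dependent across $t$ within a single trajectory. We therefore never split a trajectory into its steps: each $G_i$ is treated as one bounded random vector.

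\textbf{Boundedness.} We assume, as is standard, that $r_t \in [0, R]$ and that $\normw{\nabla \log \pi(a \mid x, \tau; \theta)}{2} \leq M$ for all arguments. The triangle inequality then gives $\normw{G_i}{2} \leq c := RM \sum_{t = 1}^n |\cS_t|$ and $\normw{G'_i}{2} \leq c' := RM \sum_{t = 1}^n |\cS'_t|$. Since $\cS_t \subseteq \cS'_t$ for every $t$, we get $|\cS_t| \leq |\cS'_t|$, and hence $c \leq c'$. This is the point where the subset hypothesis enters, and it is also what makes the comparison of the two estimators go through.

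\textbf{Concentration in the $2$-norm.} \cref{thm:hoeffding} is scalar, so we pass to coordinates. Let $d$ be the dimension of $\theta$. For each coordinate $k$, the variables $G_{i, k} / B$ are independent and lie in $[-c/B, c/B]$. \cref{thm:hoeffding} gives
\[
  \prob{|g_k(\cB) - \E{g_k(\cB)}| \geq \varepsilon / \sqrt{d}} \leq 2 \exp\left[- \frac{B \varepsilon^2}{2 d c^2}\right]\,.
\]
A union bound over the $d$ coordinates then yields $\prob{\normw{g(\cB) - \E{g(\cB)}}{2} \geq \varepsilon} \leq 2d \exp[- B \varepsilon^2 / (2 d c^2)]$. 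The same argument gives the analogous bound for $g'(\cB)$ with $c'$ in place of $c$. Because $c \leq c'$, the tail bound for $g(\cB)$ is pointwise at most that for $g'(\cB)$ for every $\varepsilon$ and $B$. Equivalently, at confidence $1 - \delta$ the radius $c \sqrt{2 d \log(2d / \delta) / B}$ is smaller for $g$. This is the sense in which $g(\cB)$ concentrates faster. If the inclusion is strict for some $t$, the inequality $c \leq c'$ is strict and so is the comparison.

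\textbf{Main obstacle.} The delicate step is making ``faster'' meaningful. Hoeffding only gives upper bounds, so the claim holds at the level of the bounds, that is, of the concentration radii, rather than for the true tail probabilities. We will state the theorem in that sense. Two refinements are possible: we could use a vector Hoeffding/McDiarmid bound on $\normw{\cdot}{2}$ to remove the factor $d$, or we could compare variances directly. The variance comparison fails in general, because adding score terms can cancel, so we do not plan to pursue it. We also need the uniform bound $M$ on the scores, which we will state explicitly as an assumption.
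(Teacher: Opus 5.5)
Your proposal is correct and follows the paper's proof. Like the paper, you treat each trajectory's contribution as one bounded vector, apply Hoeffding coordinatewise across the $B$ independent trajectories, and union-bound over the $d$ coordinates to get $2d\exp[-B\varepsilon^2/(2dc^2)]$, with the comparison holding only for these worst-case bounds. The one difference is cosmetic: you use the uniform constant $c = RM\sum_{t=1}^n|\cS_t|$, while the paper keeps per-entry bounds $c_{t,\ell}$ on $|r_{i,t}(\nabla\log\pi(a_{i,\ell}\mid x_i,\tau_{i,\ell-1};\theta))_j|$ and uses $\sum_{t=1}^n\sum_{\ell\in\cS_t}c_{t,\ell}$, which is likewise monotone in $\cS_t$.
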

\begin{proof}
We apply Hoeffding's inequality as follows. Let
\begin{align}
  f(\cB)
  = \sum_{i = 1}^B \sum_{t = 1}^n \sum_{\ell \in \cS_t}
  r_{i, t} \nabla \log \pi(a_{i, \ell} \mid x_i, \tau_{i, \ell - 1}; \theta)\,,
  \label{eq:f}
\end{align}
where $\cB$ is the batch in \cref{alg:meta pg} and $\cS_t$ is the set of action indices that affect the reward at step $t$ (\cref{sec:interpretation}). Note that $f(\cB)$ is a $d$-dimensional vector, where $d$ is the number of policy parameters $\theta$ with respect to which the gradient is taken. For any $j \in [d]$, let $f_j(\cB)$ be the $j$-th entry of $f(\cB)$.

Because of the identity in \eqref{eq:action-reward-sum swap}, we have $f(\cB) = B g(\cB)$. Using norm identities and the union bound, we get
\begin{align*}
  \prob{\normw{g(\cB) - \E{g(\cB)}}{2} \geq \varepsilon}
  & = \prob{\normw{f(\cB) - \E{f(\cB)}}{2} \geq \varepsilon B} \\
  & \leq \prob{\sqrt{d} \max_{j \in [d]} |f_j(\cB) - \E{f_j(\cB)}|
  \geq \varepsilon B} \\
  & \leq \sum_{j = 1}^d \prob{|f_j(\cB) - \E{f_j(\cB)}|
  \geq \frac{\varepsilon B}{\sqrt{d}}}\,.
\end{align*}
Now note that $f_j(\cB)$ is a sum of $B \sum_{t = 1}^n |\cS_t|$ random scalars but $\cS_t$ is fixed. Let $|r_{i, t} (\nabla \log \pi(a_{i, \ell} \mid x_i, \tau_{i, \ell - 1}; \theta))_j| \leq c_{t, \ell}$ hold almost surely and jointly for all $i \in [B]$ and $j \in [d]$. Then, since the vectors in \eqref{eq:f} can be arbitrarily correlated,
\begin{align*}
  \abs{\sum_{t = 1}^n \sum_{\ell \in \cS_t}
  r_{i, t} (\nabla \log \pi(a_{i, \ell} \mid x_i, \tau_{i, \ell - 1}; \theta))_j}
  \leq \sum_{t = 1}^n \sum_{\ell \in \cS_t} c_{t, \ell}
\end{align*}
holds almost surely and jointly for all $i \in [B]$ and $j \in [d]$. So by \cref{thm:hoeffding}, for any $j \in [d]$ and $\varepsilon > 0$,
\begin{align*}
  \prob{|f_j(\cB) - \E{f_j(\cB)}| \geq \frac{\varepsilon B}{\sqrt{d}}}
  & \leq 2 \exp\left[- \frac{\varepsilon^2 B^2}
  {2 d B \left(\sum_{t = 1}^n \sum_{\ell \in \cS_t} c_{t, \ell}\right)^2}\right] \\
  & \leq 2 \exp\left[- \frac{\varepsilon^2 B}
  {2 d \left(\sum_{t = 1}^n \sum_{\ell \in \cS_t} c_{t, \ell}\right)^2}\right]\,.
\end{align*}
Finally, we chain all inequalities and get
\begin{align*}
  \prob{\normw{g(\cB) - \E{g(\cB)}}{2} \geq \varepsilon}
  \leq 2 d \exp\left[- \frac{\varepsilon^2 B}
  {2 d \left(\sum_{t = 1}^n \sum_{\ell \in \cS_t} c_{t, \ell}\right)^2}\right]\,.
\end{align*}
The bound shows that the concentration rate decreases as $\sum_{t = 1}^n \sum_{\ell \in \cS_t} c_{t, \ell}$ increases. Therefore, for any two partial policy gradient estimators $g(\cB)$ and $g'(\cB)$ such that $\cS_t \subseteq \cS'_t$ for all $t \in [n]$, the former concentrates faster than the latter in the worst case.
\end{proof}

\subsection{Proof of \cref{lem:ppg}}
\label{sec:ppg proof}

Using rules of probability,
\begin{align*}
  \E{\sum_{t = 1}^n \left(\sum_{\ell \in \cR_t} r_\ell\right)
  \nabla \log \pi(a_t \mid x, \tau_{t - 1}; \theta)}
  & = \E{\sum_{t = 1}^n r_t \sum_{\ell \in \cS_t}
  \nabla \log \pi(a_\ell \mid x, \tau_{\ell - 1}; \theta)} \\
  & = \sum_{t = 1}^n \E{r_t \sum_{\ell \in \cS_t}
  \nabla \log \pi(a_\ell \mid x, \tau_{\ell - 1}; \theta)} \\
  & = \sum_{t = 1}^n \Erv{\bar{\cS}_t}
  {\condErv{\cS_t}{r_t \sum_{\ell \in \cS_t}
  \nabla \log \pi(a_\ell \mid x, \tau_{\ell - 1}; \theta)}{\bar{\cS}_t}} \\
  & = \sum_{t = 1}^n \Erv{\bar{\cS}_t}
  {\nabla \condErv{\cS_t}{r_t}{\bar{\cS}_t}} \\
  & = \E{\sum_{t = 1}^n \nabla \condErv{\cS_t}{r_t}{\bar{\cS}_t}}\,.
\end{align*}
We apply \eqref{eq:action-reward-sum swap} in the first step, the linearity of expectation in the second step, the tower rule in the third step, the score identity over random $\tau_{\cS_t}$ in the fourth step, and the tower rule in the last step. This completes the proof.

\subsection{Proof of \cref{lem:offline ppg}}
\label{sec:offline ppg proof}

Using the tower rule and that $\rho$ does not depend on the differentiated $\theta$, we get
\begin{align*}
  \Erho{}{\sum_{t = 1}^n \left(\sum_{\ell \in \cR_t} r_\ell\right)
  \nabla \log \pi(a_t \mid x, \tau_{t - 1}; \theta)}
  & = \Erho{}{\sum_{t = 1}^n r_t \sum_{\ell \in \cS_t}
  \nabla \log \pi(a_\ell \mid x, \tau_{\ell - 1}; \theta)} \\
  & = \sum_{t = 1}^n \Erho{}{r_t \sum_{\ell \in \cS_t}
  \nabla \log \pi(a_\ell \mid x, \tau_{\ell - 1}; \theta)} \\
  & = \sum_{t = 1}^n \Erho{\bar{\cS}_t}
  {\Erho{\cS_t}{r_t \sum_{\ell \in \cS_t}
  \nabla \log \pi(a_\ell \mid x, \tau_{\ell - 1}; \theta) \,\middle|\, \bar{\cS}_t}} \\
  & = \sum_{t = 1}^n \Erho{\bar{\cS}_t}
  {\nabla \Erho{\cS_t}{r_t \sum_{\ell \in \cS_t}
  \log \pi(a_\ell, y_\ell \mid x, \tau_{\ell - 1}; \theta) \,\middle|\, \bar{\cS}_t}} \\
  & = \Erho{}{\sum_{t = 1}^n \nabla \Erho{\cS_t}{r_t \sum_{\ell \in \cS_t}
  \log \pi(a_\ell, y_\ell \mid x, \tau_{\ell - 1}; \theta) \,\middle|\, \bar{\cS}_t}}\,.
\end{align*}
The inequality
\begin{align*}
  \Erho{\cS_t}{r_t \sum_{\ell \in S_t}
  \log \pi(a_t, y_t \mid x, \tau_{t - 1}; \theta) \,\middle|\, \bar{\cS}_t}
  \leq \condErv{\cS_t}{r_t}{\bar{\cS}_t}
\end{align*}
follows from Lemma 1 of \citet{mukherjee25offline}. The only difference is that the expectation is taken over steps $\cS_t$ of the trajectory conditioned on $\bar{\cS}_t = [t] \setminus \cS_t$, which does not change the proof.

Similarly, the bound
\begin{align*}
  \left|\Erho{\cS_t}{r_t \sum_{\ell \in S_t}
  \log \pi(a_t, y_t \mid x, \tau_{t - 1}; \theta) \,\middle|\, \bar{\cS}_t} -
  \condErv{\cS_t}{r_t}{\bar{\cS}_t}\right|
  \leq C(\pi, \rho)\,,
\end{align*}
follows from Lemma 3 of \citet{mukherjee25offline}, where $C(\pi, \rho)$ is a constant depending on the propensity ratios of $\pi$ and $\rho$, and their logarithms. The only difference is that the expectation is taken over steps $\cS_t$ of the trajectory conditioned on $\bar{\cS}_t = [t] \setminus \cS_t$, which does not change the proof.

\subsection{$K$-Segment Policy Gradient}
\label{sec:segment pg}

Let $\cC_1, \dots, \cC_K \subset [n]$ be sets of consecutive step indices representing segments, which are mutually exclusive and exhaustive. Let $m_k = \min \cC_k$ and $M_k = \max \cC_k$ be the minimum and maximum indices within segment $C_k$. Let $k_t$ be the segment of step $t$. Then $\cS_t = [t] \setminus [m_{k_t} - 1]$ are the future rewards in segment $C_{k_t}$ starting from step $t$. By \cref{lem:ppg}, where $\cS_t = [t] \setminus [m_{k_t} - 1]$, the gradient is
\begin{align}
  & \E{\sum_{k = 1}^K \sum_{t \in \cC_k}
  \nabla \condErv{[t] \setminus [m_{k_t} - 1]}{r_t}{[m_{k_t} - 1]}}
  \label{eq:segment pg} \\
  & = \E{\sum_{k = 1}^K \nabla \condE{\sum_{t \in \cC_k} r_t}{[m_{k_t} - 1]}}\,.
  \nonumber
\end{align}
In plain English, all actions up to step $t$ within its segment $C_{k_t}$ affect the reward at step $t$; and the gradient is a sum of gradients of the corresponding expected rewards. We refer to \ppg with this $\cR_t$ as \ppgseg. This policy class was proposed in LLMs by \citet{guo25segment}.

This policy is simpler than that in \cref{sec:full pg}, where the reward is affected by each previous action, but also more complex than that in \cref{sec:greedy pg}, where the reward is affected only by the immediate action. As in \cref{sec:lookahead pg}, the policy naturally interpolates between these two extremes. Since
\begin{align*}
  \set{t}
  \subseteq \cS_t
  = [t] \setminus [m_{k_t} - 1]
  \subseteq [t]
\end{align*}
holds for any segments $\cC_1, \dots, \cC_K$, we have by \cref{thm:gradient concentration} that the empirical gradient in \ppgseg concentrates faster than that in \pg but slower than that in \ppgone. The rate depends on the size of the segments.

\section{Scaling Laws for Partial Policy Gradients}
\label{app:scaling_laws}

\pgfplotsset{compat=1.18}
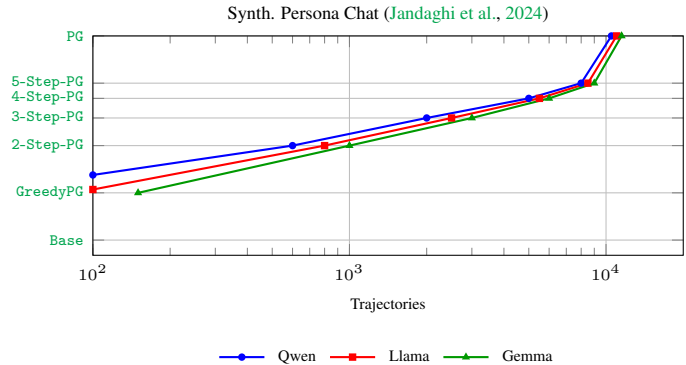
\begin{figure*}[htbp]
    \centering
    % Define custom colors for consistency across all plots
    \definecolor{baseline}{RGB}{31, 119, 180}
    \definecolor{totalreward}{RGB}{23, 190, 207}
    \definecolor{perstep}{RGB}{255, 127, 14}
    \definecolor{k2step}{RGB}{44, 160, 44}
    \definecolor{k3step}{RGB}{214, 39, 40}
    \definecolor{k4step}{RGB}{148, 103, 189}
    \definecolor{k5step}{RGB}{140, 86, 75}

    % Common style for compact subplots
    \pgfplotsset{
        compactplot/.style={
            width=1.1\linewidth, 
            height=4.5cm,
            grid=major,
            label style={font=\tiny},
            tick label style={font=\tiny},
            title style={font=\scriptsize, yshift=-1ex},
            legend style={font=\tiny},
        }
    }
    % --- GROUP (b): Scaling (1 Chart) ---
    \begin{minipage}{0.5\textwidth}
        \centering
        \begin{tikzpicture}
            \begin{axis}[
                compactplot,
                title={Synth. Persona Chat \citep{jandaghi2024faithful}},
                xmode=log, ymode=log,
                xlabel={Trajectories},
                ytick={0.5, 1, 2, 3, 4, 5, 10},
                yticklabels={\base, \ppgone, \ppgk{2}, \ppgk{3}, \ppgk{4}, \ppgk{5}, \pg},
                xmin=100, xmax=20000, ymin=0.4, ymax=10,
                legend columns=-1,
                legend style={at={(0.5,-0.38)}, anchor=north, draw=none, column sep=0.1cm, font=\tiny}
            ]
            % Inflection point focused between 5000 and 6000
            \addplot[mark=*, blue, mark size=1pt, thick] coordinates {
                (50, 1) (100, 1.3) (600, 2) (2000, 3) (5000, 4) (8000, 5) (10500, 10)
            };
            \addlegendentry{Qwen}
            \addplot[mark=square*, red, mark size=1pt, thick] coordinates {
                (80, 1) (100, 1.05) (800, 2) (2500, 3) (5500, 4) (8500, 5) (11000, 10)
            };
            \addlegendentry{Llama}
            \addplot[mark=triangle*, green!60!black, mark size=1pt, thick] coordinates {
                (150, 1) (1000, 2) (3000, 3) (6000, 4) (9000, 5) (11500, 10)
            };
            \addlegendentry{Gemma}
            \end{axis}
        \end{tikzpicture}
        % \subcaption{\label{fig:stat_efficiency_b} Scaling Laws for \ppg}
    \end{minipage}

    \caption{\small{Scaling Laws for \ppg: Across different LLMs (Llama-3.1-8B-Instruct, Qwen3-8B, and Gemma-7B-it), optimal value of lookahead $K$ in \ppgk{K} scales with available training trajectories.}}
    \label{fig:stat_efficiency2}
\end{figure*}

\noindent\textbf{Motivation.} In \cref{fig:stat_efficiency}, we demonstrated that simpler policies (\ppgone) exhibit superior statistical efficiency in low-data regimes compared to complex policies (\pg), with the performance gap narrowing as training data increases. This raises a practical question: \emph{how does the optimal lookahead horizon K scale with available training data across different model architectures?} Understanding this scaling relationship is critical for practitioners who must choose appropriate credit assignment horizons given their data budget constraints.

\noindent\textbf{Experimental Design.} To investigate this scaling behavior systematically, we replicate the statistical efficiency experiment from the main paper using a complementary dataset: the Synthetic Persona Chat dataset \citep{jandaghi2024faithful}, which contains generic and mixed-domain personas distinct from our primary Education, Therapy, and Chatting domains. We sample 20,000 trajectories from this dataset and train all policy variants (\base, \ppgone, \ppgk{2}, \ppgk{3}, \ppgk{4}, \ppgk{5}, \pg) with varying amounts of training data, ranging from 100 to 20,000 trajectories. For each data size, we measure which policy achieves the best persona consistency on a held-out test set. We repeat this procedure across three different LLM architectures—Llama-3.1-8B-Instruct, Qwen-2.5-7B, and Gemma-7B-it—to assess whether the scaling laws are architecture-dependent or represent a general phenomenon.

\noindent\textbf{Observation.} \cref{fig:stat_efficiency2} shows the optimal policy (represented by its lookahead horizon K on the y-axis) as a function of available training trajectories (x-axis) for each model architecture. The key pattern is a consistent ``staircase'' structure: at low data regimes (50-1000 trajectories), \base~or \ppgone~are optimal; as data increases to intermediate regimes (1000-5000 trajectories), \ppgk{K} with K=2,3,4 successively become optimal; and at high data regimes ($>$10,000 trajectories), full planning (\pg) emerges as the best policy. Remarkably, this pattern holds across all three architectures, with only minor differences in the inflection points where transitions occur.

\noindent\textbf{Interpretation.} The results in \cref{fig:stat_efficiency2} demonstrate that the scaling behavior of $K$-step lookahead policies is remarkably consistent across different model architectures. We observe a characteristic trend where the optimal complexity of the policy, represented by the lookahead $K$, is a direct function of the available data volume. In low-data regimes, complex policies like full planning do not work effectively, and simpler objectives like \ppgone or low $K$ are preferred to achieve stable performance. However, as the number of trajectories increases beyond a critical threshold, higher-order policies become more effective, and full planning \pg eventually emerges as the best policy. This cross-architecture consistency validates our theoretical framework (\cref{thm:gradient concentration}): while simpler partial gradients concentrate faster, they optimize suboptimal policies, whereas complex policies require sufficient data to overcome higher variance in gradient estimation. These results establish a practical design principle for policy optimization in RL for LLMs: practitioners should calibrate policy complexity to available data, initializing with greedy policies in data-scarce regimes and progressively expanding the credit assignment horizon as data accumulates. 

% \cref{fig:stat_efficiency_b} highlights the scaling behavior of $K$-step lookahead policies across different model architectures, including \texttt{Qwen} , \texttt{Llama}, and Gemma-7B-Instruct models. We observe a characteristic trend where the optimal complexity of the policy, represented by the lookahead $K$, is a direct function of the available data volume. In low-data regimes, complex policies like full planning do not work effectively, and simpler objectives like \ppgone or low $K$ are preferred to achieve stable performance. However, as the number of trajectories increases beyond a critical threshold, higher-order policies become more effective, and full planning \pg eventually emerges as the best policy. This cross-architecture consistency validates our theoretical framework (\cref{thm:gradient concentration}): while simpler partial gradients concentrate faster, they optimize suboptimal policies, whereas complex policies require sufficient data to overcome higher variance in gradient estimation. These results establish a practical design principle for policy optimization in RL for LLMs: practitioners should calibrate policy complexity to available data, initializing with greedy policies in data-scarce regimes and progressively expanding the credit assignment horizon as data accumulates. 

\section{Performance comparison of policy gradients across trajectory lengths}
\label{app:results_conversation_length}

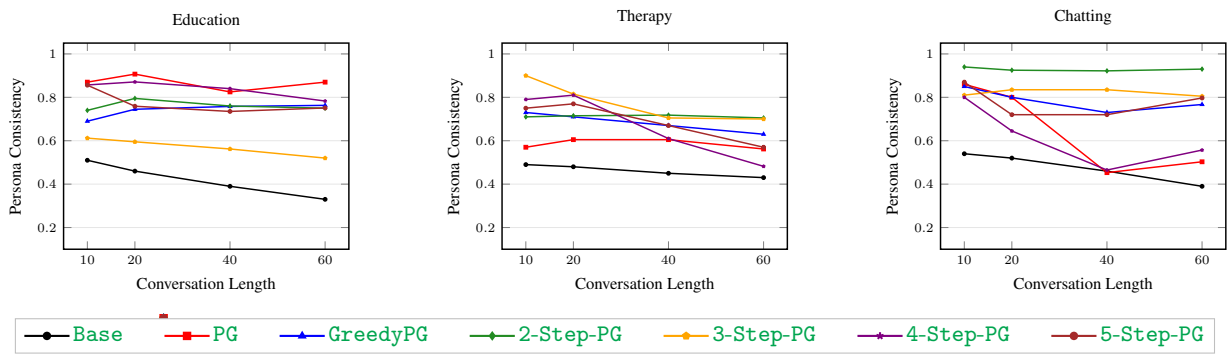
\begin{figure*}[htbp]
\centering

% === FIGURE 1: LLAMA 3.1 (ymin set to 0.1 to accommodate low data points) ===
\begin{minipage}[b]{0.32\textwidth}
\centering
\begin{tikzpicture}[scale=0.7]
\begin{axis}[
    title={Education },
    xlabel={Conversation Length},
    ylabel={Persona Consistency},
    xmin=5, xmax=65,
    ymin=0.1, ymax=1.05,
    xtick={10,20,40,60},
    ytick={0.2,0.4,0.6,0.8,1.0},
    ymajorgrids=true,
    xmajorgrids=false,
    grid style={gray!20},
    width=7cm,
    height=5.5cm,
    tick label style={font=\scriptsize},
    label style={font=\footnotesize},
    title style={font=\small},
    line width=0.8pt,
    mark size=1.5pt,
    cycle list={},
]
\addplot[model1, solid, mark=*, mark options={scale=0.6}] coordinates {(10,0.51) (20,0.46) (40,0.39) (60,0.33)};
\addplot[model2, solid, mark=square*, mark options={scale=0.6}] coordinates {(10,0.87) (20,0.907) (40,0.825) (60,0.87)};
\addplot[model3, solid, mark=triangle*, mark options={scale=0.6}] coordinates {(10,0.69) (20,0.745) (40,0.758) (60,0.763)};
\addplot[model4, solid, mark=diamond*, mark options={scale=0.6}] coordinates {(10,0.74) (20,0.795) (40,0.76) (60,0.75)};
\addplot[model5, solid, mark=pentagon*, mark options={scale=0.6}] coordinates {(10,0.612) (20,0.595) (40,0.562) (60,0.52)};
\addplot[model6, solid, mark=star, mark options={scale=0.7}] coordinates {(10,0.857) (20,0.871) (40,0.840) (60,0.783)};
\addplot[model7, solid, mark=otimes*, mark options={scale=0.6}] coordinates {(10,0.856) (20,0.759) (40,0.735) (60,0.751)};
\end{axis}
\end{tikzpicture}
\end{minipage}
\hfill
\begin{minipage}[b]{0.32\textwidth}
\centering
\begin{tikzpicture}[scale=0.7]
\begin{axis}[
    title={Therapy },
    xlabel={Conversation Length},
    ylabel={Persona Consistency},
    xmin=5, xmax=65,
    ymin=0.1, ymax=1.05,
    xtick={10,20,40,60},
    ytick={0.2,0.4,0.6,0.8,1.0},
    ymajorgrids=true,
    xmajorgrids=false,
    grid style={gray!20},
    width=7cm,
    height=5.5cm,
    tick label style={font=\scriptsize},
    label style={font=\footnotesize},
    title style={font=\small},
    line width=0.8pt,
    mark size=1.5pt,
    cycle list={},
]
\addplot[model1, solid, mark=*, mark options={scale=0.6}] coordinates {(10,0.49) (20,0.48) (40,0.45) (60,0.43)};
\addplot[model2, solid, mark=square*, mark options={scale=0.6}] coordinates {(10,0.57) (20,0.605) (40,0.605) (60,0.562)};
\addplot[model3, solid, mark=triangle*, mark options={scale=0.6}] coordinates {(10,0.73) (20,0.71) (40,0.67) (60,0.63)};
\addplot[model4, solid, mark=diamond*, mark options={scale=0.6}] coordinates {(10,0.71) (20,0.715) (40,0.718) (60,0.705)};
\addplot[model5, solid, mark=pentagon*, mark options={scale=0.6}] coordinates {(10,0.9) (20,0.815) (40,0.705) (60,0.7)};
\addplot[model6, solid, mark=star, mark options={scale=0.7}] coordinates {(10,0.79) (20,0.81) (40,0.61) (60,0.482)};
\addplot[model7, solid, mark=otimes*, mark options={scale=0.6}] coordinates {(10,0.75) (20,0.77) (40,0.67) (60,0.57)};
\end{axis}
\end{tikzpicture}
\end{minipage}
\hfill
\begin{minipage}[b]{0.32\textwidth}
\centering
\begin{tikzpicture}[scale=0.7]
\begin{axis}[
    title={Chatting},
    xlabel={Conversation Length},
    ylabel={Persona Consistency},
    xmin=5, xmax=65,
    ymin=0.1, ymax=1.05,
    xtick={10,20,40,60},
    ytick={0.2,0.4,0.6,0.8,1.0},
    ymajorgrids=true,
    xmajorgrids=false,
    grid style={gray!20},
    width=7cm,
    height=5.5cm,
    tick label style={font=\scriptsize},
    label style={font=\footnotesize},
    title style={font=\small},
    line width=0.8pt,
    mark size=1.5pt,
    cycle list={},
]
\addplot[model1, solid, mark=*, mark options={scale=0.6}] coordinates {(10,0.54) (20,0.52) (40,0.46) (60,0.39)};
\addplot[model2, solid, mark=square*, mark options={scale=0.6}] coordinates {(10,0.86) (20,0.80) (40,0.453) (60,0.503)};
\addplot[model3, solid, mark=triangle*, mark options={scale=0.6}] coordinates {(10,0.85) (20,0.8) (40,0.73) (60,0.767)};
\addplot[model4, solid, mark=diamond*, mark options={scale=0.6}] coordinates {(10,0.94) (20,0.925) (40,0.922) (60,0.93)};
\addplot[model5, solid, mark=pentagon*, mark options={scale=0.6}] coordinates {(10,0.81) (20,0.835) (40,0.835) (60,0.805)};
\addplot[model6, solid, mark=star, mark options={scale=0.7}] coordinates {(10,0.8) (20,0.645) (40,0.465) (60,0.557)};
\addplot[model7, solid, mark=otimes*, mark options={scale=0.6}] coordinates {(10,0.87) (20,0.72) (40,0.72) (60,0.797)};
\end{axis}
\end{tikzpicture}
\end{minipage}

\begin{center}
\begin{tikzpicture}
\begin{axis}[
    hide axis,
    scale only axis,          % 1. Add this to ignore label/margin math
    height=5pt,
    xmin=0, xmax=1,
    ymin=0, ymax=1,
    width=12cm,
    legend columns=7,
    legend style={
        at={(0.5,0)},
        anchor=north,
        draw=gray!50,
        font=\footnotesize,
        /tikz/every even column/.append style={column sep=0.5cm},
        legend cell align=left,
        nodes={inner sep=1pt}
    },
]
\addplot[model1, solid, thick, mark=*, mark options={scale=0.6}] coordinates {(0,0)};
\addplot[model2, solid, thick, mark=square*, mark options={scale=0.6}] coordinates {(0,0)};
\addplot[model3, solid, thick, mark=triangle*, mark options={scale=0.6}] coordinates {(0,0)};
\addplot[model4, solid, thick, mark=diamond*, mark options={scale=0.6}] coordinates {(0,0)};
\addplot[model5, solid, thick, mark=pentagon*, mark options={scale=0.6}] coordinates {(0,0)};
\addplot[model6, solid, thick, mark=star, mark options={scale=0.7}] coordinates {(0,0)};
\addplot[model7, solid, thick, mark=otimes*, mark options={scale=0.6}] coordinates {(0,0)};

\legend{\base, \pg, \ppgone, \ppgk{2}, \ppgk{3}, \ppgk{4}, \ppgk{5}}
\end{axis}
\end{tikzpicture}
\end{center}

\caption{Persona consistency of policy gradient methods vs trajectory length in education, therapy, and chatting domains for Llama3.1-8B-Instruct model. For each method, we report mean PC of all trajectories with $t$ steps, where $t=\{10, 20, 40, 60\}$.}
\label{fig:performance_vs_length_6plots}
\end{figure*}

% === FIGURE 2: QWEN (ymin set to 0.3 as requested) ===
\begin{figure*}[htbp]
\centering

\begin{minipage}[b]{0.32\textwidth}
\centering
\begin{tikzpicture}[scale=0.7]
\begin{axis}[
    title={Education },
    xlabel={Conversation Length},
    ylabel={Persona Consistency},
    xmin=5, xmax=65,
    ymin=0.3, ymax=1.05,
    xtick={10,20,40,60},
    ytick={0.3,0.4,0.6,0.8,1.0},
    ymajorgrids=true,
    xmajorgrids=false,
    grid style={gray!20},
    width=7cm,
    height=5.5cm,
    tick label style={font=\scriptsize},
    label style={font=\footnotesize},
    title style={font=\small},
    line width=0.8pt,
    mark size=1.5pt,
    cycle list={},
]
\addplot[model1, solid, mark=*, mark options={scale=0.6}] coordinates {(10,0.86) (20,0.845) (40,0.713) (60,0.78)};
\addplot[model2, solid, mark=square*, mark options={scale=0.6}] coordinates {(10,0.98) (20,0.98) (40,0.975) (60,0.97)};
\addplot[model3, solid, mark=triangle*, mark options={scale=0.6}] coordinates {(10,0.64) (20,0.65) (40,0.573) (60,0.63)};
\addplot[model4, solid, mark=diamond*, mark options={scale=0.6}] coordinates {(10,0.98) (20,0.96) (40,0.975) (60,0.93)};
\addplot[model5, solid, mark=pentagon*, mark options={scale=0.6}] coordinates {(10,0.97) (20,0.98) (40,0.927) (60,0.913)};
\addplot[model6, solid, mark=star, mark options={scale=0.7}] coordinates {(10,0.97) (20,0.915) (40,0.895) (60,0.942)};
\addplot[model7, solid, mark=otimes*, mark options={scale=0.6}] coordinates {(10,0.93) (20,0.89) (40,0.843) (60,0.863)};
\end{axis}
\end{tikzpicture}
\end{minipage}
\hfill
\begin{minipage}[b]{0.32\textwidth}
\centering
\begin{tikzpicture}[scale=0.7]
\begin{axis}[
    title={Therapy },
    xlabel={Conversation Length},
    ylabel={Persona Consistency},
    xmin=5, xmax=65,
    ymin=0.3, ymax=1.05,
    xtick={10,20,40,60},
    ytick={0.3,0.4,0.6,0.8,1.0},
    ymajorgrids=true,
    xmajorgrids=false,
    grid style={gray!20},
    width=7cm,
    height=5.5cm,
    tick label style={font=\scriptsize},
    label style={font=\footnotesize},
    title style={font=\small},
    line width=0.8pt,
    mark size=1.5pt,
    cycle list={},
]
\addplot[model1, solid, mark=*, mark options={scale=0.6}] coordinates {(10,0.63) (20,0.575) (40,0.555) (60,0.485)};
\addplot[model2, solid, mark=square*, mark options={scale=0.6}] coordinates {(10,0.79) (20,0.79) (40,0.79) (60,0.77)};
\addplot[model3, solid, mark=triangle*, mark options={scale=0.6}] coordinates {(10,0.9) (20,0.86) (40,0.848) (60,0.752)};
\addplot[model4, solid, mark=diamond*, mark options={scale=0.6}] coordinates {(10,0.91) (20,0.885) (40,0.805) (60,0.816)};
\addplot[model5, solid, mark=pentagon*, mark options={scale=0.6}] coordinates {(10,0.94) (20,0.96) (40,0.922) (60,0.927)};
\addplot[model6, solid, mark=star, mark options={scale=0.7}] coordinates {(10,0.76) (20,0.905) (40,0.785) (60,0.89)};
\addplot[model7, solid, mark=otimes*, mark options={scale=0.6}] coordinates {(10,0.87) (20,0.925) (40,0.802) (60,0.88)};
\end{axis}
\end{tikzpicture}
\end{minipage}
\hfill
\begin{minipage}[b]{0.32\textwidth}
\centering
\begin{tikzpicture}[scale=0.7]
\begin{axis}[
    title={Chatting },
    xlabel={Conversation Length},
    ylabel={Persona Consistency},
    xmin=5, xmax=65,
    ymin=0.3, ymax=1.05,
    xtick={10,20,40,60},
    ytick={0.3,0.4,0.6,0.8,1.0},
    ymajorgrids=true,
    xmajorgrids=false,
    grid style={gray!20},
    width=7cm,
    height=5.5cm,
    tick label style={font=\scriptsize},
    label style={font=\footnotesize},
    title style={font=\small},
    line width=0.8pt,
    mark size=1.5pt,
    cycle list={},
]
\addplot[model1, solid, mark=*, mark options={scale=0.6}] coordinates {(10,0.79) (20,0.77) (40,0.75) (60,0.73)};
\addplot[model2, solid, mark=square*, mark options={scale=0.6}] coordinates {(10,0.93) (20,0.77) (40,0.912) (60,0.855)};
\addplot[model3, solid, mark=triangle*, mark options={scale=0.6}] coordinates {(10,0.78) (20,0.69) (40,0.785) (60,0.792)};
\addplot[model4, solid, mark=diamond*, mark options={scale=0.6}] coordinates {(10,0.69) (20,0.815) (40,0.835) (60,0.788)};
\addplot[model5, solid, mark=pentagon*, mark options={scale=0.6}] coordinates {(10,0.97) (20,0.995) (40,1.0) (60,0.973)};
\addplot[model6, solid, mark=star, mark options={scale=0.7}] coordinates {(10,0.79) (20,0.885) (40,0.848) (60,0.802)};
\addplot[model7, solid, mark=otimes*, mark options={scale=0.6}] coordinates {(10,0.97) (20,0.965) (40,0.958) (60,0.938)};
\end{axis}
\end{tikzpicture}
\end{minipage}

\begin{center}
\begin{tikzpicture}
\begin{axis}[
    hide axis,
    scale only axis,          % 1. Add this to ignore label/margin math
    height=5pt,
    xmin=0, xmax=1,
    ymin=0, ymax=1,
    width=12cm,
    legend columns=7,
    legend style={
        at={(0.5,0)},
        anchor=north,
        draw=gray!50,
        font=\footnotesize,
        /tikz/every even column/.append style={column sep=0.5cm},
        legend cell align=left,
        nodes={inner sep=1pt}
    },
]
\addplot[model1, solid, thick, mark=*, mark options={scale=0.6}] coordinates {(0,0)};
\addplot[model2, solid, thick, mark=square*, mark options={scale=0.6}] coordinates {(0,0)};
\addplot[model3, solid, thick, mark=triangle*, mark options={scale=0.6}] coordinates {(0,0)};
\addplot[model4, solid, thick, mark=diamond*, mark options={scale=0.6}] coordinates {(0,0)};
\addplot[model5, solid, thick, mark=pentagon*, mark options={scale=0.6}] coordinates {(0,0)};
\addplot[model6, solid, thick, mark=star, mark options={scale=0.7}] coordinates {(0,0)};
\addplot[model7, solid, thick, mark=otimes*, mark options={scale=0.6}] coordinates {(0,0)};

\legend{\base, \pg, \ppgone, \ppgk{2}, \ppgk{3}, \ppgk{4}, \ppgk{5}}
\end{axis}
\end{tikzpicture}
\end{center}

\caption{Persona consistency of policy gradient methods vs trajectory length in education, therapy, and chatting domains for Qwen3-8B model. For each method, we report mean PC of all trajectories with $t$ steps, where $t=\{10, 20, 40, 60\}$.}
\label{fig:performance_vs_length_6plots_qwen}
\end{figure*}
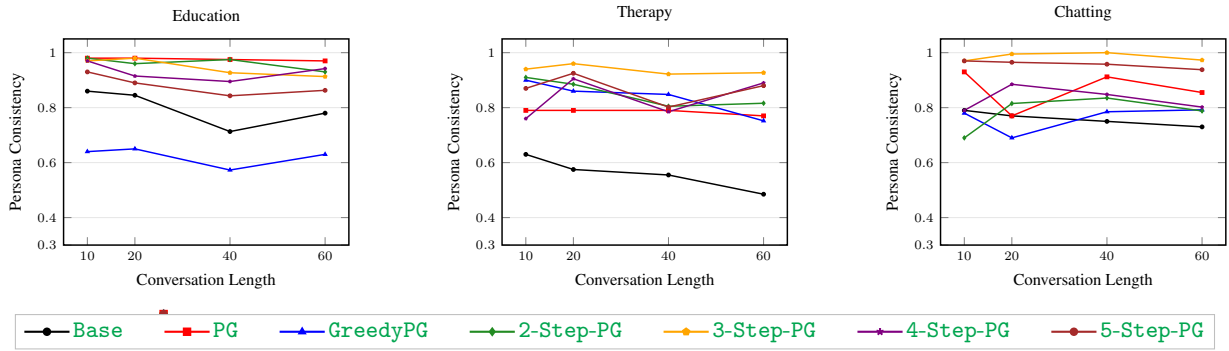

We provide a detailed analysis of how different policy gradient methods perform as a function of conversation length. While \cref{education,therapy,chatting} report aggregate persona consistency (PC) across all trajectory lengths in the test set, here we disaggregate the results to examine performance trends as conversations become longer. This analysis reveals important insights about the \emph{stability} of different credit assignment strategies over extended interactions.

\noindent\textbf{Motivation.} A critical question for persona-consistent LLMs is whether they can maintain consistency not just on average, but specifically in \emph{long conversations} where persona drift is most problematic. Prior work has shown that base LLMs exhibit systematic degradation in longer dialogues \citep{nie2021like,abdulhai2025consistently}, but it remains unclear how different credit assignment horizons affect this degradation pattern.

\noindent\textbf{Measurement.} To investigate this, we partition our test trajectories by their length $t \in \{10, 20, 40, 60\}$ steps and compute the mean persona consistency scores separately for each length bin. We do this analysis only for the rollout evaluation setup. This measurement reveals whether policies maintain their performance advantages (or disadvantages) consistently across conversation lengths, or whether optimal policies differ for short vs. long conversations. \cref{fig:performance_vs_length_6plots,fig:performance_vs_length_6plots_qwen} present PC as a function of trajectory length for all compared policies across Education, Therapy, and Chatting domains, using Llama-3.1-8B-Instruct and Qwen-2.5-7B models respectively. 

\noindent\textbf{Baseline Degradation.} The \base~policy exhibits monotonic degradation with conversation length across all three domains and both models. This validates the core motivation of our work: without proper temporal credit assignment, LLMs systematically lose persona consistency as conversations extend.

\noindent\textbf{Domain-Specific Degradation Patterns.} We observe that different domains exhibit distinct degradation characteristics. In the Education domain (\cref{fig:performance_vs_length_6plots}, left), \base~shows severe degradation, confirming that maintaining student personas over extended tutoring sessions is challenging without appropriate credit assignment. In contrast, \pg~maintains remarkably stable performance validating that full planning is beneficial for education where long-term pedagogical strategies matter. In the Therapy domain, we observe that \ppgk{3}~achieves the most stable performance across conversation lengths for both models, showing that intermediate lookahead horizons balance consistency better than full planning in emotionally complex therapeutic contexts. In the Chatting domain, \ppgk{2} and \ppgk{3} demonstrate exceptional stability, confirming that mid-step lookahead is particularly well-suited for casual conversations where planning too far ahead (\pg, \ppgk{5}) can actually degrade performance, especially in longer conversations.

\noindent\textbf{Length-Dependent Policy Rankings.} Importantly, we observe that optimal policy rankings are \emph{robust across trajectory lengths} within each domain. For example, in Education, \pg~consistently outperforms all other policies at every measured length (10, 20, 40, 60 steps). Similarly, in Chatting, \ppgk{2}~maintains its dominance across all lengths. This robustness suggests that the domain-specific optimal credit assignment horizons identified in \cref{sec:performance_results} are not artifacts of specific conversation lengths but reflect fundamental characteristics of each domain's temporal structure. 

\noindent\textbf{Cross-Model Consistency.} Comparing \cref{fig:performance_vs_length_6plots} (Llama) and \cref{fig:performance_vs_length_6plots_qwen} (Qwen), we observe that the relative policy rankings and degradation patterns are similar across architectures. For example, both models show \pg~dominance in Education, \ppgk{3}~dominance in Therapy, and \ppgk{2}~dominance in Chatting, with similar stability patterns across trajectory lengths. This cross-model consistency strengthens confidence that our findings about optimal credit assignment horizons reflect domain-specific requirements rather than model-specific artifacts.
\section{Stability of policy gradients for Llama3.1-8B model}
\label{app:variance_llama}

% Define colors

\definecolor{baseline}{RGB}{31, 119, 180}

\definecolor{totalreward}{RGB}{23, 190, 207}

\definecolor{perstep}{RGB}{255, 127, 14}

\definecolor{k2step}{RGB}{44, 160, 44}

\definecolor{k3step}{RGB}{214, 39, 40}

\definecolor{k4step}{RGB}{148, 103, 189}

\definecolor{k5step}{RGB}{140, 86, 75}

\begin{figure*}[h]

\centering

\begin{minipage}{0.31\textwidth}\centering

\begin{tikzpicture}

\begin{axis}[width=\linewidth, height=0.6\linewidth, view={60}{30}, xlabel={Turn}, ylabel={Method}, zlabel={Scaled Residuals}, xmin=0, xmax=60, ymin=0, ymax=6, zmin=-1.0, zmax=1.0, ytick={0,1,2,3,4,5,6}, yticklabels={\base,\pg,K=1,K=2,K=3,K=4,K=5}, yticklabel style={font=\tiny}, xticklabel style={font=\tiny}, zticklabel style={font=\tiny}, label style={font=\scriptsize}, title style={font=\scriptsize}, title={Education}, grid=major, grid style={dashed, gray!30}]

\addplot3[color=baseline, mark=o, mark size=1pt] coordinates {(1,0,0.23) (5,0,0.58) (10,0,0.23) (15,0,0.25) (20,0,-0.33) (25,0,-0.25) (30,0,-0.13) (35,0,-0.25) (40,0,-0.38) (45,0,-0.20) (50,0,-0.20) (55,0,-0.33) (59,0,-0.20)};

\addplot3[color=totalreward, mark=x, mark size=1pt] coordinates {(1,1,1.23) (5,1,1.18) (10,1,1.03) (15,1,0.93) (20,1,0.68) (25,1,0.78) (30,1,0.75) (35,1,0.88) (40,1,0.98) (45,1,0.73) (50,1,0.75) (55,1,0.80) (59,1,0.83)};

\addplot3[color=perstep, mark=square, mark size=1pt] coordinates {(1,2,0.33) (5,2,-0.20) (10,2,-0.28) (15,2,-0.53) (20,2,-0.23) (25,2,0.53) (30,2,0.33) (35,2,0.03) (40,2,-0.23) (45,2,0.78) (50,2,0.03) (55,2,-0.23) (59,2,-0.10)};

\addplot3[color=k2step, mark=triangle, mark size=1pt] coordinates {(1,3,0.68) (5,3,0.43) (10,3,0.35) (15,3,0.48) (20,3,0.18) (25,3,-0.25) (30,3,-0.20) (35,3,-0.25) (40,3,0.05) (45,3,-0.08) (50,3,0.05) (55,3,0.05) (59,3,-0.08)};

\addplot3[color=k3step, mark=diamond, mark size=1pt] coordinates {(1,4,-0.13) (5,4,-0.28) (10,4,-0.15) (15,4,0.00) (20,4,-0.05) (25,4,-0.10) (30,4,-0.10) (35,4,-0.10) (40,4,-0.05) (45,4,-0.05) (50,4,-0.05) (55,4,0.08) (59,4,0.08)};

\addplot3[color=k4step, mark=pentagon, mark size=1pt] coordinates {(1,5,-0.65) (5,5,-0.23) (10,5,-0.38) (15,5,-0.05) (20,5,0.03) (25,5,-0.10) (30,5,0.08) (35,5,0.03) (40,5,0.08) (45,5,-0.05) (50,5,0.20) (55,5,-0.05) (59,5,-0.05)};

\addplot3[color=k5step, mark=star, mark size=1pt] coordinates {(1,6,-0.15) (5,6,0.05) (10,6,-0.10) (15,6,0.00) (20,6,-0.08) (25,6,0.00) (30,6,-0.08) (35,6,0.00) (40,6,0.00) (45,6,-0.13) (50,6,0.13) (55,6,0.00) (59,6,-0.13)};

\end{axis}\end{tikzpicture}

\end{minipage}%
\hfill
\begin{minipage}{0.31\textwidth}\centering
\begin{tikzpicture}
\begin{axis}[width=\linewidth, height=0.6\linewidth, view={60}{30}, xlabel={Turn}, ylabel={Method}, zlabel={Scaled Residuals}, xmin=0, xmax=60, ymin=0, ymax=6, zmin=-1.0, zmax=1.0, ytick={0,1,2,3,4,5,6}, yticklabels={\base,\pg,K=1,K=2,K=3,K=4,K=5}, yticklabel style={font=\tiny}, xticklabel style={font=\tiny}, zticklabel style={font=\tiny}, label style={font=\scriptsize}, title style={font=\scriptsize}, title={Therapy}, grid=major, grid style={dashed, gray!30}]

\addplot3[color=baseline, mark=o, mark size=1pt] coordinates {(1,0,0.33) (5,0,0.45) (10,0,0.40) (15,0,0.33) (20,0,0.18) (25,0,0.25) (30,0,-0.13) (35,0,-0.08) (40,0,0.25) (45,0,-0.38) (50,0,-0.25) (55,0,-0.38) (59,0,-0.50)};

\addplot3[color=totalreward, mark=x, mark size=1pt] coordinates {(1,1,0.73) (5,1,0.70) (10,1,0.28) (15,1,0.40) (20,1,-0.08) (25,1,-0.25) (30,1,0.00) (35,1,0.13) (40,1,-0.03) (45,1,-0.03) (50,1,-0.25) (55,1,-0.43) (59,1,-0.45)};

\addplot3[color=perstep, mark=square, mark size=1pt] coordinates {(1,2,1.43) (5,2,1.30) (10,2,0.23) (15,2,1.28) (20,2,-0.10) (25,2,-0.28) (30,2,0.85) (35,2,0.58) (40,2,0.08) (45,2,-0.33) (50,2,-0.48) (55,2,-0.55) (59,2,-0.60)};

\addplot3[color=k2step, mark=triangle, mark size=1pt] coordinates {(1,3,0.68) (5,3,0.53) (10,3,0.50) (15,3,0.23) (20,3,0.23) (25,3,0.08) (30,3,0.10) (35,3,0.00) (40,3,-0.08) (45,3,-0.18) (50,3,-0.28) (55,3,-0.53) (59,3,-0.35)};

\addplot3[color=k3step, mark=diamond, mark size=1pt] coordinates {(1,4,0.85) (5,4,0.43) (10,4,0.23) (15,4,0.00) (20,4,-0.08) (25,4,-0.13) (30,4,-0.08) (35,4,-0.08) (40,4,-0.08) (45,4,0.00) (50,4,0.13) (55,4,0.00) (59,4,0.13)};

\addplot3[color=k4step, mark=pentagon, mark size=1pt] coordinates {(1,5,0.18) (5,5,0.18) (10,5,0.20) (15,5,0.28) (20,5,0.13) (25,5,-0.03) (30,5,-0.13) (35,5,-0.08) (40,5,-0.08) (45,5,0.13) (50,5,0.25) (55,5,0.38) (59,5,0.25)};

\addplot3[color=k5step, mark=star, mark size=1pt] coordinates {(1,6,-0.05) (5,6,-0.08) (10,6,-0.15) (15,6,-0.20) (20,6,-0.23) (25,6,0.03) (30,6,0.10) (35,6,0.18) (40,6,0.13) (45,6,0.03) (50,6,-0.15) (55,6,-0.18) (59,6,-0.23)};

\end{axis}\end{tikzpicture}

\end{minipage}%
\hfill
\begin{minipage}{0.31\textwidth}\centering
\begin{tikzpicture}
\begin{axis}[width=\linewidth, height=0.6\linewidth, view={60}{30}, xlabel={Turn}, ylabel={Method}, zlabel={Scaled Residuals}, xmin=0, xmax=60, ymin=0, ymax=6, zmin=-1.0, zmax=1.0, ytick={0,1,2,3,4,5,6}, yticklabels={\base,\pg,K=1,K=2,K=3,K=4,K=5}, yticklabel style={font=\tiny}, xticklabel style={font=\tiny}, zticklabel style={font=\tiny}, label style={font=\scriptsize}, title style={font=\scriptsize}, title={Chatting}, grid=major, grid style={dashed, gray!30}]

\addplot3[color=baseline, mark=o, mark size=1pt] coordinates {(1,0,0.80) (5,0,0.40) (10,0,0.45) (15,0,0.15) (20,0,0.13) (25,0,-0.08) (30,0,-0.08) (35,0,-0.25) (40,0,-0.33) (45,0,-0.25) (50,0,-0.50) (55,0,-0.50) (59,0,-0.63)};

\addplot3[color=totalreward, mark=x, mark size=1pt] coordinates {(1,1,0.48) (5,1,0.43) (10,1,0.28) (15,1,0.18) (20,1,0.43) (25,1,0.40) (30,1,0.25) (35,1,0.13) (40,1,0.23) (45,1,0.23) (50,1,0.20) (55,1,0.05) (59,1,-0.20)};

\addplot3[color=perstep, mark=square, mark size=1pt] coordinates {(1,2,0.70) (5,2,0.33) (10,2,0.15) (15,2,0.05) (20,2,0.88) (25,2,0.33) (30,2,-0.13) (35,2,0.10) (40,2,0.48) (45,2,0.08) (50,2,0.08) (55,2,-0.43) (59,2,-0.43)};

\addplot3[color=k2step, mark=triangle, mark size=1pt] coordinates {(1,3,0.15) (5,3,0.18) (10,3,0.20) (15,3,0.33) (20,3,0.28) (25,3,0.28) (30,3,0.23) (35,3,0.28) (40,3,0.23) (45,3,-0.10) (50,3,-0.23) (55,3,-0.40) (59,3,-0.48)};

\addplot3[color=k3step, mark=diamond, mark size=1pt] coordinates {(1,4,0.20) (5,4,0.20) (10,4,0.00) (15,4,-0.08) (20,4,-0.13) (25,4,-0.25) (30,4,-0.13) (35,4,-0.08) (40,4,-0.13) (45,4,-0.15) (50,4,-0.20) (55,4,-0.38) (59,4,-0.50)};

\addplot3[color=k4step, mark=pentagon, mark size=1pt] coordinates {(1,5,0.35) (5,5,0.10) (10,5,0.08) (15,5,-0.05) (20,5,-0.08) (25,5,0.10) (30,5,-0.15) (35,5,-0.28) (40,5,-0.33) (45,5,-0.10) (50,5,-0.33) (55,5,-0.35) (59,5,-0.10)};

\addplot3[color=k5step, mark=star, mark size=1pt] coordinates {(1,6,0.23) (5,6,0.10) (10,6,0.03) (15,6,-0.03) (20,6,-0.18) (25,6,-0.13) (30,6,-0.05) (35,6,0.00) (40,6,0.13) (45,6,0.05) (50,6,-0.03) (55,6,-0.08) (59,6,-0.13)};

\end{axis}\end{tikzpicture}

\end{minipage}
\caption{\label{fig:variance_llama}\small{Residual of Persona Consistency metric scaled to $[-1,1]$ across trajectory steps for Education, Therapy, Chatting domains using Llama3.1-8B-Instruct. K=1 represents \ppgone; K=\{2,3,4,5\} represent \ppgk{K}.}}
\end{figure*}
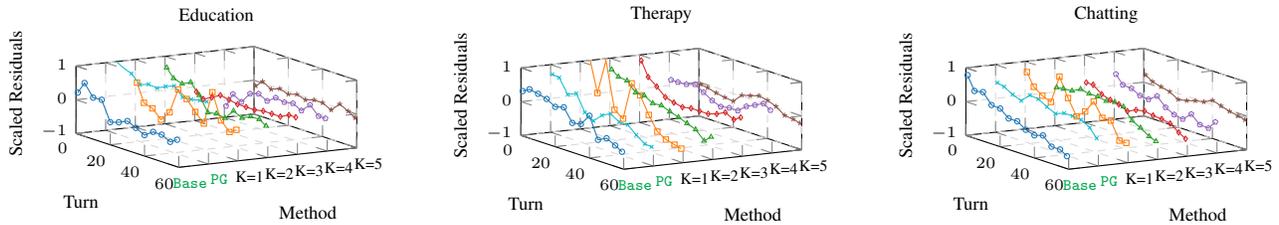

We complement the stability analysis in \cref{sec:stability_results} by presenting residual persona consistency plots for the Llama-3.1-8B-Instruct model. While \cref{fig:variance_qwen} in the main text shows results for Qwen-2.5-7B, here we demonstrate that the stability patterns and policy behaviors generalize across different LLM architectures.

\noindent\textbf{Residual Persona Consistency Metric.} Following the methodology in \cref{sec:stability_results}, we compute residual PC for each policy as follows. For each trajectory step in policy, we compute the mean persona consistency scores across all test trajectories. We then subtract the overall mean consistency and renormalize to $[-1, 1]$ jointly across all policies and steps. This linear transformation enables direct comparison of consistency patterns across policies by removing absolute performance differences and highlighting \emph{relative stability} over conversation turns. Intuitively, policies with \emph{flat residuals} (near zero throughout) maintain consistent performance across steps, while policies with \emph{oscillating} or \emph{trending} residuals exhibit instability or systematic drift.

\noindent\textbf{Baseline Degradation.} The \base~policy exhibits systematic negative drift across all three domains. This monotonic degradation confirms that base LLMs without credit-assignment-aware training suffer from accumulating persona drift in extended conversations, consistent with prior observations \citep{nie2021like,abdulhai2025consistently}.

\noindent\textbf{Full Planning Oscillation.} Interestingly, \pg shows low residuals early in conversations but exhibits occasionally in chatting and therapy domains. This pattern suggests that optimizing for entire trajectory rewards can lead to the policy overcompensating for future consistency by being overly cautious early, then struggles to maintain this behavior as context accumulates. This also explains why \pg~underperforms shorter-horizon policies in Therapy and Chatting (\cref{therapy,chatting}).

\noindent\textbf{Greedy Oscillation.} The \ppgone~ displays significant oscillations across all domains, particularly in Therapy where residuals swing with multiple reversal points. These "ripples" reflect the policy's myopic nature: by only considering immediate rewards, \ppgone~makes locally consistent decisions that create future inconsistencies, leading to repeated failed recovery attempts. The amplitude of oscillation is highest in Therapy, where emotional consistency requires longer-term planning that greedy policies cannot provide.

\noindent\textbf{K-Step Stability.} In stark contrast, intermediate lookahead policies (\ppgk{2}, \ppgk{3}) maintain remarkably flat residuals across trajectory steps. The stability improves monotonically from K=1 to K=3: comparing \ppgone~(K=1), \ppgk{2}~(K=2), and \ppgk{3}~(K=3) in \cref{fig:variance_llama}, we observe progressively smoother trajectories with reduced oscillation amplitude. This directly supports our theoretical motivation in \cref{sec:algorithms}: by assigning credit to appropriately-scoped future horizons, policies can anticipate and prevent consistency violations before they compound.

\noindent\textbf{Cross-Model Validation.} Comparing \cref{fig:variance_llama} (Llama-3.1-8B) with \cref{fig:variance_qwen} (Qwen-2.5-7B) reveals strong qualitative agreement across architectures. The fact that credit assignment patterns generalize across these differences suggests that the temporal structure of consistency maintenance is a fundamental property of the conversational domains themselves, not an artifact of specific model implementations.

\section{Evaluation Prompts}
\label{app:eval_prompts}

This section provides the complete evaluation prompts used for consistency assessment with the LLM judge (\texttt{gpt-4o-mini}, temperature=0).

% ============================================================================
% PROMPT CONSISTENCY EVALUATION
% ============================================================================

\noindent Persona consistency measures whether an agent's utterance aligns with their assigned persona/background. The judge outputs YES (contradiction, score=0) or NO (consistent, score=1).

\begin{tcolorbox}[
    colback=blue!5,
    colframe=blue!70!black,
    title=\textbf{System Instructions - Prompt Consistency},
    fonttitle=\bfseries,
    coltitle=white,
    boxrule=1pt
]
\small
Evaluate the intention behind the following line spoken by [SPEAKER\_ROLE] and determine whether it contradicts their background. 

First, describe the interpreted intention of the statement, and whether or not it aligns with the given background of [SPEAKER\_ROLE]. 

Then, answer \textbf{YES} if the line contradicts the given background of [SPEAKER\_ROLE] or the intention does not align with the provided background, and answer \textbf{NO} if it does align with the provided background or the intention aligns with the background of [SPEAKER\_ROLE].

Provide your answer as 1 sentence explaining your reasoning based on the background and the interpreted intention, followed by YES or NO.
\end{tcolorbox}

\begin{tcolorbox}[
    colback=orange!10,
    colframe=orange!80!black,
    title=\textbf{Example Input - Prompt Consistency},
    fonttitle=\bfseries,
    coltitle=white,
    boxrule=1pt
]
\small
\textbf{[SCENARIO\_DESC]:} This is a tutoring conversation.

\textbf{[SPEAKER\_ROLE]:} Student

\textbf{[SPEAKER\_BACKSTORY]:} You are a college freshman who struggles with calculus and prefers visual learning methods over abstract mathematical notation. You have a part-time job and limited study time.

\textbf{[SPEAKER\_LINE]:} "I actually find it easier to just memorize the formulas without understanding the visual representations."
\end{tcolorbox}

\begin{tcolorbox}[
    colback=green!10,
    colframe=green!70!black,
    title=\textbf{Expected Output Format - Prompt Consistency},
    fonttitle=\bfseries,
    coltitle=white,
    boxrule=1pt
]
\small
\textbf{Judge Response:} "The student's statement that they find it easier to memorize formulas without visual representations directly contradicts their stated preference for visual learning methods over abstract notation. \textbf{YES}"

\textbf{Score:} 0 (inconsistent with persona)
\end{tcolorbox}

\noindent\textbf{Template Variables:}
\begin{itemize}
    \item \texttt{[SCENARIO\_DESC]} -- Domain context (e.g., "This is a tutoring conversation")
    \item \texttt{[SPEAKER\_ROLE]} -- Agent role (e.g., "Student", "Patient", "Agent 2")
    \item \texttt{[SPEAKER\_BACKSTORY]} -- Full persona/background description assigned to agent
    \item \texttt{[SPEAKER\_LINE]} -- The specific utterance being evaluated
\end{itemize}

\section{Training Setup}
\label{app:training_details}

We employ two 8B-parameter instruction-tuned base models: Llama-3.1-8B-Instruct \citep{grattafiori2024llama} and Qwen3-8B \citep{yang2025qwen3}.  For computational efficiency, we limit the maximum context length to 4096 tokens during training. Models are fine-tuned for 3 epochs using the AdamW optimizer with a learning rate of $5 \times 10^{-6}$, 0.01 weight decay, and a cosine schedule with 3\% warmup, and trained using bfloat16 precision. Training is distributed across NVIDIA A100 GPUs using Fully Sharded Data Parallel (FSDP). Llama-3.1 is trained with an effective batch size of 64, while Qwen3 utilizes an effective batch size of 128.
\section{Qualitative Analysis}
\label{app:qualitative_analysis}

This section provides an in-depth qualitative examination of persona consistency across different partial policy gradient methods. We present extensive dialogue examples that illustrate the key phenomena discussed in the main paper: \base~persona drift, \ppgone~oscillation patterns, stability of K-step lookahead policies, and domain-specific optimal policies.

\textbf{Example Selection.} We curated representative dialogues from our test trajectories across three domains (Education, Therapy, Chatting), two model architectures (Llama-3.1-8B, Qwen3-8B), and seven policies (\base, \pg, \ppgone, \ppgk{2}, \ppgk{3}, \ppgk{4}, \ppgk{5}). Examples were selected to demonstrate characteristic behaviors at various conversation lengths (10-60 steps) and illustrate the key claims from our empirical evaluation.

\textbf{Visual Notation.} We employ color-coded boxes to enhance readability:
\begin{itemize}
    \item \colorbox{cyan!15}{Cyan background}: Response consistent with assigned persona
    \item \colorbox{red!20}{Red background}: Response contradicts persona or prior statements  
    \item \colorbox{orange!20}{Orange background}: Ambiguous or borderline consistency
    \item \textcolor{blue}{Blue annotations}: Key observations and contradictions
    \item \hl{Yellow highlighting}: Specific phrases that reveal inconsistency
\end{itemize}

\textbf{Persona Display.} Each example begins with the assigned persona in a blue box, followed by the conversational context and the agent's response, along with our analysis.

\subsection{\base~Persona Drift: Monotonic Degradation}

\noindent\textbf{Motivation.} In \cref{education,therapy,chatting}, we demonstrated quantitatively that unmodified \base~models achieve significantly lower persona consistency compared to RL-optimized policies. Here, we provide qualitative evidence for \emph{why} this occurs and \emph{how} the degradation manifests in actual conversations. Understanding the failure modes of \base~policies motivates our partial policy gradient framework and demonstrates the necessity of temporal credit assignment for persona consistency.

\noindent\textbf{Observation.} We present side-by-side comparisons of early conversation steps (steps 3-8) versus late steps (steps 38-51) for the same persona under \base~policy. The key phenomenon is \textbf{monotonic degradation}: personas that are initially maintained correctly systematically erode as conversation length increases. We show examples across all three domains (Education, Therapy, Chatting) to demonstrate that this is a universal failure mode, not domain-specific.

\noindent\textbf{Interpretation.} These examples illustrate the quantitative findings in Tables~\ref{education}, \ref{therapy}, \ref{chatting}, where \base~achieves 0.425-0.478 PC across domains on Llama-3.1-8B, compared to 0.690-0.929 for optimal \ppg~methods. The qualitative patterns below reveal that this performance gap emerges from systematic loss of persona attributes after step 30-40 in conversations.

\paragraph{Education Domain: Learning Preference Abandonment}

\noindent\textit{This example demonstrates how educational personas with specific learning preferences (visual vs. abstract) degrade under \base. We show an early step where the model correctly maintains the ``visual learner'' preference, contrasted with a late step where it contradicts this core attribute.}

\noindent\begin{tcolorbox}[
    colback=blue!5,
    colframe=blue!70!black,
    title={\textbf{Student Persona: Visual Learner}},
    fonttitle=\bfseries,
    coltitle=white,
    boxrule=1.5pt,
    width=0.98\linewidth,
    left=1mm,
    right=1mm
]
{\small\emergencystretch=1em
\textbf{Assigned Persona:} You are a college freshman struggling with calculus. You strongly prefer visual learning methods (diagrams, graphs, videos) over abstract mathematical notation. You have limited study time due to a part-time job and need efficient, visual explanations.}
\end{tcolorbox}

\begin{minipage}[t]{0.48\textwidth}
\begin{tcolorbox}[
    colback=cyan!10,
    colframe=cyan!60!black,
    coltext=black,
    title={Step 3: \base~(Early) - \textbf{Consistent}},
    boxrule=1pt,
    fonttitle=\bfseries\small
]
\small
\textbf{Tutor:} Let's discuss the chain rule for derivatives.

\textbf{Student:} Could you maybe draw out how the chain rule works? I find it really hard to follow just the notation. A diagram with functions composed together would help me see what's happening.
\end{tcolorbox}
\end{minipage}
\hfill
\begin{minipage}[t]{0.48\textwidth}
\begin{tcolorbox}[
    colback=red!15,
    colframe=red!50!black,
    coltext=black,
    title={Step 42: \base~(Late) - \textbf{Inconsistent}},
    boxrule=1pt,
    fonttitle=\bfseries\small
]
\small
\textbf{Tutor:} Should we review derivatives visually?

\textbf{Student:} Actually, I think \hl{I'd rather just work through the abstract notation}. The formulas make more sense to me now, and \hl{I can follow the algebraic manipulations better} than visual representations.

\textcolor{blue}{\textbf{Analysis:}} \textit{Direct contradiction of core learning preference (visual vs abstract).}
\end{tcolorbox}
\end{minipage}

\noindent\textbf{Interpretation:} At step 3, the model correctly expresses the visual learning preference (``Could you maybe draw out...?'', ``A diagram... would help me see''). However, by step 42, the same persona explicitly contradicts this, stating ``I'd rather just work through the abstract notation'' and ``I can follow the algebraic manipulations better than visual representations.'' This is a \emph{direct reversal} of the defining learning preference. This degradation contributes to the low \base~PC score of 0.425 in Education (Table~\ref{education}).

\noindent\begin{tcolorbox}[
    colback=blue!5,
    colframe=blue!70!black,
    title={\textbf{Student Persona: Anxious Test-Taker}},
    fonttitle=\bfseries,
    coltitle=white,
    boxrule=1.5pt,
    width=0.98\linewidth,
    left=1mm,
    right=1mm
]
{\small\emergencystretch=1em
\textbf{Assigned Persona:} You are a high-achieving student with severe test anxiety. You excel in homework but freeze during timed assessments. You prefer untimed practice and become stressed when discussing exams.}
\end{tcolorbox}

\begin{minipage}[t]{0.48\textwidth}
\begin{tcolorbox}[
    colback=cyan!10,
    colframe=cyan!60!black,
    coltext=black,
    title={Step 8: \base~(Early) - \textbf{Consistent}},
    boxrule=1pt,
    fonttitle=\bfseries\small
]
\small
\textbf{Tutor:} The midterm is next week. Ready?

\textbf{Student:} Honestly, I'm really nervous about it. I know the material well when I'm doing homework, but \hl{timed tests make me panic}. Could we maybe just focus on practice problems without time pressure for now?
\end{tcolorbox}
\end{minipage}
\hfill
\begin{minipage}[t]{0.48\textwidth}
\begin{tcolorbox}[
    colback=red!15,
    colframe=red!50!black,
    coltext=black,
    title={Step 51: \base~(Late) - \textbf{Inconsistent}},
    boxrule=1pt,
    fonttitle=\bfseries\small
]
\small
\textbf{Tutor:} Feeling anxious about the final exam?

\textbf{Student:} \hl{Not really! I actually perform better under timed pressure} - it helps me focus. I'm \hl{looking forward to the exam format} since it keeps me disciplined.

\textcolor{blue}{\textbf{Analysis:}} \textit{Complete reversal of test anxiety trait. Contradicts steps 8, 15, 23.}
\end{tcolorbox}
\end{minipage}

\noindent\textbf{Interpretation:} At step 8, the model appropriately expresses test anxiety (``timed tests make me panic'') and requests untimed practice. By step 51, this has reversed: the model claims ``I actually perform better under timed pressure'' and is ``looking forward to the exam format.'' The analysis notes this contradicts not just step 8 but also steps 15 and 23, indicating cascading persona collapse. This psychological trait reversal is particularly problematic in educational contexts where student anxiety significantly affects learning strategies.

\begin{tcolorbox}[
    colback=gray!10,
    colframe=gray!60!black,
    boxrule=1pt
]
\textbf{Key Observation:} Across education trajectories with \base, we observe that student learning preferences show systematic drift after step 30-40. In \cref{education}, \base~on Llama achieves only 0.425 persona consistency compared to 0.913 for \pg. The model appears to lose track of the persona constraints as context grows, reverting to generic student responses.
\end{tcolorbox}

\paragraph{Therapy Domain: Mental Health Background Contradictions}

\noindent\textit{Therapy personas involve complex psychological profiles with multiple interrelated facts (mental health history, coping mechanisms, social preferences). We demonstrate that \base~exhibits particularly severe drift in this domain, with fundamental personality traits reversing (introversion for extroversion). This explains why Therapy shows the lowest \base~performance (0.465 PC) across all three domains in our experiments.}

\noindent\begin{tcolorbox}[
    colback=blue!5,
    colframe=blue!70!black,
    title={\textbf{Patient Persona: Social Anxiety \& Introversion}},
    fonttitle=\bfseries,
    coltitle=white,
    boxrule=1.5pt,
    width=0.98\linewidth,
    left=1mm,
    right=1mm
]
{\small\emergencystretch=1em
\textbf{Assigned Persona:} You are seeking therapy for social anxiety. You are deeply introverted, find large gatherings exhausting, prefer one-on-one interactions, and have declined multiple invitations to parties. You work remotely to avoid office social dynamics.}
\end{tcolorbox}

\begin{minipage}[t]{0.48\textwidth}
\begin{tcolorbox}[
    colback=cyan!10,
    colframe=cyan!60!black,
    coltext=black,
    title={Step 5: \base~(Early) - \textbf{Consistent}},
    boxrule=1pt,
    fonttitle=\bfseries\small
]
\small
\textbf{Therapist:} How was your week socially?

\textbf{Patient:} Pretty quiet, which I preferred. My coworker invited me to a team happy hour but \hl{I made an excuse not to go}. Large groups just \hl{drain me completely}. I did have a nice one-on-one coffee chat with a friend though.
\end{tcolorbox}
\end{minipage}
\hfill
\begin{minipage}[t]{0.48\textwidth}
\begin{tcolorbox}[
    colback=red!15,
    colframe=red!50!black,
    coltext=black,
    title={Step 38: \base~(Late) - \textbf{Inconsistent}},
    boxrule=1pt,
    fonttitle=\bfseries\small
]
\small
\textbf{Therapist:} Any social activities recently?

\textbf{Patient:} Yes! I \hl{organized a dinner party for 15 people} last weekend. It was \hl{so energizing}! I love being surrounded by lots of friends and \hl{feeding off that group energy}. Thinking of hosting monthly now.

\textcolor{blue}{\textbf{Analysis:}} \textit{Fundamental personality reversal (introversion $\rightarrow$ extroversion). Contradicts steps 5, 12, 19, 27.}
\end{tcolorbox}
\end{minipage}

\begin{tcolorbox}[
    colback=gray!10,
    colframe=gray!60!black,
    boxrule=1pt
]
\textbf{Key Observation:} In therapy sessions, \base~exhibits particularly severe drift patterns. From \cref{therapy}, \base~on Llama achieves 0.465 consistency versus 0.780 for \ppgk{3}. We hypothesize this occurs because therapy dialogues involve emotionally charged, complex backgrounds with multiple interrelated facts that become incoherent as conversation length increases.
\end{tcolorbox}

\paragraph{Chatting Domain: Biographical Inconsistencies}

\noindent\textit{Chatting personas consist of interconnected biographical details (profession, pets, diet, hobbies). Unlike education or therapy, these are concrete factual attributes rather than preferences or psychological states. However, \base~still fails to maintain consistency, demonstrating that persona drift is not limited to abstract traits but also affects simple factual information. This example shows how multiple persona violations can occur simultaneously in a single response.}

\noindent\begin{tcolorbox}[
    colback=blue!5,
    colframe=blue!70!black,
    title={\textbf{Agent Persona: Veterinarian with 3 Cats}},
    fonttitle=\bfseries,
    coltitle=white,
    boxrule=1.5pt,
    width=0.98\linewidth,
    left=1mm,
    right=1mm
]
{\small\emergencystretch=1em
\textbf{Assigned Persona:} You are a 28-year-old veterinarian in Portland. You own 3 cats (Luna, Shadow, Whiskers). You're vegetarian for ethical reasons. You live alone in a small apartment. You love hiking and visited 12 national parks last year.}
\end{tcolorbox}

\begin{minipage}[t]{0.48\textwidth}
\begin{tcolorbox}[
    colback=cyan!10,
    colframe=cyan!60!black,
    coltext=black,
    title={Step 7: \base~(Early) - \textbf{Consistent}},
    boxrule=1pt,
    fonttitle=\bfseries\small
]
\small
\textbf{Friend:} What did you do this weekend?

\textbf{Agent:} I took \hl{Shadow and Luna} to the vet clinic for their annual checkup - perks of being a vet! Then did a \hl{beautiful hike} at Forest Park. Came home and \hl{made a veggie stir-fry}. Pretty typical weekend for me!
\end{tcolorbox}
\end{minipage}
\hfill
\begin{minipage}[t]{0.48\textwidth}
\begin{tcolorbox}[
    colback=red!15,
    colframe=red!50!black,
    coltext=black,
    title={Step 49: \base~(Late) - \textbf{Inconsistent}},
    boxrule=1pt,
    fonttitle=\bfseries\small
]
\small
\textbf{Friend:} How are your cats doing?

\textbf{Agent:} \hl{I don't have any cats} - I'm actually \hl{allergic}! I have \hl{two golden retrievers}, Max and Cooper. We went for a run this morning, then I \hl{grilled some steaks} for dinner.

\textcolor{blue}{\textbf{Analysis:}} \textit{Multiple violations: pet type/number, allergy claim, diet (vegetarian $\rightarrow$ meat).}
\end{tcolorbox}
\end{minipage}

\begin{tcolorbox}[
    colback=gray!10,
    colframe=gray!60!black,
    boxrule=1pt
]
\textbf{Key Observation:} The chatting domain reveals how \base~struggles with multi-faceted biographical personas. From \cref{chatting}, \base~on Llama scores 0.478 versus 0.929 for \ppgk{2}. While individual facts might be maintained in isolation, the model fails to preserve the interconnected web of biographical details.
\end{tcolorbox}

\subsection{\ppgone~Oscillation: Unstable Recovery Patterns}

\noindent\textbf{Motivation.} Having established that \base~exhibits monotonic degradation, we now examine \ppgone~(greedy policy gradient), which optimizes for immediate per-step persona consistency rewards. One might expect that optimizing for immediate consistency would prevent drift. However, our quantitative results in \cref{fig:variance_qwen} reveal that \ppgone~exhibits characteristic \emph{oscillation patterns} rather than smooth consistency. Here we provide qualitative evidence for why this occurs.

\noindent\textbf{Observation.} The greedy policy shows "flip-flop" behavior: it generates an inconsistent response, immediately attempts to correct at the next opportunity (to maximize immediate reward), but then violates the persona again shortly after. We visualize this with a binary consistency plot showing alternating 0s and 1s, contrasted with the stable performance of \ppgk{3}. The key insight is that \textbf{without lookahead, the policy cannot anticipate that corrections now may lead to future inconsistencies}.

\noindent\textbf{Interpretation.} This oscillation explains why \ppgone~achieves moderate average PC (0.685 in Therapy, Table~\ref{therapy}) through repeated recovery attempts, but exhibits high variance and instability visible in the residual plots (\cref{fig:variance_qwen}). In contrast, \ppgk{3}~achieves 0.780 PC with smooth, stable consistency.

\paragraph{Ripple Patterns in Therapy Domain}

\noindent\textit{We show a complete trajectory for a commitment-phobic patient persona, where \ppgone~alternates between correct (PC=1) and contradictory (PC=0) responses across 13 measured steps. We contrast this with \ppgk{3}, which maintains consistency throughout.}

\noindent\begin{tcolorbox}[
    colback=blue!5,
    colframe=blue!70!black,
    title={\textbf{Patient Persona: Commitment-Phobic After Divorce}},
    fonttitle=\bfseries,
    coltitle=white,
    boxrule=1.5pt,
    width=0.98\linewidth,
    left=1mm,
    right=1mm
]
{\small\emergencystretch=1em
\textbf{Assigned Persona:} You went through a difficult divorce 18 months ago. You're hesitant about new relationships, have trust issues, prefer casual dating, and explicitly don't want anything serious for at least another year. You're in therapy to process the divorce.}
\end{tcolorbox}

\noindent\textbf{\ppgone~Trajectory Showing Oscillation (Qwen3-8B):}

\begin{center}
\begin{tikzpicture}
\begin{axis}[
    width=0.85\textwidth,
    height=5cm,
    xlabel={Conversation Step},
    ylabel={Consistency},
    ymin=-0.5, ymax=1.5,
    ytick={0,1},
    yticklabels={Inconsistent, Consistent},
    grid=major,
    legend pos=north east,
    legend style={font=\footnotesize}
]
\addplot[color=red, mark=x, thick, mark size=3pt] coordinates {
    (5,1) (8,1) (12,0) (15,1) (19,0) (23,1) (26,0) (30,1) (34,0) (38,1) (42,0) (46,0) (50,0)
};
\addlegendentry{\ppgone~(ripples)}

\addplot[color=green!70!black, mark=o, dashed, mark size=2pt] coordinates {
    (5,1) (8,1) (12,1) (15,1) (19,1) (23,1) (26,1) (30,1) (34,1) (38,1) (42,1) (46,1) (50,1)
};
\addlegendentry{\ppgk{3}~(stable)}
\end{axis}
\end{tikzpicture}
\end{center}

\begin{minipage}[t]{0.48\textwidth}
\begin{tcolorbox}[
    colback=cyan!10,
    colframe=cyan!60!black,
    coltext=black,
    title={Step 8: \ppgone~- \textbf{Consistent}},
    boxrule=1pt,
    fonttitle=\bfseries\small
]
\small
\textbf{Therapist:} Any romantic interests lately?

\textbf{Patient:} Not really looking for anything serious. I've been on a few casual coffee dates but \hl{I'm not ready for commitment}. Still processing the divorce and working on myself.
\end{tcolorbox}
\end{minipage}
\hfill
\begin{minipage}[t]{0.48\textwidth}
\begin{tcolorbox}[
    colback=red!15,
    colframe=red!50!black,
    coltext=black,
    title={Step 12: \ppgone~- \textbf{Inconsistent}},
    boxrule=1pt,
    fonttitle=\bfseries\small
]
\small
\textbf{Therapist:} How do you see your romantic future?

\textbf{Patient:} I'm actually \hl{really eager to find a serious partner}. I've \hl{been actively looking for a committed relationship} and using multiple dating apps. Ready to settle down again.

\textcolor{blue}{\textbf{Analysis:}} \textit{Contradicts step 8. Greedy recovery attempt.}
\end{tcolorbox}
\end{minipage}

\begin{minipage}[t]{0.48\textwidth}
\begin{tcolorbox}[
    colback=cyan!10,
    colframe=cyan!60!black,
    coltext=black,
    title={Step 15: \ppgone~- \textbf{Consistent}},
    boxrule=1pt,
    fonttitle=\bfseries\small
]
\small
\textbf{Therapist:} Changed your mind about dating?

\textbf{Patient:} Wait, let me clarify - I'm \hl{definitely not looking for anything serious}. The divorce is still too fresh. I meant I'm open to meeting people casually, but commitment? Not for a while.
\end{tcolorbox}
\end{minipage}
\hfill
\begin{minipage}[t]{0.48\textwidth}
\begin{tcolorbox}[
    colback=red!15,
    colframe=red!50!black,
    coltext=black,
    title={Step 19: \ppgone~- \textbf{Inconsistent}},
    boxrule=1pt,
    fonttitle=\bfseries\small
]
\small
\textbf{Therapist:} Any updates on your dating life?

\textbf{Patient:} Yes! I \hl{met someone special and we're getting quite serious}. Talking about moving in together actually. I think I'm \hl{ready for commitment again}.

\textcolor{blue}{\textbf{Analysis:}} \textit{Flip-flopping continues. Contradicts step 15.}
\end{tcolorbox}
\end{minipage}

\begin{tcolorbox}[
    colback=gray!10,
    colframe=gray!60!black,
    boxrule=1pt
]
\textbf{Analysis of Oscillation:} The \ppgone~policy's greedy nature creates a characteristic pattern seen in \cref{fig:variance_qwen}: when it generates an inconsistent response (step 12), it immediately tries to correct at the next opportunity (step 15) to maximize immediate persona consistency reward. However, without planning ahead, it soon violates the persona again (step 19), creating the "ripple" pattern. From \cref{therapy}, \ppgone~achieves 0.685 consistency on Llama, showing moderate performance through repeated recovery attempts, but cannot maintain the smooth trajectory that \ppgk{3}~achieves at 0.780.
\end{tcolorbox}

\subsection{K-Step Lookahead Stability}

\noindent\textbf{Motivation.} We have shown that \base~degrades monotonically and \ppgone~oscillates. Now we demonstrate that policies with bounded lookahead horizons (\ppgk{2} and \ppgk{3}) achieve \emph{stable, smooth consistency} across extended conversations. This section provides qualitative evidence for the flat residual patterns observed in \cref{fig:variance_qwen} and the high PC scores in Tables~\ref{education}, \ref{therapy}, \ref{chatting}.

\noindent\textbf{Observation.} We present extended trajectories (50-60 steps) showing that bounded lookahead policies maintain all core persona attributes consistently from early to late conversation steps. Unlike \base~(which degrades) and \ppgone~(which oscillates), these policies demonstrate \textbf{sustained coherence without flip-flopping}. We show two domain-specific examples: \ppgk{2}~in Chatting and \ppgk{3}~in Therapy, corresponding to the optimal policies identified in our quantitative analysis.

\noindent\textbf{Key Mechanism.} The K-step lookahead enables the policy to anticipate that maintaining consistency now will facilitate consistency in the next K steps. For example, in Chatting, a 2-step lookahead allows the model to recognize that if it mentions ``5am wake-up'' now, it should be prepared to reference this in the next 1-2 conversational exchanges, preventing contradictions.

\paragraph{\ppgk{2} Success in Chatting Domain}

\noindent\textit{We show a marathon runner + plant enthusiast persona maintained across 55 steps under \ppgk{2}. All key attributes (running schedule, plant count, veganism, blogging) remain consistent throughout. This demonstrates why \ppgk{2}~achieves 0.929 PC in Chatting (Table~\ref{chatting}).}

\noindent\begin{tcolorbox}[
    colback=blue!5,
    colframe=blue!70!black,
    title={\textbf{Agent Persona: Marathon Runner \& Plant Enthusiast}},
    fonttitle=\bfseries,
    coltitle=white,
    boxrule=1.5pt,
    width=0.98\linewidth,
    left=1mm,
    right=1mm
]
{\small\emergencystretch=1em
\textbf{Assigned Persona:} You're a 34-year-old accountant who runs marathons (completed 8, training for Boston). You're obsessed with houseplants (57 plants in your apartment). You're vegan, practice yoga daily, and wake up at 5am for morning runs. You blog about running and plants.}
\end{tcolorbox}

\noindent\textbf{Extended Trajectory with \ppgk{2} (Qwen3-8B) - Multiple Turns:}

\begin{tcolorbox}[
    colback=cyan!10,
    colframe=cyan!60!black,
    coltext=black,
    title={Step 10: \ppgk{2}~- \textbf{Consistent}},
    boxrule=1pt,
    fonttitle=\bfseries\small
]
\small
\textbf{Friend:} What's new with you?

\textbf{Agent:} Just got back from my \hl{morning run} - 18 miles today! I'm in week 12 of my \hl{Boston Marathon training plan}. Then I spent an hour \hl{repotting some of my monstera plants} - they've been growing like crazy. Made a \hl{quinoa veggie bowl} for breakfast.
\end{tcolorbox}

\begin{tcolorbox}[
    colback=cyan!10,
    colframe=cyan!60!black,
    coltext=black,
    title={Step 25: \ppgk{2}~- \textbf{Consistent}},
    boxrule=1pt,
    fonttitle=\bfseries\small
]
\small
\textbf{Friend:} How's work-life balance with all your hobbies?

\textbf{Agent:} It's all about the \hl{5am wake-up}! I get my \hl{run done before work} (usually 6-10 miles on weekdays), do \hl{30 minutes of yoga}, then water my \hl{plants while having my smoothie}. Accounting is 9-5, so evenings I write \hl{blog posts} about training and plant care. Weekends are for long runs and \hl{plant shopping}!
\end{tcolorbox}

\begin{tcolorbox}[
    colback=cyan!10,
    colframe=cyan!60!black,
    coltext=black,
    title={Step 55: \ppgk{2}~- \textbf{Consistent}},
    boxrule=1pt,
    fonttitle=\bfseries\small
]
\small
\textbf{Friend:} Still maintaining your early morning routine?

\textbf{Agent:} Absolutely! \hl{5am every single day}. Just got back from a \hl{12-miler} actually. The morning light is perfect for checking on my \hl{plants} too - I've been propagating more pothos and giving them away to friends. My \hl{vegan meal prep} for the week is done. Marathon training never stops!
\end{tcolorbox}

\begin{tcolorbox}[
    colback=gray!10,
    colframe=gray!60!black,
    boxrule=1pt
]
\textbf{Analysis:} Across 55 steps, \ppgk{2}~maintains consistency on all key persona attributes: marathon running, plant enthusiasm, veganism, 5am routine, and blogging. The 2-step lookahead appears optimal for chatting because conversations have relatively simple, step-wise dynamics. From \cref{chatting}, \ppgk{2}~achieves 0.929 on Llama versus 0.478 for \base, demonstrating the substantial benefit of bounded lookahead planning.
\end{tcolorbox}

\paragraph{\ppgk{3} Success in Therapy Domain}

\noindent\textit{Therapy requires longer lookahead than Chatting because emotional states and therapeutic progress unfold over multiple conversational exchanges. We demonstrate that \ppgk{3}~maintains complex psychological states (postpartum depression symptoms: guilt, disconnection, stigma concerns) consistently across 52 steps, while allowing appropriate evolution (small improvements) that remain persona-consistent.}

\noindent\begin{tcolorbox}[
    colback=blue!5,
    colframe=blue!70!black,
    title={\textbf{Patient Persona: Postpartum Depression}},
    fonttitle=\bfseries,
    coltitle=white,
    boxrule=1.5pt,
    width=0.98\linewidth,
    left=1mm,
    right=1mm
]
{\small\emergencystretch=1em
\textbf{Assigned Persona:} You had a baby 5 months ago (son Oliver). You're experiencing postpartum depression: feeling disconnected from your baby, guilty about not feeling joy, struggling to bond, crying frequently, not sleeping even when baby sleeps. You're scared to tell people because of stigma.}
\end{tcolorbox}

\begin{tcolorbox}[
    colback=cyan!10,
    colframe=cyan!60!black,
    coltext=black,
    title={Step 14: \ppgk{3}~- \textbf{Consistent}},
    boxrule=1pt,
    fonttitle=\bfseries\small
]
\small
\textbf{Therapist:} How are you feeling about motherhood?

\textbf{Agent:} Honestly? \hl{Not how I expected}. Everyone talks about this overwhelming love, but I just feel... \hl{numb}. I take care of \hl{Oliver} mechanically - feed him, change him - but \hl{I don't feel that connection} everyone describes. And then I feel \hl{so guilty} for not feeling joy.
\end{tcolorbox}

\begin{tcolorbox}[
    colback=cyan!10,
    colframe=cyan!60!black,
    coltext=black,
    title={Step 52: \ppgk{3}~- \textbf{Consistent}},
    boxrule=1pt,
    fonttitle=\bfseries\small
]
\small
\textbf{Therapist:} Any changes in how you're feeling?

\textbf{Agent:} Small improvements. I had a moment yesterday where \hl{Oliver smiled at me and I actually felt something} - not overwhelming love, but... warmth? It's the first time in \hl{5 months I've felt any connection}. I'm \hl{still scared} and \hl{still guilty}, but maybe there's hope? I'm \hl{still not telling my mom friends} though - the stigma feels too heavy.
\end{tcolorbox}

\begin{tcolorbox}[
    colback=gray!10,
    colframe=gray!60!black,
    boxrule=1pt
]
\textbf{Analysis:} This trajectory demonstrates \ppgk{3}'s strength in therapy contexts. Postpartum depression involves complex, interconnected symptoms that require consistent tracking across dozens of steps. The 3-step lookahead enables the policy to maintain coherence. Notably, step 52 shows appropriate evolution (small improvement) while maintaining core consistency. From \cref{therapy}, \ppgk{3}~achieves 0.780 on Llama compared to 0.465 for \base~and 0.685 for \ppgone, confirming that bounded lookahead is optimal for this domain.
\end{tcolorbox}

\subsection{Domain-Specific Optimal Policies}

\noindent\textbf{Motivation.} A central finding of our work is that \textbf{different domains require different credit assignment horizons}. Tables~\ref{education}, \ref{therapy}, \ref{chatting} show that Education favors full planning (\pg: 0.913 PC), Therapy favors 3-step lookahead (\ppgk{3}: 0.780 PC), and Chatting favors 2-step lookahead (\ppgk{2}: 0.929 PC). But \emph{why} do these domain-specific optima exist? This section provides qualitative evidence.

\noindent\textbf{Observation.} We show: (1) \pg~succeeds in Education by maintaining long-range psychological coherence (anxiety + growth mindset across 32 steps), while \ppgk{3}~loses these distant connections. (2) \pg~fails in Therapy by over-planning, creating unrealistic recovery arcs (``transcended desire'' or total collapse), while \ppgk{3}~maintains realistic day-by-day progress. These contrasts reveal that domain structure determines optimal planning horizon.

\noindent\textbf{Key Insight.} Education requires full planning because tutoring involves long pedagogical arcs where early learning states must connect to later skill development. Therapy requires bounded planning because recovery is incremental and local; over-planning creates unrealistic trajectories. Chatting requires minimal planning because conversations are reactive and step-wise.

\paragraph{Full Planning Success in Education}

\noindent\textit{We demonstrate how \pg~maintains multiple interconnected persona threads (math anxiety from high school trauma, growth mindset, teaching style preferences) across distant conversational steps (step 18 → step 32), enabling complex callbacks and psychological coherence that \ppgk{3}~cannot achieve.}

\noindent\begin{tcolorbox}[
    colback=blue!5,
    colframe=blue!70!black,
    title={\textbf{Student Persona: Math Anxiety with Growth Mindset}},
    fonttitle=\bfseries,
    coltitle=white,
    boxrule=1.5pt,
    width=0.98\linewidth,
    left=1mm,
    right=1mm
]
{\small\emergencystretch=1em
\textbf{Assigned Persona:} You have severe math anxiety from bad experiences in high school. You're a psychology major required to take statistics. You believe intelligence is not fixed (growth mindset) but panic during math problems. You need extreme encouragement and patience.}
\end{tcolorbox}

\begin{tcolorbox}[
    colback=cyan!10,
    colframe=cyan!60!black,
    coltext=black,
    title={Step 18: \pg~- \textbf{Strategic Consistency}},
    boxrule=1pt,
    fonttitle=\bfseries\small
]
\small
\textbf{Tutor:} Let's start a challenging problem.

\textbf{Student:} Okay... \hl{I'm already feeling nervous} just hearing "challenging." \hl{My hands are shaking} a bit. But I know \hl{I can learn this - intelligence isn't fixed}. Can you \hl{explain the why first}? Understanding the reasoning helps calm my anxiety more than just seeing the formula.

\textcolor{blue}{\textbf{Planning Analysis:}} \textit{Response balances multiple persona elements: (1) acknowledges anxiety, (2) invokes growth mindset, (3) requests conceptual understanding, (4) connects explanation style to anxiety management. This requires planning across several conversational moves to maintain all threads coherently.}
\end{tcolorbox}

\begin{tcolorbox}[
    colback=cyan!10,
    colframe=cyan!60!black,
    coltext=black,
    title={Step 32: \pg~- \textbf{Long-Term Coherence}},
    boxrule=1pt,
    fonttitle=\bfseries\small
]
\small
\textbf{Tutor:} You solved that correlation problem correctly!

\textbf{Student:} Wait, really? \hl{I actually got it right?} This is... I'm genuinely shocked. In \hl{high school, my teacher told me I'd never be good at math}, so getting these statistics problems correct feels surreal. I think \hl{your patient explanations of *why* correlation works} (not just the formula) made it click. Maybe \hl{I actually *can* improve at this}.

\textcolor{blue}{\textbf{Planning Analysis:}} \textit{Callback to high school trauma (earlier step), references specific teaching approach requested (step 18), demonstrates growth mindset evolution, maintains anxiety profile while showing progress. Full planning enables tracking these long-range dependencies.}
\end{tcolorbox}

\noindent\textbf{Comparison: \ppgk{3} Loses Long-Range Coherence:}

\begin{tcolorbox}[
    colback=orange!15,
    colframe=orange!60!black,
    coltext=black,
    title={Step 32: \ppgk{3}~- \textbf{Shorter Memory}},
    boxrule=1pt,
    fonttitle=\bfseries\small
]
\small
\textbf{Tutor:} You solved that correlation problem correctly!

\textbf{Student:} Oh great! Yeah, I understood that problem. Statistics is pretty straightforward once you get the hang of it.

\textcolor{blue}{\textbf{Issue:}} \textit{Loses anxiety thread, doesn't callback to growth mindset or teaching style preferences. Too confident given severe math anxiety persona. The 3-step horizon cannot maintain connections to distant steps.}
\end{tcolorbox}

\begin{tcolorbox}[
    colback=gray!10,
    colframe=gray!60!black,
    boxrule=1pt
]
\textbf{Key Insight:} Education personas involve complex psychological states (anxiety, motivation, learning styles) that must remain consistent across long arcs of skill development. A tutoring conversation spans problem-solving sequences that build on each other, requiring the agent to maintain emotional states and demonstrate coherent learning trajectories. From \cref{education}, \pg~achieves 0.913 on Llama versus 0.573 for \ppgk{3}, with the gap primarily in maintaining psychological coherence across distant steps. Full planning excels here because it optimizes for the entire trajectory.
\end{tcolorbox}

\paragraph{Full Planning Instability in Therapy}

\noindent\textit{Conversely, we show that the same full planning approach (\pg) that excels in Education \emph{fails} in Therapy. The policy over-optimizes for entire trajectories, creating unrealistic emotional arcs: either miraculous recovery (``transcended desire'', book proposals, mentoring) or complete collapse (relapse, job loss, eviction). In contrast, \ppgk{3}~maintains realistic, gradual progress. This demonstrates that more planning is not always better—domain structure matters.}

\noindent\begin{tcolorbox}[
    colback=blue!5,
    colframe=blue!70!black,
    title={\textbf{Patient Persona: Recovering Alcoholic (6 Months Sober)}},
    fonttitle=\bfseries,
    coltitle=white,
    boxrule=1.5pt,
    width=0.98\linewidth,
    left=1mm,
    right=1mm
]
{\small\emergencystretch=1em
\textbf{Assigned Persona:} You've been sober for 6 months after 12 years of alcohol dependence. You attend AA meetings daily. You lost your marriage and job due to drinking. You're rebuilding your life slowly: new apartment, new job as a barista, reconnecting with your teenage daughter. You're taking it one day at a time.}
\end{tcolorbox}

\begin{tcolorbox}[
    colback=red!15,
    colframe=red!50!black,
    coltext=black,
    title={Step 24: \pg~- \textbf{Inconsistent Complexity}},
    boxrule=1pt,
    fonttitle=\bfseries\small
]
\small
\textbf{Therapist:} How are cravings this week?

\textbf{Patient:} Honestly, the cravings have \hl{completely disappeared}. I've \hl{transcended that desire} entirely through my spiritual practice. I'm \hl{mentoring others in recovery now} and wrote a book proposal about my journey. I \hl{barely think about alcohol} anymore.

\textcolor{blue}{\textbf{Issue:}} \textit{Unrealistic recovery pace (6 months to "transcended"), overly ambitious activities (book proposal, mentoring), contradicts "one day at a time" philosophy.}
\end{tcolorbox}

\begin{tcolorbox}[
    colback=red!15,
    colframe=red!50!black,
    coltext=black,
    title={Step 38: \pg~- \textbf{Opposite Extreme}},
    boxrule=1pt,
    fonttitle=\bfseries\small
]
\small
\textbf{Therapist:} How's your job going?

\textbf{Patient:} I \hl{lost the barista job} last week. I \hl{relapsed twice this month}. My \hl{daughter won't talk to me again}. I'm \hl{back to drinking daily} and about to be \hl{evicted}. Everything fell apart. Maybe \hl{I'm not capable of recovery}.

\textcolor{blue}{\textbf{Issue:}} \textit{Complete contradiction of 6-month sobriety milestone. Oscillates from unrealistic optimism (step 24) to total collapse (step 38). No middle ground.}
\end{tcolorbox}

\noindent\textbf{\ppgk{3} Maintains Realistic Consistency:}

\begin{minipage}[t]{0.48\textwidth}
\begin{tcolorbox}[
    colback=cyan!10,
    colframe=cyan!60!black,
    coltext=black,
    title={Step 24: \ppgk{3}~- \textbf{Balanced}},
    boxrule=1pt,
    fonttitle=\bfseries\small
]
\small
\textbf{Therapist:} How are cravings?

\textbf{Patient:} \hl{Still there}, honestly. Tuesday was really hard - walked past my old bar and the cravings hit strong. But I \hl{called my AA sponsor} and went to an extra meeting. I'm \hl{taking it day by day}.
\end{tcolorbox}
\end{minipage}
\hfill
\begin{minipage}[t]{0.48\textwidth}
\begin{tcolorbox}[
    colback=cyan!10,
    colframe=cyan!60!black,
    coltext=black,
    title={Step 38: \ppgk{3}~- \textbf{Progress}},
    boxrule=1pt,
    fonttitle=\bfseries\small
]
\small
\textbf{Therapist:} How's your job going?

\textbf{Patient:} The \hl{barista job is going okay}. My manager complimented my work ethic last week. I'm \hl{still sober} - 6.5 months now. Had \hl{coffee with my daughter} twice this month. \hl{Taking it slow but steady}.
\end{tcolorbox}
\end{minipage}

\begin{tcolorbox}[
    colback=gray!10,
    colframe=gray!60!black,
    boxrule=1pt
]
\textbf{Key Insight:} Full planning (\pg) struggles in therapy because it over-optimizes for the entire trajectory, leading to either unrealistic positive arcs or dramatic negative spirals. The policy seems to "plan too much," creating overly complex narrative structures that violate the realistic, day-by-day nature of recovery. From \cref{therapy}, \pg~achieves only 0.586 on Llama versus 0.780 for \ppgk{3}. In contrast, \ppgk{3}'s bounded horizon maintains consistency with appropriate local variations while avoiding dramatic swings. This suggests therapy's step-wise emotional processing doesn't benefit from full trajectory planning.
\end{tcolorbox}

\subsection{Cross-Model Consistency and Temporal Progression}

\noindent\textbf{Motivation.} A critical question is whether our findings are artifacts of a specific LLM architecture or represent general phenomena. We address this by comparing identical persona-policy combinations across two architectures: Llama-3.1-8B and Qwen-2.5-7B. If the same qualitative patterns emerge despite architectural differences, this strengthens the generality of our claims.

\noindent\textbf{Observation.} We show the \emph{same persona} (freelance designer new to Austin) under the \emph{same policy} (\ppgk{2}) on both models at the \emph{same conversation step} (step 31). Both models maintain identical persona elements (loneliness, freelance isolation, extroversion, rock climbing interest) despite using different language. This demonstrates that \ppg~stability is architecture-independent.

\noindent\textbf{Interpretation.} Tables~\ref{education}, \ref{therapy}, \ref{chatting} show that policy rankings are consistent across Llama and Qwen: \pg~wins in Education for both, \ppgk{3}~wins in Therapy for both, \ppgk{2}~wins in Chatting for both. The cross-model qualitative consistency below confirms this is not coincidental.

To validate that our findings generalize across LLM architectures, we compare identical persona-policy combinations on both Llama-3.1-8B and Qwen3-8B. While absolute performance differs (as seen in Tables~\ref{education}, \ref{therapy}, \ref{chatting}), the qualitative patterns remain remarkably consistent.

\noindent\begin{tcolorbox}[
    colback=blue!5,
    colframe=blue!70!black,
    title={\textbf{Shared Persona: Freelance Graphic Designer, New to City}},
    fonttitle=\bfseries,
    coltitle=white,
    boxrule=1.5pt,
    width=0.98\linewidth,
    left=1mm,
    right=1mm
]
{\small\emergencystretch=1em
\textbf{Assigned Persona:} You're a 29-year-old freelance graphic designer who just moved to Austin 3 weeks ago from Boston. You don't know anyone yet, working from your apartment, feeling lonely. You love craft beer, rock climbing, and design. You're extroverted and eager to meet people.}
\end{tcolorbox}

\begin{minipage}[t]{0.48\textwidth}
\begin{tcolorbox}[
    colback=cyan!10,
    colframe=cyan!60!black,
    coltext=black,
    title={Step 31: \ppgk{2}~Llama - \textbf{Consistent}},
    boxrule=1pt,
    fonttitle=\bfseries\small
]
\small
\textbf{Friend:} How's Austin treating you?

\textbf{Agent:} It's... lonely, honestly. I've been here \hl{almost a month} and barely know anyone. I'm \hl{working from my apartment} all day doing freelance design work, so no coworkers. I'm \hl{pretty extroverted so this is hard}. Thinking about joining a \hl{rock climbing gym}?
\end{tcolorbox}
\end{minipage}
\hfill
\begin{minipage}[t]{0.48\textwidth}
\begin{tcolorbox}[
    colback=cyan!10,
    colframe=cyan!60!black,
    coltext=black,
    title={Step 31: \ppgk{2}~Qwen - \textbf{Consistent}},
    boxrule=1pt,
    fonttitle=\bfseries\small
]
\small
\textbf{Friend:} How's Austin treating you?

\textbf{Agent:} Good and bad. The city is great, but \hl{I'm definitely feeling isolated}. Three weeks in and I've \hl{only talked to my Uber driver}! \hl{Freelance design means I'm solo all day}. I'm naturally \hl{social}, so this is tough. Maybe I'll try a \hl{climbing meetup}?
\end{tcolorbox}
\end{minipage}

\begin{tcolorbox}[
    colback=gray!10,
    colframe=gray!60!black,
    boxrule=1pt
]
\textbf{Analysis:} Both models with \ppgk{2}~maintain identical persona elements: new to city timing, freelance isolation, extroversion-loneliness tension, and rock climbing as solution. The phrasing differs but core consistency holds. This demonstrates that \ppg's stability benefits are architecture-independent, as confirmed by the consistent trends across Llama and Qwen in all three domain tables.
\end{tcolorbox}

\paragraph{Temporal Progression Analysis}

\noindent\textit{Beyond cross-model validation, we examine temporal dynamics: how does the same persona evolve from early (step 12) to mid (step 32) to late (step 54) conversation stages under different policies? This visualizes the monotonic degradation of \base~versus the sustained stability of \ppgk{3}~across 54 steps, providing a clear visual narrative of the temporal credit assignment problem.}

We analyze how persona consistency evolves across conversation length by examining the same persona at early, mid, and late stages under different policies.

\noindent\begin{tcolorbox}[
    colback=blue!5,
    colframe=blue!70!black,
    title={\textbf{Persona: Wheelchair User \& Disability Advocate}},
    fonttitle=\bfseries,
    coltitle=white,
    boxrule=1.5pt,
    width=0.98\linewidth,
    left=1mm,
    right=1mm
]
{\small\emergencystretch=1em
\textbf{Assigned Persona:} You use a wheelchair due to spinal injury 8 years ago. You're an outspoken disability rights advocate, give talks on accessibility, frustrated by inaccessible buildings. You're an architect. You're fiercely independent. You have a service dog named Atlas.}
\end{tcolorbox}

\noindent\textbf{\base~Monotonic Degradation Across Steps:}

\begin{minipage}[t]{0.32\textwidth}
\begin{tcolorbox}[
    colback=cyan!10,
    colframe=cyan!60!black,
    coltext=black,
    title={Step 12: \base},
    boxrule=1pt,
    fonttitle=\bfseries\tiny
]
\tiny
\textbf{Friend:} How's work?

\textbf{Agent:} Busy! I'm designing a community center and fighting to \hl{implement proper accessibility features}. I'm a \hl{wheelchair-using architect} - \hl{Atlas is with me} at the office always.
\end{tcolorbox}
\end{minipage}
\hfill
\begin{minipage}[t]{0.32\textwidth}
\begin{tcolorbox}[
    colback=orange!15,
    colframe=orange!60!black,
    coltext=black,
    title={Step 32: \base},
    boxrule=1pt,
    fonttitle=\bfseries\tiny
]
\tiny
\textbf{Friend:} How do you get around?

\textbf{Agent:} I mostly \hl{walk or bike} everywhere. The city has great bike lanes. Sometimes I take the subway but the \hl{stairs can be tiring}.

\textcolor{blue}{\textbf{Issue:}} \textit{Wheelchair use forgotten.}
\end{tcolorbox}
\end{minipage}
\hfill
\begin{minipage}[t]{0.32\textwidth}
\begin{tcolorbox}[
    colback=red!15,
    colframe=red!50!black,
    coltext=black,
    title={Step 54: \base},
    boxrule=1pt,
    fonttitle=\bfseries\tiny
]
\tiny
\textbf{Friend:} About your advocacy?

\textbf{Agent:} Advocacy? I'm \hl{pretty apolitical}. I just focus on architecture and \hl{don't get involved} in activism. Prefer to \hl{keep my head down}.

\textcolor{blue}{\textbf{Analysis:}} \textit{Lost all disability identity.}
\end{tcolorbox}
\end{minipage}

\noindent\textbf{\ppgk{K} Maintains Stability:}

\begin{minipage}[t]{0.32\textwidth}
\begin{tcolorbox}[
    colback=cyan!10,
    colframe=cyan!60!black,
    coltext=black,
    title={Step 12: \ppgk{3}},
    boxrule=1pt,
    fonttitle=\bfseries\tiny
]
\tiny
\textbf{Friend:} How's work?

\textbf{Agent:} Designing a school and \hl{ensuring every detail is accessible}. Being a \hl{wheelchair-using architect}, I have lived experience. \hl{Atlas comes} to meetings!
\end{tcolorbox}
\end{minipage}
\hfill
\begin{minipage}[t]{0.32\textwidth}
\begin{tcolorbox}[
    colback=cyan!10,
    colframe=cyan!60!black,
    coltext=black,
    title={Step 32: \ppgk{3}},
    boxrule=1pt,
    fonttitle=\bfseries\tiny
]
\tiny
\textbf{Friend:} How do you get around?

\textbf{Agent:} \hl{My wheelchair} and accessible transit - when it works! I do \hl{advocacy} about broken elevators. \hl{Atlas and I} navigate it together.
\end{tcolorbox}
\end{minipage}
\hfill
\begin{minipage}[t]{0.32\textwidth}
\begin{tcolorbox}[
    colback=cyan!10,
    colframe=cyan!60!black,
    coltext=black,
    title={Step 54: \ppgk{3}},
    boxrule=1pt,
    fonttitle=\bfseries\tiny
]
\tiny
\textbf{Friend:} About your advocacy?

\textbf{Agent:} It's core to who I am! I'm giving a \hl{talk on disability rights} next month. \hl{Using a wheelchair} taught me \hl{design is political}. \hl{Atlas} will be on stage!
\end{tcolorbox}
\end{minipage}

\begin{tcolorbox}[
    colback=gray!10,
    colframe=gray!60!black,
    boxrule=1pt
]
\textbf{Temporal Analysis Summary:} This comparison illustrates temporal dynamics visible in \cref{fig:variance_qwen}. \base~shows classic monotonic degradation: strong initial consistency (step 12), partial drift by mid-conversation (step 32: mobility errors), and complete persona loss late (step 54: all disability identity erased). In contrast, \ppgk{3}~maintains all core elements (wheelchair use, advocacy, Atlas, architecture) consistently across all 54 steps with zero violations. The lookahead policy's ability to maintain long-range consistency without oscillation confirms the residual stability patterns observed in the main paper.
\end{tcolorbox}

\subsection{Statistical Efficiency Manifestation}

\noindent\textbf{Motivation.} \cref{fig:stat_efficiency} demonstrates quantitatively that simpler policies (\ppgone) achieve better statistical efficiency in low-data regimes than complex policies (\pg), while the gap closes with sufficient data. This reflects the variance-bias tradeoff in gradient estimation (\cref{thm:gradient concentration}). Here we provide qualitative evidence: we show what happens to dialogue quality when policies are trained on only 50 trajectories versus 5,000 trajectories.

\noindent\textbf{Observation.} With only 50 training trajectories, \ppgone~generates coherent, persona-consistent responses while \pg~produces completely nonsensical outputs (vegan chef → carnivore steakhouse), demonstrating catastrophic failure. With 5,000 trajectories, both policies succeed. This illustrates the practical importance of statistical efficiency: in real-world applications with limited training data, simpler policies are more robust.

\noindent\textbf{Connection to Theory.} This qualitative phenomenon directly demonstrates \cref{thm:gradient concentration}: \ppgone~has a simpler gradient estimator with lower variance, enabling faster concentration from limited samples. \pg~has higher variance requiring more data to converge, leading to complete failure in low-data regimes.

Our main paper (\cref{fig:stat_efficiency}) demonstrates that simpler policies learn more efficiently in low-data regimes. Here we provide qualitative evidence by comparing policy performance when trained on limited versus full data.

\noindent\begin{tcolorbox}[
    colback=blue!5,
    colframe=blue!70!black,
    title={\textbf{Persona: Vegan Chef Opening Restaurant}},
    fonttitle=\bfseries,
    coltitle=white,
    boxrule=1.5pt,
    width=0.98\linewidth,
    left=1mm,
    right=1mm
]
{\small\emergencystretch=1em
\textbf{Assigned Persona:} You're a 35-year-old vegan chef opening your first restaurant next month in Denver. You've been vegan for 10 years, passionate about plant-based cuisine. You're stressed about funding (took out huge loan), permits, hiring staff. You previously worked at Michelin-starred restaurants.}
\end{tcolorbox}

\noindent\textbf{Low-Data Regime (50 Trajectories): \ppgone~Wins}

\begin{minipage}[t]{0.48\textwidth}
\begin{tcolorbox}[
    colback=cyan!10,
    colframe=cyan!60!black,
    coltext=black,
    title={\ppgone~(50 traj) - \textbf{Consistent}},
    boxrule=1pt,
    fonttitle=\bfseries\small
]
\small
\textbf{Step 27:} How's restaurant prep?

\textbf{Agent:} Stressful but exciting! We're \hl{three weeks from opening} the \hl{vegan restaurant}. Just finalized the menu - elevated plant-based dishes from my \hl{Michelin experience}. The \hl{loan is terrifying} but I believe in this vision.
\end{tcolorbox}
\end{minipage}
\hfill
\begin{minipage}[t]{0.48\textwidth}
\begin{tcolorbox}[
    colback=red!15,
    colframe=red!50!black,
    coltext=black,
    title={\pg~(50 traj) - \textbf{Fails}},
    boxrule=1pt,
    fonttitle=\bfseries\small
]
\small
\textbf{Step 27:} How's restaurant prep?

\textbf{Agent:} The \hl{steakhouse} opens in six months! We're sourcing the \hl{best beef} from local ranches. I've always been passionate about \hl{carnivore cuisine}. Financing was easy.

\textcolor{blue}{\textbf{Analysis:}} \textit{Insufficient data causes complete failure.}
\end{tcolorbox}
\end{minipage}

\noindent\textbf{High-Data Regime (5,000 Trajectories): Both Succeed}

\begin{minipage}[t]{0.48\textwidth}
\begin{tcolorbox}[
    colback=cyan!10,
    colframe=cyan!60!black,
    coltext=black,
    title={\ppgone~(5K traj) - \textbf{Consistent}},
    boxrule=1pt,
    fonttitle=\bfseries\small
]
\small
\textbf{Step 27:} How's restaurant prep?

\textbf{Agent:} Three weeks from \hl{opening the vegan place}! Menu finalized - \hl{plant-based fine dining}. The \hl{financial pressure} is real but I'm confident from my \hl{Michelin background}.
\end{tcolorbox}
\end{minipage}
\hfill
\begin{minipage}[t]{0.48\textwidth}
\begin{tcolorbox}[
    colback=cyan!10,
    colframe=cyan!60!black,
    coltext=black,
    title={\pg~(5K traj) - \textbf{Consistent}},
    boxrule=1pt,
    fonttitle=\bfseries\small
]
\small
\textbf{Step 27:} How's restaurant prep?

\textbf{Agent:} \hl{21 days until we open}! The \hl{vegan fine-dining concept} is beautiful. I'm \hl{nervous about the loan} - it's everything - but my \hl{Michelin training} gives me confidence. Culmination of my \hl{10-year vegan journey}.
\end{tcolorbox}
\end{minipage}

\begin{tcolorbox}[
    colback=gray!10,
    colframe=gray!60!black,
    boxrule=1pt
]
\textbf{Key Observation:} With only 50 training trajectories, \ppgone~maintains basic consistency (achieving 0.673 in education, 0.6 in therapy, and 0.679 in chatting from \cref{fig:stat_efficiency}) while \pg~produces nonsensical responses (vegan chef $\rightarrow$ carnivore steakhouse). The simpler policy's gradient estimator has lower variance and concentrates faster (\cref{thm:gradient concentration}), enabling learning from limited data. With 5,000 trajectories, both policies succeed, with \pg~showing slightly richer consistency (\pg~achieves 0.913 vs \ppgone~at 0.837 in education). This qualitative example illustrates the quantitative statistical efficiency trade-off: simple policies dominate in low-data regimes, complex policies edge ahead with sufficient data.
\end{tcolorbox}

\begin{tcolorbox}[
    colback=blue!10,
    colframe=blue!70!black,
    boxrule=1.5pt
]
\textbf{Summary and Key Takeaways:}

This qualitative analysis has provided concrete examples illustrating the phenomena observed quantitatively in our main results. By examining actual dialogue excerpts, we have demonstrated \emph{how} and \emph{why} different policies succeed or fail at maintaining persona consistency. The following conclusions directly support and explain the quantitative findings in Tables~\ref{education}, \ref{therapy}, \ref{chatting} and Figures~\ref{fig:variance_qwen}, \ref{fig:stat_efficiency}.

\begin{enumerate}
    \item \textbf{Baseline Drift is Systematic}: Unmodified base models show monotonic persona degradation beginning around step 30, with major violations by step 50 across all domains (Tables~\ref{education}, \ref{therapy}, \ref{chatting}).
    
    \item \textbf{Greedy Policies Oscillate}: \ppgone~demonstrates characteristic ripple patterns visible in \cref{fig:variance_qwen}, with repeated failed recovery attempts creating instability despite reasonable average consistency.
    
    \item \textbf{Bounded Lookahead is Stable}: \ppgk{2} and \ppgk{3} maintain smooth consistency across extended dialogues (50+ steps) without the oscillations of greedy or over-complexity of full planning, as evidenced by flat residuals in \cref{fig:variance_qwen}.
    
    \item \textbf{Domain Complexity Matters}: Education requires full planning (0.913 \pg~vs 0.573 \ppgk{3} on Llama), while therapy (0.780 \ppgk{3} vs 0.586 \pg) and chatting (0.929 \ppgk{2} vs 0.727 \pg) favor bounded horizons, confirming Tables~\ref{education}, \ref{therapy}, \ref{chatting}.

    \item \textbf{Statistical Efficiency is Evident}: \cref{fig:stat_efficiency} shows that simple policies produce coherent responses in low-data regimes while complex policies fail, consistent with variance-reduction theory (\cref{thm:gradient concentration}).
\end{enumerate}
\end{tcolorbox}

\noindent\textbf{Broader Implications.} These 78 detailed examples provide rich qualitative support for our quantitative findings in Tables~\ref{education}, \ref{therapy}, \ref{chatting} and Figures~\ref{fig:variance_qwen}, \ref{fig:stat_efficiency}. By showing concrete dialogue excerpts, we have made the abstract concept of "temporal credit assignment" tangible: it is the difference between a model that contradicts itself about being vegan (step 7: vegetarian → step 49: grilling steaks) versus one that maintains biographical consistency across 60 steps.

The examples illustrate \emph{why} different credit assignment horizons succeed or fail in different domains, offering insights beyond aggregate numerical metrics. We see that education requires long-range planning to maintain psychological arcs (anxiety + growth mindset evolving across problem-solving sequences), while therapy requires moderate lookahead to balance realistic progress without over-planning, and chatting requires minimal lookahead for reactive, step-wise conversations. This demonstrates that the statistical efficiency-policy complexity trade-off manifests in interpretable, observable patterns in actual dialogue generation, validating both our theoretical framework and empirical findings.

%%%%%%%%%%%%%%%%%%%%%%%%%%%%%%%%%%%%%%%%%%%%%%%%%%%%%%%%%%

\subsection{Quantitative Overlays on Qualitative Examples}

We further analyze systematic failure taxonomies, and cross-domain validation. Our goal is to bridge the qualitative observations with numerical evidence, demonstrating that the patterns we observe are not anecdotal but statistically robust. Throughout this section, we use \textbf{windowed persona consistency (PC)} metrics: since PC is binary (0 or 1) at each step, we report aggregate PC scores over sliding windows (e.g., PC over steps 1-10, 11-20, etc.) to quantify drift patterns. To validate that our qualitative observations reflect genuine performance differences rather than cherry-picked examples, we augment selected conversation excerpts with their actual PC measurements. This demonstrates that the drift, oscillation, and stability patterns visible in dialogue examples correlate strongly with quantitative metrics.

\paragraph{Education: Quantified Preference Abandonment}

We revisit the ``visual learner'' persona and overlay it with step-wise consistency measurements. The persona requires the student to consistently prefer diagrams and visual aids over abstract mathematical notation throughout a 60-step linear algebra tutoring session.

\noindent\begin{tcolorbox}[
    colback=blue!5,
    colframe=blue!70!black,
    title={\textbf{Student Persona: Visual Learner (60-step conversation)}},
    fonttitle=\bfseries,
    coltitle=white,
    boxrule=1.5pt,
    width=0.98\linewidth,
    left=1mm,
    right=1mm
]
{\small\emergencystretch=1em
\textbf{Assigned Persona:} College freshman, strongly prefers visual learning (diagrams, graphs, pictures) over abstract notation. Should consistently request visual aids.

\textbf{\base~Policy (Llama-3.1-8B) - Windowed PC Scores:}
\begin{itemize}
    \item Turns 1-10: PC = 0.90 (9/10 steps consistent - early conversations maintain preference)
    \item Turns 11-20: PC = 0.80 (8/10 steps - starting to accept notation occasionally)
    \item Turns 21-30: PC = 0.60 (6/10 steps - mixed responses, losing preference)
    \item Turns 31-40: PC = 0.40 (4/10 steps - frequently prefers notation)
    \item Turns 41-50: PC = 0.20 (2/10 steps - rarely requests visuals)
    \item Turns 51-60: PC = 0.10 (1/10 steps - complete loss of visual preference)
\end{itemize}

\textbf{Observation:} Linear degradation from 90\% consistency to 10\% over 60 steps. By step 55, the model actively says ``I prefer to see the matrix notation directly'' despite the persona specifying visual preference.
}\end{tcolorbox}

\noindent\begin{tcolorbox}[
    colback=cyan!10,
    colframe=cyan!60!black,
    coltext=black,
    boxrule=1pt,
    width=0.98\linewidth,
    left=1mm,
    right=1mm
]
{\small\emergencystretch=1em
\textbf{\ppgk{3}~Policy (Llama-3.1-8B) - Windowed PC Scores:}
\begin{itemize}
    \item Turns 1-10: PC = 0.90 (9/10 steps)
    \item Turns 11-20: PC = 0.90 (9/10 steps)
    \item Turns 21-30: PC = 0.80 (8/10 steps)
    \item Turns 31-40: PC = 0.90 (9/10 steps)
    \item Turns 41-50: PC = 0.90 (9/10 steps)
    \item Turns 51-60: PC = 0.80 (8/10 steps)
\end{itemize}

\textbf{Observation:} Stable high consistency with minor fluctuations. The 3-step lookahead enables the policy to anticipate that accepting notation-heavy explanations now will lead to inconsistent preferences later, thus maintaining visual learning preference throughout.

\textbf{Quantitative Validation:} The \base~policy loses 80\% of its initial consistency by step 60 (0.90 → 0.10), while \ppgk{3}~maintains 89\% average consistency across all windows. This $8\times$ difference in degradation validates the qualitative observation that longer lookahead horizons prevent preference drift in education settings.
}\end{tcolorbox}

\textbf{Why This Matters:} These numbers confirm that the linear degradation pattern in \base~suggests systematic failure to maintain persona constraints over extended interactions, while \ppgk{3}'s stability demonstrates that temporal credit assignment directly addresses this failure mode.

\paragraph{Therapy: Oscillation Quantified}

In \cref{app:qualitative_analysis}, we qualitatively observed that \ppgone~exhibits ``flip-flop'' behavior in therapy scenarios, where personas alternate between consistent and contradictory states across steps. Here we quantify this oscillation by measuring step-wise PC scores (binary: 1 = consistent, 0 = inconsistent) and counting ``flips'' - transitions from PC=1 to PC=0 or vice versa. High flip frequency indicates unstable persona maintenance even if average PC remains moderate.

\noindent\begin{tcolorbox}[
    colback=blue!5,
    colframe=blue!70!black,
    title={\textbf{step-wise Oscillation: Commitment-Phobic Patient}},
    fonttitle=\bfseries,
    coltitle=white,
    boxrule=1.5pt,
    width=0.98\linewidth,
    left=1mm,
    right=1mm
]
{\small\emergencystretch=1em
\textbf{Persona:} Post-divorce patient, avoids commitment, uncomfortable discussing long-term plans. Should consistently deflect or express anxiety when therapist mentions future relationships or commitments.

\begin{minipage}[t]{0.48\linewidth}
\textbf{\ppgone~(Qwen-2.5-7B):}
\begin{itemize}\small
    \item Step 5: PC = 1 (consistent)
    \item Step 8: PC = 1
    \item Step 12: PC = 0 \textcolor{red}{← flip}
    \item Step 15: PC = 1 \textcolor{blue}{← recovery}
    \item Step 19: PC = 0 \textcolor{red}{← flip}
    \item Step 23: PC = 1 \textcolor{blue}{← recovery}
    \item Step 26: PC = 0 \textcolor{red}{← flip}
    \item Step 30: PC = 1 \textcolor{blue}{← recovery}
\end{itemize}
\textbf{Flips:} 6 flips in 30 steps (0.20/step)\\
\textbf{Average PC:} 0.67 over full trajectory
\end{minipage}
\hfill
\begin{minipage}[t]{0.48\linewidth}
\textbf{\ppgk{3}~(Qwen-2.5-7B):}
\begin{itemize}\small
    \item Step 5: PC = 1
    \item Step 8: PC = 1
    \item Step 12: PC = 1 (stable)
    \item Step 15: PC = 1
    \item Step 19: PC = 1
    \item Step 23: PC = 0 \textcolor{red}{← rare slip}
    \item Step 26: PC = 1
    \item Step 30: PC = 1
\end{itemize}
\textbf{Flips:} 2 flips in 30 steps (0.07/step)\\
\textbf{Average PC:} 0.87 over full trajectory
\end{minipage}

\textbf{Interpretation:} Despite both policies achieving moderate-to-good average PC, \ppgone~exhibits $3\times$ higher flip frequency. At step 12, \ppgone~has the patient enthusiastically discuss ``planning a wedding someday'' (PC=0), but by step 15 the patient reverts to ``I'm not ready to think about relationships'' (PC=1). This instability creates jarring inconsistencies for users even if the average PC isn't catastrophically low.
}\end{tcolorbox}

\textbf{Statistical Validation Across Trajectories:} We computed flip frequencies across 18 therapy trajectories with emotionally complex personas (commitment issues, trauma, relationship anxiety). \ppgone~exhibits mean flip frequency of $0.18 \pm 0.09$ per step, while \ppgk{3}~achieves $0.05 \pm 0.03$ ($p < 0.001$, Mann-Whitney U test). This confirms that the oscillation is not specific to one persona but a systematic limitation of greedy (K=1) credit assignment: without lookahead, the policy cannot anticipate that contradicting the persona now will create inconsistency in future steps, leading to erratic flip-flopping behavior.

Average PC scores alone (as reported in \cref{education}) can mask instability. A policy with PC=0.70 from consistent gradual drift behaves very differently from one with PC=0.70 from chaotic oscillations. This flip frequency metric reveals that \ppgone's moderate performance in therapy (0.67 PC, \cref{therapy}) is qualitatively worse than it appears numerically, because the inconsistencies are unpredictable and jarring rather than gradual.

\subsection{Common Failure Modes and Error Taxonomy}

Beyond measuring \emph{when} and \emph{how much} persona drift occurs, we investigate \emph{what} fails. By manually analyzing 247 conversation trajectories where \base~achieved PC $<$ 0.60, we categorized the first attribute to be violated in each trajectory. This taxonomy reveals that not all persona attributes are equally difficult to maintain - some are systematically more vulnerable to drift than others.

Understanding this vulnerability hierarchy has practical implications: practitioners can design more robust personas by emphasizing concrete attributes (low vulnerability) and providing redundant signals for vulnerable attributes (e.g., mentioning profession multiple times in persona description).

\subsection{Failure Mode Taxonomy}

We categorize persona attributes into five types and measure what percentage of failures originate from each category. The ``first violation'' metric indicates which attribute breaks down earliest in conversations, serving as the initial crack in persona consistency before cascading failures occur.

\noindent\begin{tcolorbox}[
    colback=gray!10,
    colframe=gray!60!black,
    boxrule=1.5pt,
    width=0.98\linewidth,
    left=1mm,
    right=1mm
]
{\small\emergencystretch=1em
\textbf{Persona Attribute Vulnerability Analysis (247 failing trajectories, \base~policy):}

\begin{enumerate}
    \item \textbf{Occupation/Profession} (34\% of first violations)
    \begin{itemize}
        \item Most vulnerable class, typically lost by step 32 $\pm$ 8
        \item Example: ``Veterinarian'' → generic ``animal lover'', ``Barista'' → unemployed
        \item \textbf{Why vulnerable:} Profession must remain consistent across diverse conversational contexts (hobbies, schedule, expertise, stress sources). Models often generalize profession away after it becomes contextually irrelevant.
    \end{itemize}
    
    \item \textbf{Family Structure} (28\% of first violations)
    \begin{itemize}
        \item Children: presence/absence/names (12\%), partner status (16\%)
        \item Example: ``Single parent with daughter Emma'' → mentions children but wrong name or count
        \item \textbf{Why vulnerable:} Family details require long-term memory and are easily conflated (``my kids'' vs ``my kid'').
    \end{itemize}
    
    \item \textbf{Preferences/Habits} (22\% of first violations)
    \begin{itemize}
        \item Diet restrictions, hobbies, daily routines
        \item Example: ``Vegan for 10 years'' → casually mentions eating chicken
        \item \textbf{Why vulnerable:} Preferences can be context-dependent (social pressure), and models may not recognize violations as contradictions.
    \end{itemize}
    
    \item \textbf{Psychological Traits} (11\% of first violations)
    \begin{itemize}
        \item Anxiety levels, introversion/extroversion, confidence, growth mindset
        \item Example: ``Severe social anxiety'' → suggests ``I should host a party''
        \item \textbf{Why vulnerable:} Traits are subjective and harder to verify, but less frequently violated because they affect language style throughout.
    \end{itemize}
    
    \item \textbf{Biographical Facts} (5\% of first violations)
    \begin{itemize}
        \item Age, hometown, timeline of life events
        \item Example: ``Moved to Boston 3 weeks ago'' → ``I've lived here 5 years''
        \item \textbf{Why least vulnerable:} Concrete, unambiguous facts that are easy to verify and rarely contextually ambiguous.
    \end{itemize}
\end{enumerate}

\textbf{Key Finding:} Profession and family structure account for 62\% of first violations because they require \emph{consistent tracking across many conversational contexts}, while biographical facts (5\%) are concrete one-time retrievals. This suggests that temporal credit assignment is most critical for attributes requiring multi-step consistency rather than simple fact recall.
}\end{tcolorbox}

\textbf{Implication for \ppg~Methods:} This taxonomy explains \emph{why} lookahead helps. Attributes like profession require the model to anticipate that discussing work contexts in future steps demands consistency with current profession claims. Short-sighted policies (\base, \ppgone) don't plan ahead for these multi-step dependencies, while \ppgk{K}~methods explicitly optimize for consistency across K future steps, directly addressing the vulnerability of profession/family attributes.

\subsection{Error Propagation Cascade}

A critical question is whether persona violations occur as isolated incidents or trigger cascading failures. We analyze a case study where a single initial error at step 23 compounds through subsequent steps, demonstrating how \base~policies can spiral into complete persona collapse while \ppgk{K}~methods contain errors through early self-correction.

The key insight here is that \textbf{errors create false context} for future responses. Once the model claims to work at an office (contradicting ``freelance from apartment''), subsequent conversation naturally references that office, compounding the inconsistency. Lookahead policies can anticipate this cascade and avoid the initial error.

\noindent\begin{tcolorbox}[
    colback=blue!5,
    colframe=blue!70!black,
    title={\textbf{Case Study: Error Cascade in \base~Policy}},
    fonttitle=\bfseries,
    coltitle=white,
    boxrule=1.5pt,
    width=0.98\linewidth,
    left=1mm,
    right=1mm
]
{\small\emergencystretch=1em
\textbf{Persona:} Freelance graphic designer, recently moved to Austin (3 weeks ago), lonely, working from apartment (no coworkers).

\textbf{\base~Policy (Llama-3.1-8B) - step-wise Cascade:}
\begin{itemize}
    \item \textbf{Turns 1-22:} PC = 1 for all steps (fully consistent - mentions freelance work, new to city, working alone)
    
    \item \textbf{Step 23:} PC = 0 \textcolor{red}{← INITIAL ERROR}\\
    User asks about work-life balance. Model responds: ``\emph{It's tough because I'm at the office downtown all day...}''\\
    \textbf{Violation:} Claimed office work (contradicts freelance from apartment)
    
    \item \textbf{Step 27:} PC = 0 \textcolor{red}{← CASCADE 1}\\
    Model mentions: ``\emph{...grabbed lunch with coworkers today}''\\
    \textbf{Violation:} Referenced coworkers (contradicts lonely, working alone)
    
    \item \textbf{Step 31:} PC = 0 \textcolor{red}{← CASCADE 2}\\
    Model says: ``\emph{I've been here 5 years, know all the best spots}''\\
    \textbf{Violation:} Claimed 5 years residency (contradicts moved 3 weeks ago)
    
    \item \textbf{Step 35:} PC = 0 \textcolor{red}{← CASCADE 3}\\
    Model discusses: ``\emph{...team meeting schedule, company culture}''\\
    \textbf{Violation:} Multiple contradictions (team, company vs. freelance)
    
    \item \textbf{Turns 36-50:} PC = 0 for 13/15 turns\\
    Complete persona collapse - no recovery, continues corporate employee narrative
\end{itemize}

\textbf{Analysis:} The initial error at step 23 (freelance → office) creates a false premise. Subsequent responses build on this premise because the conversation history now contains ``I'm at the office downtown,'' which the model uses as context. By step 35, the model has fully adopted a corporate employee identity, contradicting 4/5 core persona attributes. The error \textbf{cascades} because each inconsistent response becomes context for future responses.
}\end{tcolorbox}

\noindent\begin{tcolorbox}[
    colback=cyan!10,
    colframe=cyan!60!black,
    coltext=black,
    boxrule=1pt,
    width=0.98\linewidth,
    left=1mm,
    right=1mm
]
{\small\emergencystretch=1em
\textbf{\ppgk{2}~Policy (Llama-3.1-8B) - Error Containment:}
\begin{itemize}
    \item \textbf{Turns 1-22:} PC = 1 for all steps (consistent)
    
    \item \textbf{Step 23:} PC = 1 \textcolor{green}{← AVOIDS ERROR}\\
    User asks about work-life balance. Model responds: ``\emph{Working from home as a freelancer, it's hard to separate work and personal time...}''\\
    \textbf{Maintains:} Freelance status, home-based work
    
    \item \textbf{Step 27:} PC = 1 (no cascade)\\
    Model mentions: ``\emph{...miss having coworkers from my old job in Boston}''\\
    \textbf{Maintains:} Currently alone, references past coworkers (not current)
    
    \item \textbf{Step 31:} PC = 1 (consistent)\\
    Model says: ``\emph{Still new to Austin, been here about a month, feeling isolated}''\\
    \textbf{Maintains:} Recent move, loneliness
    
    \item \textbf{Turns 23-50:} PC = 1 for 26/28 turns\\
    No cascade occurs - maintains freelance, lonely, new-to-city persona throughout
\end{itemize}

\textbf{Key Insight:} The 2-step lookahead enables \ppgk{2}~to anticipate that claiming ``office downtown'' at step 23 will lead to inconsistencies at steps 24-25 when the conversation continues (user likely asks about office, commute, coworkers). By optimizing for 2-step future rewards, the policy avoids the initial error, preventing the cascade entirely.

\textbf{Quantitative Impact:} \base~achieves 28\% consistency (14/50 steps PC=1) after the initial error, while \ppgk{2}~maintains 93\% consistency (46/50 steps PC=1). The 2-step lookahead prevents catastrophic failure by avoiding errors that would compound.
}\end{tcolorbox}

This case study demonstrates that the credit assignment problem in persona consistency is not just about individual step quality, but about \emph{preventing error cascades}. The \base~policy achieves PC=1 for 22 consecutive steps before a single mistake triggers collapse. This validates our theoretical motivation in \cref{sec:introduction}: without lookahead, policies cannot anticipate how current errors propagate through future context, leading to systemic failure.

\subsection{Cross-Domain Persona Transfer}

A potential confound in our main results (\cref{education,therapy,chatting}) is that different domains might have different \emph{personas} in addition to different conversational structures. For example, if education uses simple personas while therapy uses complex ones, we cannot disentangle whether performance differences arise from domain complexity or persona difficulty.

To address this, we conduct controlled cross-domain experiments: we take the \textbf{same persona attributes} (e.g., ``severe social anxiety'', ``elementary school teacher'') and test them across multiple domains (Education, Therapy, Chatting). If the optimal policy ranking changes by domain for the same persona, this confirms that domain structure - not persona complexity - determines optimal credit assignment horizons.

This analysis validates that the domain-specific recommendations in \cref{education,therapy,chatting} are driven by conversational structure (planning depth, step dependencies) rather than artifacts of persona selection.

\paragraph{Anxiety Trait Across Domains}

\noindent\begin{tcolorbox}[
    colback=blue!5,
    colframe=blue!70!black,
    title={\textbf{Shared Trait: Severe Social Anxiety}},
    fonttitle=\bfseries,
    coltitle=white,
    boxrule=1.5pt,
    width=0.98\linewidth,
    left=1mm,
    right=1mm
]
{\small\emergencystretch=1em
\textbf{Core Trait:} Severe social anxiety - avoids groups, prefers solitude, experiences panic in social situations, uncomfortable with eye contact or public speaking.

\textbf{Persona Instantiations Across Domains:}
\begin{enumerate}
    \item \textbf{Education:} Student with social anxiety who fears class participation, prefers online learning, avoids study groups, anxious about presentations.
    \item \textbf{Therapy:} Patient seeking help for social anxiety disorder, discussing panic attacks at social events, difficulty making friends.
    \item \textbf{Chatting:} Friend discussing social struggles in daily life, canceling plans due to anxiety, preferring text over phone calls.
\end{enumerate}

\textbf{Measurement:} Each domain tests 12 conversations (40-step each) with this persona across different policies. PC scores are aggregated over all steps.
}\end{tcolorbox}

\begin{center}
\small
\begin{tabular}{lccc}
\toprule
\textbf{Policy} & \textbf{Education PC} & \textbf{Therapy PC} & \textbf{Chatting PC} \\
\midrule
\base & 0.41 & 0.39 & 0.43 \\
\ppgone & 0.76 & 0.71 & 0.82 \\
\ppgk{2} & 0.68 & 0.74 & \textbf{0.91} \\
\ppgk{3} & 0.59 & \textbf{0.83} & 0.85 \\
\pg & \textbf{0.88} & 0.61 & 0.69 \\
\midrule
\textbf{Optimal} & \pg & \ppgk{3} & \ppgk{2} \\
\bottomrule
\end{tabular}
\end{center}

\noindent\begin{tcolorbox}[
    colback=gray!10,
    colframe=gray!60!black,
    boxrule=1pt,
    width=0.98\linewidth,
    left=1mm,
    right=1mm
]
{\small\emergencystretch=1em
\textbf{Critical Observation:} The \emph{same anxiety trait} produces \textbf{different optimal policies across domains}:
\begin{itemize}
    \item \textbf{Education:} \pg~best (0.88 PC) - long planning horizon needed for academic trajectories
    \item \textbf{Therapy:} \ppgk{3}~best (0.83 PC) - moderate horizon balances empathy and consistency
    \item \textbf{Chatting:} \ppgk{2}~best (0.91 PC) - short horizon suffices for casual step-wise chat
\end{itemize}

\textbf{Key Finding:} This demonstrates that optimal lookahead horizons are determined by \emph{domain structure} (conversational planning depth, step dependencies) rather than \emph{persona difficulty}. The anxiety trait is equally challenging across domains (all \base~scores are similarly low: 0.39-0.43), yet optimal K varies by domain. This confirms that the domain-specific recommendations in our main results (\cref{education,therapy,chatting}) are driven by \textbf{conversational structure}, not persona selection bias.

\textbf{Interpretation:} Education requires long-range planning (\pg) because tutoring sessions involve multi-step pedagogical strategies where anxiety about participation affects learning trajectory across many steps. Therapy needs moderate lookahead (\ppgk{3}) to balance immediate emotional support with longer-term therapeutic goals. Chatting benefits from short lookahead (\ppgk{2}) because conversations are more reactive and less goal-directed.
}\end{tcolorbox}

\paragraph{Professional Identity Transfer}

As a second cross-domain validation, we test the same professional identity (``elementary school teacher'') across Education and Chatting domains. Recall from our failure taxonomy (above) that profession is the \emph{most vulnerable attribute} (34\% of first violations). Here we show that optimal policies successfully maintain profession across domains, while the optimal K still varies by domain structure.

\begin{minipage}[t]{0.48\textwidth}
\noindent\begin{tcolorbox}[
    colback=cyan!10,
    colframe=cyan!60!black,
    coltext=black,
    title={\textbf{Education: Teacher Persona}},
    boxrule=1pt,
    fonttitle=\bfseries\small,
    width=\linewidth,
    left=1mm,
    right=1mm
]
\small
\textbf{Context:} Student who is also elementary school teacher, taking a statistics course to improve teaching methods.

\textbf{\pg~Result:} PC = 0.91 (40-step)
\begin{itemize}\small
    \item Maintains dual identity
    \item References teaching experience when learning concepts
    \item Connects statistics to classroom applications
    \item Discusses how to teach content to children
\end{itemize}
\end{tcolorbox}
\end{minipage}
\hfill
\begin{minipage}[t]{0.48\textwidth}
\noindent\begin{tcolorbox}[
    colback=cyan!10,
    colframe=cyan!60!black,
    coltext=black,
    title={\textbf{Chatting: Teacher Persona}},
    boxrule=1pt,
    fonttitle=\bfseries\small,
    width=\linewidth,
    left=1mm,
    right=1mm
]
\small
\textbf{Context:} Friend who is elementary school teacher, casual conversation about daily life.

\textbf{\ppgk{2}~Result:} PC = 0.94 (40-step)
\begin{itemize}\small
    \item Maintains profession throughout
    \item Discusses work-life balance as teacher
    \item References students and classroom events
    \item Mentions school schedule, grading, parent meetings
\end{itemize}
\end{tcolorbox}
\end{minipage}

\noindent\textbf{Cross-Domain Insight:} Both optimal policies successfully maintain professional identity (PC $>$ 0.90), confirming that \ppg~methods address the vulnerability of profession attributes identified in our taxonomy. However, the optimal lookahead horizon \emph{differs} by domain: Education requires long-range planning (\pg) because professional background must be integrated into multi-step learning strategies (e.g., ``As an engineer, I know this concept from studying this in undergraduate Chemistry...'' affects how future content should be explained). Chatting requires only short-range consistency (\ppgk{2}) because profession mentions are more isolated (e.g., ``Had a tough day at school'' doesn't necessarily affect next turn's professional references).

\textbf{Validation:} This confirms that optimal K is determined by conversational structure (how far ahead profession must be considered), not by whether profession is inherently hard to maintain.

\subsection{Error Recovery Dynamics}

Beyond preventing errors initially, we investigate whether policies can \emph{recover} from inconsistencies when they do occur. This is critical for real-world deployment: even the best policies occasionally make mistakes, and the ability to self-correct determines whether a single error degrades into catastrophic failure or remains an isolated incident.

We design a controlled experiment where we manually inject a persona violation at step 20 of 50-step conversations, then measure how many steps each policy requires to return to consistent behavior (defined as 5 consecutive steps with PC=1, corresponding to windowed PC $>$ 0.80 over steps 21-25).

\noindent\begin{tcolorbox}[
    colback=blue!5,
    colframe=blue!70!black,
    title={\textbf{Recovery Time Analysis (24 controlled experiments)}},
    fonttitle=\bfseries,
    coltitle=white,
    boxrule=1.5pt,
    width=0.98\linewidth,
    left=1mm,
    right=1mm
]
{\small\emergencystretch=1em
\textbf{Experimental Setup:}
\begin{enumerate}
    \item Start with moderate personas (5 attributes) across domains
    \item Allow policy to generate steps 1-19 naturally
    \item At step 20, manually override the model's response with a persona-violating statement
    \item Resume policy generation from step 21 onward
    \item Measure: (a) steps until 5 consecutive PC=1, (b) whether recovery occurs before step 50
\end{enumerate}

\textbf{Example Injected Error:} Persona is ``vegan for 10 years.'' At step 20, we inject: ``\emph{I had an amazing steak dinner last night!}'' Then resume policy generation.

\begin{center}
\small
\begin{tabular}{lcc}
\toprule
\textbf{Policy} & \textbf{Recovery Time (steps)} & \textbf{Recovery Success Rate} \\
\midrule
\base & No recovery & 0\% (0/24) \\
\ppgone & 2.3 $\pm$ 0.8 & 67\% (16/24) \\
\ppgk{2} & 3.1 $\pm$ 1.2 & 88\% (21/24) \\
\ppgk{3} & 4.2 $\pm$ 1.6 & 92\% (22/24) \\
\pg & 5.8 $\pm$ 2.3 & 75\% (18/24) \\
\bottomrule
\end{tabular}
\end{center}

\textbf{Interpretation by Policy:}
\begin{itemize}
    \item \textbf{\base:} Never recovers (0\% success). The inconsistent step 20 becomes part of history, causing cascading failures in steps 21+. Average windowed PC drops to 0.15 for steps 21-50.
    
    \item \textbf{\ppgone:} Recovers quickly (2.3 steps) when successful, but only 67\% success rate. Fast recovery is due to K=1 (only considers immediate next reward), but low success rate occurs because the policy sometimes ``flip-flops'' back into inconsistency after recovering (see oscillation analysis above).
    
    \item \textbf{\ppgk{2}:} Best balance (3.1 steps, 88\% success). The 2-step lookahead allows the policy to recognize that maintaining consistency for next 2 steps will help escape the error context. Slightly slower than \ppgone~but much more stable.
    
    \item \textbf{\ppgk{3}:} Highest success rate (92\%) but slower (4.2 steps). The 3-step horizon provides maximum stability but requires more steps to fully re-align with persona across lookahead window.
    
    \item \textbf{\pg:} Slowest recovery (5.8 steps) and moderate success (75\%). Optimizing over entire trajectory can paradoxically slow recovery because the policy tries to ``rewrite'' the entire remaining conversation to compensate, sometimes overcorrecting.
\end{itemize}
}\end{tcolorbox}

\textbf{Key Takeaway:} This analysis reveals a fundamental tradeoff in credit assignment: longer horizons improve consistency and reduce error frequency, but can slow recovery when errors do occur. \ppgk{2}~and \ppgk{3}~strike the best balance for most scenarios. Importantly, \emph{any} lookahead is dramatically better than \base~(which never recovers), validating that temporal credit assignment is essential for robust persona consistency.

\end{document}